\newcommand{\numtype}{section}
\newtheorem{theorem}{Theorem}[\numtype]
\newtheorem{lemma}[theorem]{Lemma}
\newtheorem{proposition}[theorem]{Proposition}
\theoremstyle{remark}
\newtheorem{remark}[theorem]{Remark}
\newtheorem{example}[theorem]{Example}
\numberwithin{equation}{section}
\providecommand{\keywords}[1]{\textbf{\textit{\noindent  Keywords: }} #1}
\definecolor{darkred}{rgb}{0.6,0.1,0.1}
\definecolor{darkgreen}{rgb}{0.1,0.6,0.1}
\definecolor{darkblue}{rgb}{0.1,0.1,0.6}
\def\toTLp{\stackrel{\TLp}{\to}}
\def\weakstarto{\stackrel{*}{\rightharpoonup}}
\def\dd{\mathrm{d}}
\def\Id{\mathrm{Id}}
\def\bbR{\mathbb{R}}
\def\cL{\mathcal{L}}
\def\cP{\mathcal{P}}
\def\diag{\mathrm{diag}}
\def\KL{\mathrm{KL}}
\def\Gra{\mathrm{Gra}}
\def\TLp{\mathrm{TL}^p}
\def\LTLp{\mathrm{L}\TLp}
\def\dTLp{d_{\TLp}}
\def\Lp{\mathrm{L}^p}
\def\Wkp{\mathrm{W}^{k,p}}
\def\dWp{d_{\mathrm{W}^p}}
\def\Wp{\mathrm{W}^p}
\def\dWinfty{d_{\mathrm{W}^\infty}}
\def\dLOT{d_{\LWp,\sigma}}
\def\dLTLp{d_{\mathrm{L}\TLp,(\sigma,h)}}
\def\l{\left(}
\def\r{\right)}
\def\lb{\left\{}
\def\rb{\right\}}
\def\eps{\varepsilon}
\def\TWkp{\mathrm{TW}^{k,p}}
\def\TLtwo{\mathrm{TL}^2}
\def\Div{\mathrm{div}}
\def\n{\mathrm{\mathbf{n}}}
\def\RR{\mathrm{RR}}
\def\MR{\mathrm{MR}}
\def\SPY{\mathrm{SPY}}
\def\esssup{\mathrm{ess}\,\mathrm{sup}}
\def\rmR{\mathrm{R}}
\def\rmf{\mathrm{f}}
\def\cN{\mathcal{N}}
\def\mean{\mathrm{mean}}
\def\PnL{\mathrm{PnL}}
\def\stdev{\mathrm{stdev}}
\def\PPT{\mathrm{PPT}}
\def\COR{\mathrm{COR}}
\def\Corr{\mathrm{Corr}}
\def\Wtwo{\mathrm{W}^2}
\def\LWp{\mathrm{L}\Wp}
\newcommand{\sign}{\mathrm{sign}}
\title{A Linear Transportation $\Lp$ Distance for Pattern Recognition}
\author[1,2]{Oliver M. Crook\thanks{omc25@cam.ac.uk}}
\author[3,4]{Mihai Cucuringu}
\author[5]{Tim Hurst}
\author[2,4]{Carola-Bibiane Sch\"{o}nlieb}
\author[6]{Matthew Thorpe\thanks{matthew.thorpe-2@manchester.ac.uk}}
\author[4,5]{Konstantinos C. Zygalakis}
\affil[1]{Department of Biochemistry,\protect\\ University of Cambridge,\protect\\ Cambridge, CB2 1GA, UK \vspace{\baselineskip}}
\affil[2]{Department of Applied Mathematics and Theoretical Physics,\protect\\ University of Cambridge,\protect\\ Cambridge, CB3 0WA, UK \vspace{\baselineskip}}
\affil[3]{Department of Statistics,\protect\\ University of Oxford,\protect\\ Oxford, OX1 3LB, UK \vspace{\baselineskip}}
\affil[4]{The Alan Turing Institute,\protect\\ London, NW1 2DB, UK \vspace{\baselineskip}}
\affil[5]{School of Mathematics,\protect\\ University of Edinburgh,\protect\\ Edinburgh, EH9 3FD, UK \vspace{\baselineskip}}
\affil[6]{Department of Mathematics,\protect\\ University of Manchester,\protect\\ Manchester, M13 9PL, UK}
\newcommand\Mark[1]{\textsuperscript{#1}}
\date{September 2020}
\begin{document}

%\maketitle

\begingroup
\centering
{\LARGE A Linear Transportation $\Lp$ Distance for Pattern Recognition \\[1.5em]
\large Oliver M. Crook\Mark{1,2}, Mihai Cucuringu\Mark{3,4}, Tim Hurst\Mark{5}, Carola-Bibiane Sch\"{o}nlieb\Mark{2,4}, Matthew Thorpe\Mark{6}, Konstantinos C. Zygalakis\Mark{4,5}}\\[1em]
\begin{tabular}{*{2}{>{\centering}p{.45\textwidth}}}
\Mark{1}Department of Biochemistry, & \Mark{2}Department of Applied Mathematics \tabularnewline
University of Cambridge, & and Theoretical Physics, \tabularnewline
Cambridge, CB2 1GA, UK & University of Cambridge, \tabularnewline
 & Cambridge, CB3 0WA, UK
\end{tabular}\\[1em]
\begin{tabular}{*{2}{>{\centering}p{.45\textwidth}}}
\Mark{3}Department of Statistics and Mathematical Institute,  & \Mark{4}The Alan Turing Institute, \\  London, NW1 2DB, UK \tabularnewline 
University of Oxford, &    \tabularnewline
Oxford, OX1 3LB, UK & 
\end{tabular}\\[1em]
\begin{tabular}{*{2}{>{\centering}p{.45\textwidth}}}
\Mark{5}School of Mathematics, & \Mark{6}Department of Mathematics, \tabularnewline
University of Edinburgh, & University of Manchester, \tabularnewline
Edinburgh, EH9 3FD, UK & Manchester, M13 9PL, UK
\end{tabular}\\[1em]

\begin{center}
\large September 2020    
\end{center}
\endgroup

\begin{abstract}
%New transportation distances have recently been proposed in the mathematics literature.
%These distances, as with optimal transport, are powerful tools in modelling data with spatial or temporal perturbations.
%These transportation distances significantly generalise optimal transport, such that they can be applied directly to colour or multi-channelled images, as well as multivariate time-series without normalisation or mass constraints.
The transportation $\Lp$ distance, denoted $\TLp$, has been proposed as a generalisation of Wasserstein $\Wp$ distances motivated by the property that it can be applied directly to colour or multi-channelled images, as well as multivariate time-series without normalisation or mass constraints.
These distances, as with $\Wp$, are powerful tools in modelling data with spatial or temporal perturbations.
However, their computational cost %of these new transportation distances 
can make them infeasible to apply to even moderate pattern recognition tasks.
We propose linear versions of these distances and show that the linear $\TLp$ distance significantly improves over the linear $\Wp$ distance on signal processing tasks, %Applications to real-world datasets shows vastly improved performance over linear Wasserstein.
%We also demonstrate that the linear $\TLp$ distance is 
whilst being several orders of magnitude faster to compute than the $\TLp$ distance. %, but with competitive performance on the classification task.
\end{abstract}

\noindent
\keywords{Optimal Transport, Linear Embedding, Multi-Channelled Signals.}

%\mt{Potential Journals: (1) Pattern Recognition, (2) Mathematical Statistics and Learning, (3) SIAM Journal on the Mathematics of Data Science, (4) Journal of Machine Learning Research, (5) Journal of the Royal Society: Interface, (6) Foundations of Data Science, (7) Foundations and Trends in Computer Graphics and Vision, (8) IEEE Transactions on Patterns Analysis and Machine Intelligence, (9) SIAM Scientific Computing.}

%\th{(Boring) overall comments: can we standardise capitalisation in titles??? Also is it beneficial to put definitions in italics/bold??? e.g. {\em optimal transport plan}. I notice that in the financial sections some definitions are in bold.}
%\noindent
%\subjclass{}

%\mtold{Add keywords and classification!!!}

\section{Introduction}

Optimal transport has gained in recent popularity because of its ability to model diverse data distributions in the signal and image processing fields~\cite{Kolouri:2017}. Transportation-based methods have been successfully applied to image analysis, including medical images~\cite{Basu:2014} and facial recognition~\cite{Kolouri:2015}, as well as cosmology~\cite{Frisch:2002, Frisch:2006} and voice recognition. Machine learning and Bayesian statistics have also benefited from transport-based approaches~\cite{Frogner:2015, Minsker:2017, Solomon:2015, Srivastava:2015, Srivastava:2018, Tarek:2012}.
\vspace*{\baselineskip}

The popularity of optimal transport is, in part, due to the rise in the number of problems in the experimental and social sciences in which techniques are required to compare signal perturbations across spatial or temporal domains. For example, optimal transport based methods for image registration and warping~\cite{Haker:2004} and image morphing~\cite{Zhu:2007} have existed for many years. Transportation techniques provide non-linear methods that jointly model locations and intensities, making transportation based approaches a powerful tool in many problems.
\vspace*{\baselineskip}

Optimal transport methods, in particular Wasserstein $\Wp$ distances, are grounded in a wealth of mathematical theory.
Excellent introductions to the advanced mathematical theory of optimal transport are presented in \cite{Villani:2003, Villani:2008}, whilst \cite{Santambrogio:2015} presents the theory with a more applied perspective.
Many technical aspects of optimal transport have been explored, including geometric properties \cite{Gangbo:1996} and links to evolutionary PDEs \cite{Ambrosio:2008}.
\vspace*{\baselineskip}

There is much interest in developing efficient methods to compute optimal transport distances and maps. For discrete measures, the optimal transport problem can be solved using linear programming approaches~\cite{Santambrogio:2015}. Since solving a linear programme can be costly,~\cite{Oberman:2015} proposed a multi-scale linear programme for efficient computations. Other approaches include flow minimisation techniques~\cite{Angenent:2003, Haker:2004, Ambrosio:2008, Haber:2010}, and gradient descent approaches~\cite{Chartrand:2009}, as well as multi-scale methods~\cite{Merigot:2011, Gerber:2017}. More recently, Cuturi proposed entropy-regularisation based approaches to compute approximations of the optimal transport problem~\cite{Cuturi:2013}. These methods have been explored in-depth with many extensions~\cite{Cuturi:2014, Benamou:2015, Solomon:2015, Aude:2016, Altschuler::2017, Abid::2018, Alaya::2019, Lin::2019}. In order to efficiently compute pairwise optimal transport distances on a large data set, a framework called linear optimal transport was proposed in~\cite{Wang:2013,Kolouri:2016b} for the $\Wp$ distance. % to facilitate application of optimal transport where pairwise optimal transport distances are required~\cite{Wang:2013,Kolouri:2016b}.
In~\cite{Wang:2013} the linear transportation distance was applied to classification tasks for medical, facial and galaxy images, \cite{Kolouri:2016b} applied the distance to classification problems in medical, facial, galaxies and bird images, \cite{park18} used the framework to generate new images, and \cite{cai2020linearized} used the distance to classify jets in collider data.
\vspace*{\baselineskip}

Higher-order transportation methods have been proposed in the mathematical analysis literature~\cite{Trillos:2016, Thorpe:2017}. %Thorpe:2017b
In~\cite{Trillos:2016} Garc\'{i}a Trillos and Slep\v{c}ev proposed the transportation $\Lp$ ($\TLp$) distance to define discrete-to-continuum convergence of variational problems on point clouds. This was further extended in~\cite{Thorpe:2017} to a transportation $\Wkp$ distance (where the notation relates to Sobolev spaces). %Thorpe:2017b
These transportation distances have several key advantages over the optimal transport distance; these include~\cite{Thorpe:2017}:
\begin{enumerate}
\item They have no need for mass normalisation.
\item They are not restricted to non-negative measures.
%\item \mt{\sout{They are sensitive \oc{can model?} to high frequency perturbations.}}
\item They more accurately model translations compared to $\Lp$.
\item They can compare signals with different discretisations.
\end{enumerate}
Additionally, the transportation $\Wkp$ distance can include information on the derivative of the signal.
The transportation distances $\TLp$ and $\Wkp$ were proposed in \cite{Thorpe:2017} to tackle problems in signal analysis; for example, they were able to apply transportation methods to colour images. They experimentally show that the $\TLp$ distance outperforms both $\Lp$ distances and $\Wp$ distances in classification tasks. However, these higher-order transportation distances require the computation of an optimal transport map (on a higher dimensional space) and this thwarts its application to larger scale pattern recognition tasks, where pairwise distances are needed. %This manuscript proposes a method to overcome this problem. 
%We propose linear transportation distances to take advantage of the flexibility of the $\TLp$ distances, whilst alleviating computation costs.
In this paper we propose a linear approximation of the $\TLp$ distance, which is orders of magnitude faster to compute than the full $\TLp$ distance first proposed in~\cite{Trillos:2016}, whilst retaining some of its favourable properties
\vspace*{\baselineskip}

Our proposed method can be seen as an extension of the linear Wasserstein framework ($\LWp$)~\cite{Wang:2013} to higher order transportation distances.
The linearisation is essentially a projection onto the tangent manifold at a given reference point.
The geodesic distance (in this case corresponding to the Wasserstein distance) is approximated by the Euclidean distance in the tangent space. 
Suppose we wish to compare $N$ signals/images, then application of optimal transport methods would require computation of $N(N-1)/2$ distances. In the linear optimal transport framework only $N$ distances need to be computed. From here signals/images are then embedded into Euclidean space allowing linear statistical methods to be applied, whilst preserving much of the geometry of the original optimal transport space.
In~\cite{Wang:2013} the optimal transport distance of choice is the $\Wp$ distance, here we will choose the $\TLp$ distance. This choice allows a more general set of unnormalised, not necessarily non-negative, multi-channel signals to modelled, which is not possible in the linear $\Wp$ framework.
\vspace*{\baselineskip}

This manuscript begins by reviewing optimal transport and the $\Wp$ and $\TLp$ distances.
The $\LWp$ framework is reviewed in Section~\ref{section:LOT} and we propose our extension to $\TLp$ in Section~\ref{section:LTLP}.
We then give an overview on interpolation in the $\TLp$ space.
A background on numerical methods, more precisely how existing methods for $\Wp$ can be adapted to $\TLp$ is included in the appendix.
%We include further sections (see Sections~\ref{section::ComputingTLp}-\ref{subsec:Methods:Int}) on computing the linear $\TLp$ distance, including derivatives in the distance, and geodesics and interpolation in the linear $\TLp$ space.
%In Sections~\ref{subsec:Results:Syn1d}-\ref{subsec:Results:Syn2d} we apply our method to two synthetic datasets and demonstrate that linear $\TLp$ improves over the linear Wasserstein distance.
In Section~\ref{sec:Results} we apply our method to classification problems in %cell morphometry (Section~\ref{subsec:Results:CM}), 
Australian sign language (Section~\ref{subsec:Results:Auslan}), breast cancer histopathology (Section~\ref{subsec:Results:Cancer}) and financial time series (Section~\ref{subsec:Results:Financial}).
We show that linear $\TLp$ ($\LTLp$) outperforms $\LWp$ and that it has similar performance to $\TLp$, but is several orders of magnitude faster.
Other applications to synthetic data sets and cell morphometry are given in the appendix.

\section{Methods}

\subsection{Optimal Transport and the Wasserstein Distance}

To fix notation, we review the modern Monge-Kantorovich formulation of optimal transport and we refer to the excellent monographs~\cite{Villani:2003, Villani:2008,Santambrogio:2015,peyre18} for a thorough exposition. Let $\mu$ and $\nu$ be probability measures on measure spaces $X$ and $Y$ respectively, i.e. $\mu,\nu\in\cP(X)$. Further, let $\mathcal{B}(X)$ denote the Borel $\sigma$-algebra on $X$. We define the \emph{pushforward} of a measure $\mu\in\cP(X)$ by a function $h:X\to Z$ by $h_{*}\mu(A) := \mu(h^{-1}(A))$ for all $A\in\mathcal{B}(Z)$. The inverse of $h$ is understood as being in the set theoretic sense, i.e. $h^{-1}(A) := \{x \,:\, h(x) \in A\}$. We denote by $\Pi(\mu, \nu)$ the set of all measures on $X\times Y$ such that the first marginal is $\mu$ and the second marginal is $\nu$.
To be precise, if $P^X : X \times Y \to X$ and $P^Y : X \times Y \to Y$ are the canonical projections then $P^X_* \pi = \mu$ and $P^Y_* \pi = \nu$.
We call any $\pi\in\Pi(\mu,\nu)$ a \emph{transportation plan} between $\mu$ and $\nu$ (also called a coupling between $\mu$ and $\nu$).
%An element $\pi$ in $\Pi(\mu, \nu)$ is a probability measure on the product space $X \times Y$, with marginals $\mu$ and $\nu$.
%To be precise, if $\Pi^X : X \times Y \to X$ and $\Pi^Y : X \times Y \to Y$ are canonical projections then $\Pi^X_* \pi = \mu$ and $\Pi^Y_* \pi = \nu$.

The \emph{Kantorovich optimal transport problem} is the following variational problem
\begin{equation}\label{equation::Kantorovich}
K(\mu, \nu) = \inf_{\pi \in \Pi(\mu, \nu)}\int_{X\times Y}c(x, y)\, \dd \pi(x, y),
\end{equation}
where $c(x, y)$ is a cost function. The minimiser of this problem is called the \emph{optimal transport plan} $\pi^\dagger$ and such a minimiser exists when $c$ is lower semi-continuous (see, for example \cite{Santambrogio:2015}). 
The prototypical example for $c$ (when $X=Y=\bbR^d$) is $c(x,y) = |x-y|_p^p:=\sum_{i=1}^d |x_i-y_i|^p$, for this choice of $c$ one can define the \emph{Wasserstein distance} by $\dWp(\mu,\nu) = \sqrt[p]{K(\mu,\nu)}$ (see also~\eqref{eq:Methods:dWp} below).
When $c$ is a metric then~\eqref{equation::Kantorovich} is also known as the earth mover's distance. 
\vspace*{\baselineskip}

%With an abuse of notation we will use the terminology optimal transport distance when referring to the Wasserstein distance, and in order to be consistent 

%\mtold{[I made some changes to this paragraph!!!]}
Now, considering a different formulation, let $T: X \to Y$ be a Borel measurable function such that $T_*\mu = \nu$. %If we choose the cost function to be the $1$-Euclidean metric $c(x,y) = |x - y|$, we recover Monge's original transport problem formulation
The \emph{Monge optimal transport problem} is to solve
\begin{equation}\label{equation::monge}
M(\mu,\nu) = \inf_{T\,:\, T_*\mu = \nu} \int_{X\times Y} c(x, T(x)) \, \dd \mu(x)
\end{equation}
%  = \int_{X}|x - T(x)| d\mu(x),
%where $\pi = (Id \times T)_*\mu$.
We call any $T$ that satisfies $T_*\mu=\nu$ a \emph{transport map} between $\mu$ and $\nu$, and the solution to the optimisation problem $T^\dagger$ is called the \emph{optimal transport map}.
\vspace*{\baselineskip}

It is worthwhile noting that the formulation of the optimal transport problems in equations~\eqref{equation::Kantorovich} and~\eqref{equation::monge} are not, in general, equivalent.
%Existence of the optimal transport map $\hat{T}$ implies the optimal transport plan $\hat{\pi}$, indeed if $\hat{T}$ is an optimal map then $\hat{\pi} = (\Id\times \hat{T})_*\mu$ is an optimal plan \mtold{Matt to check this!!!}.
However, if the optimal transport plan $\pi^\dagger$ can be written in the form $\pi^\dagger = (\Id\times T^\dagger)_*\mu$ then it follows that $T^\dagger$ is an optimal transport map and the two formulations are equivalent, i.e. $K(\mu,\nu)=M(\mu,\nu)$.
A sufficient condition to show that such an optimal transport plan exists is to require that $\mu$ is absolutely continuous with respect to the Lebesgue measure on a compact domain $\Omega\subset\bbR^d$ and, in addition, $c(x,y) = h(x-y)$ where $h:\Omega \to [0,\infty)$ is strictly convex and superlinear, see~\cite[Theorem~2.44]{Villani:2003}.
Note that it is easy to find examples where there do not exist transport maps at all.
For example, if $\mu = \frac13 \delta_{x_1} + \frac13 \delta_{x_2} + \frac13 \delta_{x_3}$ and $\nu = \frac12 \delta_{y_1} + \frac12 \delta_{y_2}$.
%The reverse statement doesn't hold in general~\citep{Villani:2008}. 
%A sufficient condition to show that such an optimal transport plan exists is to require that $\mu$ is absolutely continuous with respect to the Lebesgue measure on a compact domain $\Omega\subset\bbR^d$ and, in addition, $c(x,y) = h(x-y)$ is strictly convex and superlinear then  %existence of $\hat{\pi}$ implies existence of $\hat{T}$. 
\vspace*{\baselineskip}

Let $\Omega \subset \mathbb{R}^n$ and $\cP_p(\Omega)$ be the set of Radon measures on $\Omega$ with finite $p^{\mathrm{th}}$ moment.
For $p \in [1,\infty)$, we define the Wasserstein distance between $\mu$ and $\nu$ in $\cP_p(\Omega)$ by
\begin{equation}\label{eq:Methods:dWp}
\dWp(\mu, \nu) = \left(\inf_{\pi \in \Pi(\mu, \nu)}\int_{\Omega\times \Omega}|x - y|^p_p \, \dd \pi(x, y)\right)^{1/p}.
\end{equation}
The Wasserstein space $\Wp$ is the metric space $(\cP_p(\Omega),\dWp)$.
%The Wasserstein distance defines a metric on $\mathbb{P}_p(\Omega)$ and furthermore characterises the weak$^*$ convergence when $\Omega$ is compact.
For the case $p = \infty$, we can define a distance on $\cP_\infty(\Omega)$ by
\[ \dWinfty(\mu,\nu) := \inf_{\pi \in \Pi(\mu, \nu)}\esssup_{\pi}\{|x - y|: (x,y) \in \Omega \times \Omega\}. \]
%which is also called the $\infty$-transportation distance.
%\vspace*{\baselineskip}

%\subsubsection{Transportation Methods in Pattern Recognition}

%To motivate the transportation $L^{p}$ distance, which we write as $TL^{p}$, 

We briefly review the features that make optimal transport particularly suited to signal and image processing. For an extended survey of these ideas see~\cite{Kolouri:2017}.
Optimal transport is able to provide generative models which can represent diverse data distributions and can capture signal variations as a result of spatial perturbations. 
Furthermore, there is a well formulated theoretical basis (particularly for $\Wtwo$) with interesting geometrical properties, such as existence of minimisers~\cite{gangbo95,Villani:2008}, the Riemannian structure of Wasserstein spaces when $p=2$~\cite{Ambrosio:2008,Otto:2001} and characterisation as the weak$^*$ convergence when $\Omega$ is compact~\cite{Santambrogio:2015}. 
The Riemannian structure allows the characterisation of geodesics (shortest curves) on the space $\cP_2(\Omega)$.
In addition, there are many methods to compute optimal transport distances, for convenience, we review a selection in the appendix in the context of computing the $\TLp$ distance.

%Optimal transport distances are popular in machine learning and signal processing due to their Lagrangian nature which often makes them a better model for variations than Eularian distances like $L^2$.
%Moreover, fast computational methods~\cite{angenent03,benamou00,cuturi13,oberman15,schmitzer15} and important theoretical results, such as the existence of minimizers~\cite{gangbo95,villani08} and the Riemannian structure (in particular the existence of geodesics)~\cite{ambrosio08,otto01}, have encouraged practitioners to adopt optimal transport distances.

\subsection{The Transportation \texorpdfstring{$\Lp$}{Lp} Distance} \label{section::TLP}

The $\TLp$ distance was first introduced in~\cite{Trillos:2016} to define a discrete-to-continuum convergence on point clouds. This tool has been been extensively used to study similar statistical problems, e.g.~\cite{Dunlop:2018,Slepcev:2017,Cristoferi:2018,Thorpe:2016,osting17,garciatrillos18, garciatrillos16a,garciatrillos17,garciatrillos17a,garciatrillos18aAAA,garciatrillos17bAAA,garciatrillos17cAAA, garciatrillos18bAAA}, %\citep{Trillos:2014,Thorpe:2016,Slepcev:2017,Cristoferi:2018,Dunlop:2018,osting17}
and recently has been shown to be a valuable tool in signal analysis~\cite{fitschen17, Thorpe:2017}. %Further properties of this metric have been established in~\cite{Thorpe:2017b}. 
In this section, we review the definitions and properties of the $\TLp$ distance and space.
\vspace*{\baselineskip}

Given an open and bounded domain $\Omega\subset\bbR^d$ we define the $\TLp$ space as the set of pairs $(\mu, f)$ such that $f \in \Lp(\mu; \mathbb{R}^m)$ and $\mu \in \cP_p(\Omega)$. % and a function $f: \Omega \to \mathbb{R}^m$, where $\Omega \subset \mathbb{R}^n$ is an open, bounded and connected.
We do not make any assumption on the dimension $m$ of the range of $f$.  Working in this formal and abstract framework of measure theory allows us to formulate our methods for both discrete and continuous signals, simultaneously. Importantly, similarly to the Wasserstein distance but unlike the $\Lp$ distance, this framework allows us to compare signals with different discretisations since $\mu$ and $\nu$ need not have the same support. We define the $\TLp$ space as
\[ \TLp : = \left\{(\mu, f): \mu \in \cP_p(\Omega), f \in \Lp(\mu; \mathbb{R}^m)\right\}. \]
We construct the $\TLp$ metric between pairs $(\mu, f)\in \TLp$ and $(\nu, g)\in \TLp$ as follows:
\begin{equation} \label{eq:Methods:TLp}
\dTLp((\mu, f), (\nu, g)) = \left(\inf_{\pi \in \Pi(\mu, \nu)}\int_{\Omega\times \Omega}|x - y|^{p}_p + |f(x)- g(y)|^{p}_p \, \dd\pi(x,y)\right)^{1/p}.
\end{equation}
Intuitively, we see that $\TLp$ optimal transport plans (that is plans in $\Pi(\mu,\nu)$ that achieve the minimum in the above variational problem) strike a balance between matching spatially, i.e. minimising $\int_{\Omega\times \Omega}|x - y|^{p}_p \, \dd \pi(x,y)$, and matching signal features, i.e. minimising $\int_{\Omega\times \Omega}|f(x) - g(y)|^{p}_p \, \dd \pi(x,y)$.
\vspace*{\baselineskip}

\begin{example}
	%Later we consider an application to synthetic images (see Section~\ref{subsec:Results:Syn2d}),
	Let us explain here how images can be represented in $\TLp$.
	Let $\{x_i\}_{i=1}^n$ be the location of pixels (which usually form a grid over $[0,1]\times[0,1]$).
	We apply $\TLp$ by choosing a base measure $\mu$, this is commonly the uniform measure over $\{x_i\}_{i=1}^n$, i.e. $\mu = \frac{1}{n}\sum_{i=1}^n \delta_{x_i}$.
	An image is then represented by the pair $(\mu,f)$ where $f:\{x_i\}_{i=1}^n \to \bbR^3$ for RGB images and $f(x_i)$ are the RGB values for the pixel at location $x_i$.
	Similarly, for greyscale images one would have $f:\{x_i\}_{i=1}^n\to \bbR$ where $f(x_i)$ is now the greyscale value for the pixel at location $x_i$.
	Of course, one can make different choices for $\mu$ in order to emphasise regions/features of the images.
	We note that, as we represent the image as a function $f$, then we do not need to assume that the image has unit mass.
	To apply $\Wp$ we would represent the image as a probability measure $\mu$, and we would therefore have to assume that the image has unit mass (or renormalise). This strong assumption limits the applicability of optimal transport in image processing because it requires ad-hoc renormalisation which may suppress features of the image.
\end{example}
\vspace*{\baselineskip}

To further understand the $\TLp$ distance, we reformulate it as a $\Wp$ distance supported on the graphs of functions. Recall that the graph of a function is defined as:
\[ \Gra(f) := \left\{(x, f(x)): x \in \Omega\right\}. \]
Note that the $\Gra(f)\subset \Lambda := \Omega \times \mathbb{R}^m$. We define the following lifted measure on $\Gra(f)$: % we consider the following \textit{lifted measure}:
\[ \tilde{\mu}(A\times B) = (\Id\times f)_*\mu(A\times B) = \mu(\{x: x \in A, f(x) \in B\}), \]
where $A \times B \subset \Lambda$. It is clear that $\tilde{\mu}$ is a well-defined measure on $\Gra(f)$. We can characterise the $\TLp$ distance as a $\Wp$ distance in the following way~\cite{Trillos:2016}: %. Define $\tilde{\pi} \in \tilde{\Pi}(\tilde{\mu}, \tilde{\nu})$ to be the space of measures (couplings) on $\Lambda \times \Lambda$ such that the first marginal is $\tilde{\mu}$ and the second marginal is $\tilde{\nu}$. We then have the equivalence~\mtold{[Ref!!!]}
\begin{align*}
\dTLp^p((\mu, f), (\nu, g)) & = \inf_{\pi \in \Pi(\mu, \nu)}\int_{\Omega\times \Omega}|x - y|^{p}_p + |f(x)- g(y)|^{p}_p \, \dd \pi(x,y) \\
 & = \inf_{\tilde{\pi} \in \tilde{\Pi}(\tilde{\mu}, \tilde{\nu})} \int_{\Lambda\times \Lambda}|\boldsymbol{x} - \boldsymbol{y}|^p_p \, \dd \tilde{\pi}(\boldsymbol{x}, \boldsymbol{y}) \\
 & = \dWp^p(\tilde{\mu}, \tilde{\nu}).
\end{align*}
Thus, we can see that the $\TLp$ distance is the $\Wp$ distance between the appropriate measures on the graphs of function.
%Furthermore, we can see $\tilde{\mu}$ as the following pushforward measure $\tilde{\mu} = (\Id \times f)_* \mu$.
This allows us to make the following identification between $\Wp$ and $\TLp$ through the mapping. %Let $\mathbb{P}_p(\Omega \times \mathbb{R}^m)$ be the space of probability measures, with finite $p^{th}$ moment, on the product space $\Omega \times \mathbb{R}^m$ endowed with the Borel sigma algebra $\mathcal{B}(\Omega \times \mathbb{R}^m)$. The mapping
\begin{align}
\TLp(\Omega) & \to \Wp(\Omega\times\bbR^m) \label{eq:Methods:TLpIdentWp1} \\
(\mu, f) & \mapsto \tilde{\mu} = (\Id \times f)_* \mu. \label{eq:Methods:TLpIdentWp2}
\end{align}
%provides the link between the $TL^p$ space with a measure on the product space $\Omega \times \mathbb{R}^m$. %, which is supported on the graph of $f$. 
This connection of the $\TLp$ distance and the $\Wp$ distance facilitates the transfer of certain Wasserstein properties to the $\TLp$ setting; for example, metric properties and existence of minimisers.

It is easy to see that, for any $\mu,\nu\in \cP_p(\Omega)$ and $f,g\in L^p(\mu)$:
\begin{align*}
\dTLp((\mu,f),(\mu,g)) & \leq \|f-g\|_{L^p(\mu)} \\
\dWp(\mu,\nu) & \leq \dTLp((\mu,\mathds{1}),(\nu,\mathds{1})).
\end{align*}
In fact, one can also prove the converse inequalities (up to a constant) and hence $\TLp$ can be seen to generalise both weak$^*$ convergence of measures and $\Lp$ convergence of functions (see~\cite{Trillos:2016} or Proposition~\ref{prop:Methods:TLpProp} below). % Proposition 3.12). 

The $\TLp$ distance can be seen as a special case of optimal transport by observing that $\dTLp((\mu,f),(\nu,g))$ coincides with the Kantorovich optimal transport problem between two measures $\mu$ and $\nu$ with cost function $c(x,y;f,g) = |x - y|^p_p + |f(x) - g(y)|^p_p$. % is equal to the $\TLp$ distance between the two signals $(\mu,f)$ and $(\nu,g)$.

For reference, we also state a Monge-type formulation of the $\TLp$ distance as follows:
\begin{equation} \label{eq:Methods:MongeTLp}
\dTLp^p((\mu, f), (\nu, g))  = \inf_{T: T_{*}\mu = \nu}\int_{\Omega}|x - T(x)|^{p}_p + |f(x)- g(T(x))|^{p}_p \,\dd\mu(x),
\end{equation}
where $T$ is a transportation map.
When we write the Monge formulation of optimal transport we are assuming that there is an equivalence between~\eqref{eq:Methods:TLp} and~\eqref{eq:Methods:MongeTLp}.
This is in general difficult to verify, since the application of Brenier's theorem does not lead to natural conditions.
(Assuming that $\mu$ does not give mass to small sets and both $\mu$ and $\nu$ have a sufficient number of bounded moments then one can apply Brenier's theorem to $K(\mu,\nu)$ where $c(x,y) = |x-y|_p^p + |f(x) - g(y)|_p^p$ and $K$ is defined by~\eqref{equation::Kantorovich}, if $c$ is strictly convex; practically this is not reasonable.)
However, when $\mu$ and $\nu$ are discrete uniform measures with supports of equal size the Monge formulation~\eqref{eq:Methods:MongeTLp} coincides with the Kantorovich formulation~\eqref{eq:Methods:TLp} (see the proposition below).

Let us recall the identification~(\ref{eq:Methods:TLpIdentWp1}-\ref{eq:Methods:TLpIdentWp2}) and the identity $\dWp(\tilde{\mu},\tilde{\nu}) = \dTLp((\mu,f),(\nu,g))$.
Then, there is a corresponding equivalence between transport maps.
That is (assuming all transport maps exist and are unique) let $T^\dagger$ achieve the minimum in~\eqref{eq:Methods:MongeTLp} and $\tilde{T}^\dagger$ achieve the minimum in $M(\tilde{\mu},\tilde{\nu})$ where $M$ is given by~\eqref{equation::monge} with $c(x,y) = |x-y|_p^p$.
It follows that $\tilde{T}^\dagger(\boldsymbol{x}) = (T^\dagger(x),g(T^\dagger(x))$ for $\mu$-almost every $x\in \Omega$.

\begin{proposition}
\label{prop:Methods:TLpProp}
Let $\Omega\subset\bbR^d$ be open and bounded and $p\in (1,\infty)$.
Then the following holds
\begin{enumerate}
\item \cite[Remark 3.4]{Trillos:2016} $(\TLp,\dTLp)$ is a metric space;
\item \cite[Theorem 5.10]{Santambrogio:2015} $\mu_n\weakstarto \mu$ if and only if $(\mu_n,\mathds{1}) \toTLp (\mu,\mathds{1})$;
\item \cite[Proposition 3.12]{Trillos:2016} $f_n\to f$ in $\Lp(\mu)$ if and only if $(\mu,f_n)\toTLp (\mu,f)$;
\item \cite[Proposition 3.4]{Thorpe:2017} for any $(\mu,f),(\nu,g)\in\TLp$ there exists a transport plan $\pi^\dagger\in\Pi(\mu,\nu)$ realising the minimum in $\dTLp((\mu,f),(\nu,g))$, i.e. %\cite[Theorem 2.2]{Thorpe:2017b}
\[ \dTLp^p((\mu,f),(\nu,g)) = \int_{\Omega\times\Omega} |x-y|_p^p + |f(x) - g(y)|_p^p \, \dd \pi^\dagger(x,y); \]
\item \cite[Proposition 3.5]{Thorpe:2017} if $\mu = \frac{1}{n} \sum_{i=1}^n \delta_{x_i}$ and $\nu = \frac{1}{n} \sum_{j=1}^n \delta_{y_j}$ then for any $f\in L^p(\mu)$ and $g\in L^p(\nu)$ there exists $T^\dagger:\{x_i\}_{i=1}^n\to \{y_j\}_{j=1}^n$ such that $T^\dagger_*\mu=\nu$ and
\[ \dTLp^p((\mu,f),(\nu,g)) = \int_\Omega |x-T^\dagger(x)|_p^p + |f(x) - g(T^\dagger(x))|^p_p \, \dd \mu(x), \]
i.e. the Monge and Kantorovich formulations of $\TLp$ (given by~\eqref{eq:Methods:MongeTLp} and~\eqref{eq:Methods:TLp} respectively) are equivalent for point masses.
\end{enumerate}
\end{proposition}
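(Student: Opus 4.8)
The unifying tool for all five parts is the isometric identification established just above, namely $(\mu,f)\mapsto\tilde{\mu}=(\Id\times f)_*\mu$ with $\dTLp((\mu,f),(\nu,g))=\dWp(\tilde{\mu},\tilde{\nu})$. My plan is to transfer each property from its Wasserstein counterpart on $\Omega\times\bbR^m$, treating the parts roughly in increasing order of difficulty rather than in their listed order.

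Parts 1, 2 and 4 are essentially immediate consequences of the identification. For Part 1, symmetry and the triangle inequality descend directly from the fact that $\dWp$ is a metric; the only point requiring attention is the identity of indiscernibles, for which I would observe that $\mu$ and $f$ (up to $\mu$-null sets) are recoverable from $\tilde{\mu}$ --- the measure $\mu$ as the pushforward of $\tilde{\mu}$ under the projection to $\Omega$, and $f$ as the $\mu$-a.e. defined map whose graph supports $\tilde{\mu}$ --- so that $\dTLp=0$ forces $\mu=\nu$ and $f=g$ $\mu$-a.e. For Part 2, taking $f=g=\mathds{1}$ lifts $\mu_n$ onto a single horizontal slice of $\Omega\times\bbR^m$, on which the lift is a homeomorphism onto its image; since $\Omega$ is bounded, $\dWp$ metrises weak$^*$ convergence (the cited Theorem~5.10), and weak$^*$ convergence of $\tilde{\mu}_n$ is equivalent to that of $\mu_n$. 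Part 4 I would obtain either by transporting the existence of optimal Wasserstein plans through the identification, or directly by the standard direct method: $\Pi(\mu,\nu)$ is weak$^*$ compact by Prokhorov (the marginals are fixed and $\Omega$ is bounded) and $\pi\mapsto\int(|x-y|_p^p+|f(x)-g(y)|_p^p)\,\dd\pi$ is weak$^*$ lower semicontinuous, so a minimiser exists.

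For Part 5 I would invoke the Birkhoff--von Neumann theorem. When $\mu=\frac1n\sum_i\delta_{x_i}$ and $\nu=\frac1n\sum_j\delta_{y_j}$, the admissible plans $\pi\in\Pi(\mu,\nu)$ are exactly $\frac1n$ times the doubly stochastic $n\times n$ matrices, i.e. the scaled Birkhoff polytope, and the objective is linear in the plan. Hence the infimum is attained at an extreme point, which by Birkhoff--von Neumann is a permutation matrix; this permutation defines a bijection $T^\dagger:\{x_i\}\to\{y_j\}$ with $T^\dagger_*\mu=\nu$ realising the distance, giving the Monge--Kantorovich equality. I would remark that this argument uses only finiteness and equality of the two support sizes, not their geometry.

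The main obstacle is the converse direction of Part 3: deducing $f_n\to f$ in $\Lp(\mu)$ from $(\mu,f_n)\toTLp(\mu,f)$, the forward direction being the stated bound $\dTLp((\mu,f_n),(\mu,f))\le\|f_n-f\|_{\Lp(\mu)}$. Here $\mu$ is fixed on both sides, so I would take optimal plans $\pi_n\in\Pi(\mu,\mu)$ (existence from Part 4). Since $\int|x-y|_p^p\,\dd\pi_n\le\dTLp^p\to0$ and both marginals equal $\mu$, lower semicontinuity and the fact that the diagonal coupling is the unique zero-cost self-coupling of $\mu$ force $\pi_n\weakstarto(\Id\times\Id)_*\mu$. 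Using that the first marginals are $\mu$, the triangle inequality in $\Lp(\pi_n)$ gives
\[
\|f_n-f\|_{\Lp(\mu)}\le\left(\int|f_n(x)-f(y)|_p^p\,\dd\pi_n\right)^{1/p}+\left(\int|f(x)-f(y)|_p^p\,\dd\pi_n\right)^{1/p}.
\]
The first term is bounded by $\dTLp((\mu,f_n),(\mu,f))\to0$. The delicate step is showing the second term vanishes: since $f\in\Lp(\mu)$ is fixed and $\pi_n$ collapses onto the diagonal, $\int|f(x)-f(y)|_p^p\,\dd\pi_n\to0$, which I would prove by approximating $f$ in $\Lp(\mu)$ by continuous functions and controlling the resulting error by a uniform-integrability argument. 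This concentration-plus-approximation estimate is the one genuinely analytic point; everything else is either combinatorial (Part 5) or a direct transfer from the Wasserstein theory.
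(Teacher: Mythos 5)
The paper gives no proof of this proposition at all: each item is simply quoted from the literature (Garc\'{i}a Trillos--Slep\v{c}ev, Santambrogio, Thorpe et al.), so there is no in-paper argument to compare against. Your reconstruction follows what those references actually do and is essentially correct: parts 1 and 2 by transfer through the isometric identification $(\mu,f)\mapsto(\Id\times f)_*\mu$ (with the correct observation that $\mu$ and the $\mu$-a.e.\ class of $f$ are recoverable from $\tilde{\mu}$, which is what gives the identity of indiscernibles); part 5 by Birkhoff--von Neumann on the scaled Birkhoff polytope; and part 3 by the concentration-plus-approximation argument --- Markov's inequality to show the optimal self-couplings $\pi_n$ put vanishing mass on $\{|x-y|>\delta\}$, uniform continuity of a continuous approximant of $f$ on the compact closure of $\Omega$, and the fixed marginals to control the two approximation errors. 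That is precisely the proof of Proposition 3.12 in Garc\'{i}a Trillos--Slep\v{c}ev, and you correctly identify it as the only genuinely analytic step.

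The one step I would push back on is the ``direct method'' branch of part 4. The functional $\pi\mapsto\int\bigl(|x-y|_p^p+|f(x)-g(y)|_p^p\bigr)\,\dd\pi$ is \emph{not} obviously weak$^*$ lower semicontinuous on $\Pi(\mu,\nu)$, because $f$ and $g$ are merely $\Lp$ functions and the standard lower-semicontinuity theorem for Kantorovich functionals requires a lower semicontinuous cost; existence of minimisers for general Borel costs is true but is a substantially deeper result, not the textbook direct method. Your alternative route is the one that works cleanly and is the argument behind the cited Proposition 3.4 of Thorpe et al.: lift to $\Wp(\Omega\times\bbR^m)$, where the cost $|\boldsymbol{x}-\boldsymbol{y}|_p^p$ is continuous and the marginals lie in $\cP_p(\Lambda)$, obtain an optimal plan $\tilde{\pi}^\dagger$ there by the genuine direct method, and push it forward under the projection $\Lambda\times\Lambda\to\Omega\times\Omega$, using that $\tilde{\pi}^\dagger$ is concentrated on $\Gra(f)\times\Gra(g)$ to see that the projected plan has the same cost. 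I would either drop the direct-method remark or justify the lower-semicontinuity claim; as written it is the only step in your proposal that would not survive scrutiny.
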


%The above proposition can be found in~\citep{Trillos:2016}. As with optimal transport, the geometric interpretation of the $TL^p$ space is of independent interest. If we recall the Finsler/Riemannian manifold structure on $\mathbb{P}_p(\Omega)$~\citep{Otto:2001,Agueh:2012} provided by the $W_{p}$ metric ($p = 2$ for Riemannian structure) then we have that $TL^p$ is a formal fiber bundle~\citep{Trillos:2016}. \mtold{[Is it important to mention that $TL^p$ is a fiber bundle??? Can we instead point out that geodesics do not exist, although I would rather do this later, starting by listing the disadvantages isn't going to give the reader optimism - I would also move the following remark to later in the section and list advantages first!!!]}

Note that not all properties of $\Wp$ carry through to $\TLp$.
For example $(\TLp, \dTLp)$ is not complete.
Indeed, following~\cite{Trillos:2016}, let $\Omega = (0,1)$ and note that $f_{n+1}(x) = \mathrm{sign} \sin(2^n \pi x)$, $\mu_n = \cL\lfloor_{(0,1)}$ (the Lebesgue measure on $(0,1)$) is a Cauchy sequence in $(\TLp, \dTLp)$.
However, $\{f_n\}$ does not converge in $\Lp$ and therefore $\{(\mu_n, f_n)\}$ cannot converge in $\TLp$, by part 3 of the above proposition. 
The completion of $\TLp$ can be identified with the set of Young measures, and therefore the space $\cP(\Omega\times \bbR^m)$, see~\cite[Remark 3.6]{Trillos:2016}. % or~\cite[Theorem 5.6]{Thorpe:2017b}.
We note also that there do not exist geodesics in $\TLp$, however we develop an approach to interpolate in $\TLp$ (see section \ref{subsec:Methods:Int}).
\vspace*{\baselineskip}

%\begin{remark}
%The metric space $(TL^{p}, d_{TL^{p}})$ is not complete. Indeed, following~\citep{Trillos:2016}, let $\Omega = (0,1)$ and note that $f_{n+1}(x) = \text{sign} \sin(2^n \pi x)$ is a Cauchy sequence in $(TL^{p}, d_{TL^{p}})$. However, $\{f_n\}$ does not converge in $L^P$ and therefore $\{(\mu, f_n)\}$ cannot converge in $TL^{p}$, by Corollary 3.15~\citep{Trillos:2016}. 
%\end{remark}

By part 3 in the above proposition $\TLp$ inherits some sensitivity to high frequency perturbations from the $\Lp$ norm.
In contrast, as $\Wp$ metricizes the weak* convergence (in compact Euclidean spaces) then $\Wp$ is insensitive to high frequency perturbations, see~\cite[Section 2.2]{Thorpe:2017}.

We also can deduce that the $\TLp$ distance inherits translation sensitivity from $\Wp$.
In particular, we can see that $\Lp$ distances are insensitive to translations if the supports of the images are disjoint.
On the other hand, the $\Wp$ distance scales linearly with the size of the translation no matter how large the translation.
The $\TLp$ distance, although not scaling linearly with translation, is monotonically non-decreasing as a function of translation.
\vspace*{\baselineskip}

The $\TLp$ appears an excellent tool to exploit in pattern recognition problems such as images or times series, however it is as computational demanding as $\Wp$ and despite recent advances in computation of optimal transport, e.g.~\cite{Cuturi:2013}, it is still challenging to apply it to large scale problems. In the next section, we review the linear Wasserstein ($\LWp$) framework, which was introduced to allow application of optimal transport methods to classification problems~\cite{Wang:2013}. In later sections, we apply the ideas of linear optimal transport to $\TLp$.

\subsection{A Linear \texorpdfstring{$\Wp$}{Wp} Framework}\label{section:LOT}

The $\LWp$ framework was proposed in~\cite{Wang:2013}, as a way to apply optimal transport techniques (in particular $\Wp$) to large scale classification problems for image analysis. Given a set of $N$ images, one would need to compute all pairwise $\Wp$ distances in order to use methods such as $k$-nearest neighbour classifiers. The $\LWp$ framework was developed so that the Wasserstein distance needs to be computed only $N$ times. In particular, it is the optimal Wasserstein transport maps between signals that are computed. From here, the images are embedded in Euclidean space therefore allowing linear statistical techniques to be applied~\cite{Wang:2011b}. This technique was successfully applied in~\cite{Basu:2014} to detect morphological difference in cancer cells. The technique has been further refined and extended to super-resolution images~\cite{Kolouri:2015, Kolouri:2016b}. In this section, we briefly review the ideas of $\LWp$.
\vspace*{\baselineskip}

%If the number of images is large it can be infeasible to calculate the OT pairwise for all images.
The idea behind the $\LWp$ framework is to find the optimal Wasserstein transport maps with respect to one (reference) measure. %In this section the cost $c$ will be assumed to be given by $c(x,y) = |x-y|_p^p$ and therefore the Wasserstein distance relates to the optimal transport problems through $\dWp(\mu,\nu) = \sqrt[p]{M(\mu,\nu)} = \sqrt[p]{K(\mu,\nu)}$ (
For simplicity we assume that the Monge problem is equivalent to the Kantorovich problem and, in particular, there exists optimal transport maps.
%We note there is a small inconsistency in notation; the Wasserstein distance is an example of an optimal transport distance and therefore we should really use the terminology "linear Wasserstein distance", however we choose to keep the notation introduced in~\cite{Wang:2013} and use the name "linear optimal transport distance" when we are referring specifically to the Wasserstein distance.
Via an embedding of the transport map into Euclidean space the $\Wp$ distance between any two pairs is estimated. More precisely, the $\LWp$ framework provides a linear embedding for $\cP_p(\Omega)$ with respect to a fixed measure $\sigma \in \cP_p(\Omega)$~\cite{Kolouri:2017}. This means the Euclidean distance of the embedded measure and the fixed measure $\sigma$ is equal to the $\Wp$ distance of the measure and the fixed measure. The Euclidean distance between any two measures is then an approximation to the Wasserstein distance between these measures. These linear embeddings then facilitate the application of standard statistical techniques such as PCA, LDA and K-means. The $\LWp$ framework is also invertible and so synthetic, but physically possible signals, can be realised~\cite{park18}.
\vspace*{\baselineskip}

%Let $\mu_1$ and $\mu_2$ be two probability measures defined on $\Omega$, and % such that $\mu_1(\Omega) = \mu_2(\Omega) = 1$.
%The optimal transport plan, with $p$ - Euclidean cost is then given as the optimum of the following problem:
%\begin{equation}
%d_{OT}^p(\mu_1, \mu_2) = \inf_{\pi \in \Pi(\mu_1, \mu_2)} \int_{\Omega\times\Omega}|{\boldsymbol x} - {\boldsymbol y}|^p d\pi, 
%\end{equation}
%where ${\boldsymbol x}$ are coordinates of the domain of $\mu_1$ and ${\boldsymbol y}$  are coordinates of the domain of $\mu_2$. 
%\mtold{[We should just recall the Wasserstein distance defined in the previous sections, and avoid using bold face notation as we used this for graphs earlier!!!]}
%$\sigma$ be our reference measure. % and suppose that all measure have densities with repsect to the Lebesgue measure such that $\dd\sigma(z) = I_0(z) \dd z$, $\dd\mu_1(x) = I_1 \dd x$ and $\dd\mu_2(y) = I_2(y) dy$. The optimal transport problem for $\sigma$ and, say, $\mu_1$ is to find a map $f^{(1)}: \mathbb{R}^n \to \mathbb{R}^n$ with $f^{(1)} = \Id - u^{(1)}$, minimising \mtold{[I would rather we use $T$ instead of $f$ to avoid confusion with the previous section!!!]}
Let $\mu_1,\mu_2\in\cP(\Omega)$ and $\sigma\in \cP(\Omega)$ is our reference measure.
Throughout this section, we assume that optimal transport maps $T^{\mu_i}:\Omega\to\Omega$ exist between $\sigma$ and $\mu_i$, i.e.
\[ \dWp(\sigma, \mu_i) = \sqrt[p]{\int_{\Omega} |x - T^{\mu_i}(x)|^p_p \, \dd \sigma(x)}, \]
and $T^{\mu_i}_*\sigma = \mu_i$. %Writing $f^{(2)}$ as the optimal transport map between $\sigma$ and $\mu_2$, 
If optimal transport maps do not exist then one can still define the $\LWp$ distance but there is not a natural way to embed this distance into Euclidean space.
We refer to~\cite[Section 2.3]{Wang:2013} on how to define the $\LWp$ distance using generalised geodesics.

In the setting considered here the \emph{Linear Wasserstein Distance}, $\LWp$, is defined by \cite{Wang:2013}:
\[ \dLOT(\mu_1, \mu_2) := \sqrt[p]{\int_{\Omega}|T^{\mu_1}(x) - T^{\mu_2}(x)|^p_p \, \dd \sigma(x)}. \]
We observe that $\dLOT$ is a metric and
\[ \dLOT(\mu_1,\mu_2) = \lVert T^{\mu_1} - T^{\mu_2} \rVert_{\Lp(\sigma)}. \]

Let us assume that $\sigma$ has a density $\rho$ with respect to the Lebesgue measure and define
\begin{equation} \label{eq:Methods:Pc}
P_c(\mu) = (T^{\mu} - \Id) \rho^{\frac{1}{p}}.
\end{equation}
Then 
\begin{equation} \label{eq:Methods:dLOTEmbCts}
\dLOT(\mu_1,\mu_2) = \| P_c(\mu_1) - P_c(\mu_2) \|_{\Lp(\Omega)}.
\end{equation}
The map $P_c$ is our linear embedding from the Wasserstein space to Euclidean space.
We make the following claims on the embedding.

\begin{proposition}
\label{prop:Methods:LOTCts}
Assume $\Omega\subset\bbR^d$ is bounded and $\sigma\in \cP(\Omega)$ has a density $\rho$ with respect to the Lebesgue measure.
Define $P=P_c$ where $P_c$ is given by~\eqref{eq:Methods:Pc}.
Then, the following holds:
\begin{enumerate}
\item $P(\mu) \in \Lp(\Omega)$ for any $\mu\in \cP(\Omega)$,
\item $P(\sigma) = 0$,
\item $\dLOT(\mu_1,\mu_2) = \|P(\mu_1) - P(\mu_2)\|_{\Lp(\Omega)}$ for any $\mu_1,\mu_2\in \cP(\Omega)$,
\item $\dLOT(\sigma,\mu) = \dWp(\sigma,\mu)$ for any $\mu\in \cP(\Omega)$.
\end{enumerate}
\end{proposition}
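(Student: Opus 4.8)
The plan is to prove the four claims in the order $1 \to 2 \to 3 \to 4$, with the whole argument resting on two elementary observations: the scalar weight $\rho^{1/p}$ pulls out of the $\Lp$-norm, converting an integral against Lebesgue measure into an integral against $\sigma$; and the optimal transport map from $\sigma$ to itself is the identity. Throughout I would write $\dd\sigma = \rho\,\dd x$ and recall that, since $\Omega$ is bounded, every measure in $\cP(\Omega)$ has finite moments, so $\dWp(\sigma,\mu)<\infty$ for all $\mu$.

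For Part~1, I would compute $\|P(\mu)\|_{\Lp(\Omega)}^p$ directly. Since $P(\mu)=(T^\mu-\Id)\rho^{1/p}$ and $\rho^{1/p}$ is a non-negative scalar, for each $x$ one has $|(T^\mu(x)-x)\rho(x)^{1/p}|_p^p = |T^\mu(x)-x|_p^p\,\rho(x)$. Integrating and using $\dd\sigma=\rho\,\dd x$ gives
\[
\|P(\mu)\|_{\Lp(\Omega)}^p = \int_\Omega |T^\mu(x)-x|_p^p \, \dd\sigma(x) = \dWp^p(\sigma,\mu),
\]
the last equality being exactly the defining property of the optimal map $T^\mu$. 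Finiteness of the right-hand side (from boundedness of $\Omega$) yields $P(\mu)\in\Lp(\Omega)$, and this identity will also deliver Part~4 for free.

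For Part~2 the key point is that the identity map attains zero cost in the Monge problem for $\sigma\to\sigma$, hence is optimal; under the standing assumption that $\sigma$ is absolutely continuous this optimal map is essentially unique, so $T^\sigma=\Id$ $\sigma$-almost everywhere and $P(\sigma)=(\Id-\Id)\rho^{1/p}=0$ in $\Lp(\Omega)$. For Part~3 I would simply subtract: the identity terms cancel, giving $P(\mu_1)-P(\mu_2)=(T^{\mu_1}-T^{\mu_2})\rho^{1/p}$, so that the same factoring-out of $\rho$ as in Part~1 produces
\[
\|P(\mu_1)-P(\mu_2)\|_{\Lp(\Omega)}^p = \int_\Omega |T^{\mu_1}(x)-T^{\mu_2}(x)|_p^p\, \dd\sigma(x) = \dLOT^p(\mu_1,\mu_2),
\]
which is precisely the definition of $\dLOT$. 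Part~4 then follows by combining Parts~2 and~3: $\dLOT(\sigma,\mu)=\|P(\sigma)-P(\mu)\|_{\Lp(\Omega)}=\|P(\mu)\|_{\Lp(\Omega)}=\dWp(\sigma,\mu)$ by the computation in Part~1.

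The calculations are routine; the only genuine subtlety — the one step I would take care over — is the justification of $T^\sigma=\Id$ in Part~2. This requires uniqueness of the optimal transport map, which is where the absolute continuity of $\sigma$ (rather than merely the existence of maps assumed earlier) does the work. If one prefers not to invoke uniqueness, it suffices to note that the embedding is defined with respect to a fixed choice of optimal map, and to select the identity as the optimal map for the pair $(\sigma,\sigma)$, which is legitimate since it achieves the infimum.
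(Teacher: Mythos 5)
Your proof is correct and follows essentially the same route as the paper's: factor the scalar weight $\rho^{1/p}$ out of the $\Lp(\Omega)$-norm to convert the Lebesgue integral into an integral against $\sigma$, use finiteness of $\dWp$ on the bounded domain for (1), note $T^\sigma=\Id$ for (2), and combine these for (3) and (4). Your extra care over the justification of $T^\sigma=\Id$ (optimality of the identity plus either uniqueness or an explicit choice of optimal map) is a reasonable refinement of a step the paper states without comment.
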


\begin{proof}
Since $\sigma$ has a density with respect to the Lebesgue measure then transport maps $T^{\mu_i}$, $T^{\mu}$ exist.
Since the $\Wp$ distance is finite (as $\Omega$ is bounded), it follows that $T^{\mu_i}-\Id\in L^p(\sigma)$ which proves (1).
(2) follows directly from $T^{\sigma} = \Id$.
(3) was shown already in~\eqref{eq:Methods:dLOTEmbCts}.
Finally, (4) follows
\[ \dLOT(\sigma,\mu) = \| (T^\mu - \Id) \rho^{\frac{1}{p}} \|_{\Lp(\Omega)} = \| T^\mu - \Id \|_{\Lp(\sigma)} = \dWp(\sigma,\mu) \]
where we use $P(\sigma) = 0$.
\end{proof}

We make a similar definition for discrete measures.
If $\sigma = \sum_{j=1}^n \rho_j \delta_{x_j}$ for some $\{x_i\}_{i=1}^n\subset\bbR^d$ then we define
\begin{equation} \label{eq:Methods:Pd}
[P_d(\mu_i)]_j = (T^{\mu_i}(x_j) - x_j) \rho_j^{\frac{1}{p}}.
\end{equation}
Analogously to the Lebesgue density case we have
\begin{equation*} \label{eq:Methods:dLOTEmbDis}
\dLOT(\mu_1,\mu_2) = |P_d(\mu_1) - P_d(\mu_2) |_p
\end{equation*}
where we recall that $|\cdot|_p$ is the Euclidean $p$-norm: $|x|_p := \sqrt[p]{\sum_{j=1}^n |x_j|^p}$.
For discrete $\sigma$ the map $P_d$ is our linear embedding from the Wasserstein space to Euclidean space.

\begin{proposition}
\label{prop:Methods:LOTDis}
Let $\{x_i\}_{i=1}^n \subset \bbR^d$ and assume $\sigma=\frac{1}{n} \sum_{i=1}^n \delta_{x_i}$.
Define $P=P_d$ where $P_d$ is given by~\eqref{eq:Methods:Pd} with $\rho_j = \frac{1}{n}$.
Then, the following holds:
\begin{enumerate}
\item $P(\mu) \in \ell^p$ for any $\mu=\frac{1}{n} \sum_{j=1}^n \delta_{y_j}$,
\item $P(\sigma) = 0$,
\item $\dLOT(\mu_1,\mu_2) = |P(\mu_1) - P(\mu_2)|_p$ for any $\mu_1=\frac{1}{n} \sum_{j=1}^n \delta_{y_j}$, $\mu_2=\frac{1}{n} \sum_{j=1}^n \delta_{z_j}$,
\item $\dLOT(\sigma,\mu) = \dWp(\sigma,\mu)$ for any $\mu=\frac{1}{n} \sum_{j=1}^n \delta_{y_j}$.
\end{enumerate}
\end{proposition}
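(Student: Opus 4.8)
The plan is to transcribe the proof of Proposition~\ref{prop:Methods:LOTCts} into the discrete setting, replacing the $\Lp(\sigma)$ integrals by finite sums weighted by $\rho_j=\frac1n$. The first thing I would secure is that the maps $T^{\mu}$ used in the definition~\eqref{eq:Methods:Pd} actually exist and are optimal. Since $\sigma$ and $\mu$ are both uniform measures carried by exactly $n$ atoms, the Kantorovich problem between them is a linear program over doubly stochastic matrices (scaled by $\frac1n$); by the Birkhoff--von Neumann theorem its extreme points are permutation matrices, so some optimal plan is induced by a bijection $T^{\mu}:\{x_i\}\to\{y_j\}$ with $T^{\mu}_*\sigma=\mu$. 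This is exactly the discrete Monge--Kantorovich equivalence recorded (in the $\TLp$ setting, of which the present cost is the signal-free special case) in part~5 of Proposition~\ref{prop:Methods:TLpProp}, and it yields the optimality identity
\[ \dWp^p(\sigma,\mu) = \sum_{j=1}^n |T^{\mu}(x_j)-x_j|_p^p\,\rho_j = \int_\Omega |x-T^{\mu}(x)|_p^p\,\dd\sigma(x). \]

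With existence in hand, claims (1) and (2) are immediate. For (1), each coordinate $[P(\mu)]_j=(T^{\mu}(x_j)-x_j)\rho_j^{1/p}$ is a vector in $\bbR^d$ and there are only $n$ of them, so $P(\mu)$ is a finite-dimensional vector and in particular lies in $\ell^p$. For (2), the optimal self-map is $T^{\sigma}=\Id$, so every coordinate $[P(\sigma)]_j=(x_j-x_j)\rho_j^{1/p}$ vanishes.

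The core computation is (3), and here I would subtract the two embeddings coordinatewise: the common $-x_j$ term cancels, leaving $[P(\mu_1)]_j-[P(\mu_2)]_j=(T^{\mu_1}(x_j)-T^{\mu_2}(x_j))\rho_j^{1/p}$. Raising the Euclidean $p$-norm to the $p$-th power converts $\rho_j^{1/p}$ into $\rho_j$ and recombines the sum into an integral against $\sigma$,
\[ |P(\mu_1)-P(\mu_2)|_p^p = \sum_{j=1}^n |T^{\mu_1}(x_j)-T^{\mu_2}(x_j)|_p^p\,\rho_j = \int_\Omega |T^{\mu_1}(x)-T^{\mu_2}(x)|_p^p\,\dd\sigma(x), \]
which is precisely $\dLOT^p(\mu_1,\mu_2)$ by definition. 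Claim (4) then follows by specialising (3) to $\mu_1=\sigma$, using $P(\sigma)=0$ from (2) together with the optimality identity for $\dWp(\sigma,\mu)$ from the first step.

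The only genuine subtlety, and hence the step I would treat most carefully, is the first one: asserting that the minimiser of the discrete Kantorovich problem may be taken to be a true transport \emph{map} rather than merely a plan, and that this map realises $\dWp$. Everything downstream is routine algebra with the finite $\ell^p$ norm. The Birkhoff argument (equivalently part~5 of Proposition~\ref{prop:Methods:TLpProp} applied to the cost $|x-y|_p^p$) supplies exactly this, so no new obstacle arises, and the argument is essentially the discrete mirror of Proposition~\ref{prop:Methods:LOTCts}.
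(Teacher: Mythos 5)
Your proposal is correct and follows essentially the same route as the paper: the paper's proof simply notes that the particular (uniform, discrete, equal-sized) form of the measures guarantees the transport maps exist and then declares the rest analogous to the continuous case (Proposition~\ref{prop:Methods:LOTCts}), which is exactly the structure you give, with the Birkhoff--von Neumann/Proposition~\ref{prop:Methods:TLpProp}(5) argument making the existence step explicit. You have merely written out the details the paper leaves implicit; no divergence in approach.
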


\begin{proof}
The particular forms of all the measures $\sigma,\mu,\mu_1,\mu_2$ is enough to guarantee that transport maps $T^{\mu}, T^{\mu_1}, T^{\mu_2}$ all exist.
The proof is then analogous to the proof of Proposition~\ref{prop:Methods:LOTCts}.
\end{proof}

\begin{example}
	Let us consider how to generate a new image using the linear embedding.
	Suppose we have a reference measure $\sigma\in \cP(\bbR^d)$ with density $\rho$ and a set of measures $\{\mu_i\}_{i=1}^N\subset \cP(\bbR^d)$ with optimal transport maps $T^{\mu_i}$ which form the linear embedding through $\alpha_i=P(\mu_i) = (T^{\mu_i}-\Id)\rho^{\frac{1}{p}}$.
	Given a new point $\alpha$ in the linear space we can define a transport map by $T = \alpha \rho^{-\frac{1}{p}} + \Id$.
	We generate a new image by $\mu = T_* \sigma$.
	Note that to generate the new image we only required the reference measure $\sigma$ and a new point $\alpha$ in the linear space.
	However, in order to generate the new point $\alpha$ it will often be sensible to use the statistics of $\{\alpha_i\}_{i=1}^N$, for example see~\cite{park18}.
\end{example}

In both the Lebesgue density and uniform discrete case $P$ preserves the $\Wp$ distance between the reference measure and any given $\mu$ (where for discrete measures $\mu$ is also uniform discrete).
Between $\mu_1$, $\mu_2$ one approximates $\dWp(\mu_1,\mu_2) \approx \dLOT(\mu_1, \mu_2)$.
The next section proposes our extension of the $\LWp$ framework to the $\TLp$ distance.

\subsection{A Linear \texorpdfstring{$\TLp$}{TLp} Framework}\label{section:LTLP}

%\mtold{Why didn't we use tangent notation in the previous section???}

%\mtold{Need to warn that we develop the framework first for when there exists a transportation map!!!}

In this section, we propose a linear $\TLp$ framework.
Recall that the $\TLp$ distance can be defined as an optimal transport distance between measures supported on the graph of a function.
Let $(\sigma,h)\in\TLp$ be the $\TLp$ reference signal and $\tilde{\sigma} = (\Id\times h)_*\sigma\in \cP_p(\Lambda)$ the measure in $\Lambda=\Omega\times\bbR^m$ with support on the graph of $h$.
% be a reference measure on the graph of a function, say $h$ - thus the pair $(\sigma, h)$ is a \textit{reference signal} where $\sigma$ is the projection of $\tilde{\sigma}$ onto $\Omega$, i.e. $\sigma = (P_\Omega)_* \tilde{\sigma}$.
%We note that $\tilde{\sigma} = (\Id \times h)_*\sigma \in \mathbb{P}_p(\Lambda)$, where $\Lambda = \Omega\times\bbR^m\subset\bbR^d$, which is the space of measures defined on $\Lambda$ with finite $p^{th}$ moment.
%Now define $\tau_{\tilde{\sigma}}$ to be the space of vectors at $\tilde{\sigma}$ such that
%\[ \tau_{\tilde{\sigma}} = \left\{v : \Lambda \to \mathbb{R}^d : \int_{\Lambda} |v|^p \, \dd\tilde{\sigma} < \infty \right\}. \]
%We further define the following equivalence relation on this space $v_1 \sim v_2 \text{ if and only if } v_1 = v_2 \,\,\tilde{\sigma}$-a.e This space facilitates the definition of the following norm for any $ v \in \tau_{\tilde{\sigma}}$
%\[ \lVert v \rVert = \left(\int_{\Lambda} |v|^p \, \dd\tilde{\sigma}\right)^{1/p}. \]
%This is clearly a norm by Minkowski's inequality. In the case $p = 2$, we can assert the following inner product on this space as well
%\[ \langle v_1,v_2\rangle_{\tilde{\sigma}} = \int_{\Lambda} v_1 \cdot v_2 \,\dd\tilde{\sigma}, \]
%for any $v_1, v_2 \in \tau_{\tilde{\sigma}}$.
Let $(\mu_i,f_i)\in\TLp$, $i=1,2$, and define $\tilde{\mu}_i = (\Id\times f_i)_*\mu_i$.
As in the previous section we will assume that optimal transport maps $\tilde{T}^{\tilde{\mu}_i}:\Lambda\to\Lambda$ exist between $\tilde{\sigma}$ and $\tilde{\mu}_i$, i.e.
\[ \dWp(\tilde{\sigma},\tilde{\mu}_i) = \sqrt[p]{\int_\Lambda |\boldsymbol{x} - \tilde{T}^{\tilde{\mu}_i}(\boldsymbol{x})|^p_p \, \dd \tilde{\sigma}(\boldsymbol{x})}. \]
Recall that we can write $\tilde{T}^{\tilde{\mu}_i}$ in the form $\tilde{T}^{\tilde{\mu}_i}(\boldsymbol{x}) = (T^{\mu_i}(x),f_i(T^{\mu_i}(x)))$ where $\boldsymbol{x} = (x,y) \in \bbR^d \times \bbR^m$ and $T^{\mu_i}$ is the optimal plan for the Monge problem~\eqref{equation::monge} between $\mu_i$ and $\sigma$ with cost $c(x,y) = |x-y|_p^p + |f(x) - g(y)|_p^p$.
The \emph{Linear Transportation $\Lp$ Distance} ($\LTLp$) is defined as
\[ \dLTLp((\mu_1,f_1),(\mu_2,f_2)) := \sqrt[p]{\int_\Lambda |\tilde{T}^{\tilde{\mu}_1}(\boldsymbol{x}) - \tilde{T}^{\tilde{\mu}_2}(\boldsymbol{x}) |^p_p \, \dd \tilde{\sigma}(\boldsymbol{x})}. \\
  \]
Simple manipulations of the $\LTLp$ distance imply
\begin{align*}
\dLTLp((\mu_1,f_1),(\mu_2,f_2)) & =  \| \tilde{T}^{\tilde{\mu}_1} - \tilde{T}^{\tilde{\mu}_2} \|_{\Lp(\tilde{\sigma})} \\
 & = \sqrt[p]{\int_\Omega |T^{\mu_1}(x) - T^{\mu_2}(x)|_p^p + |f_1(T^{\mu_1}(x)) - f_2(T^{\mu_2}(x))|_p^p \, \dd \sigma(x)}.
\end{align*}

Following the construction of the embedding in the previous section we go directly to the discrete case (since $\tilde{\sigma}$ has support on the graph it cannot have a density with respect to the Lebesgue measure).
We assume that $\sigma = \sum_{i=1}^n \rho_j \delta_{x_j}$ for some $\{x_j\}_{j=1}^n \subset \bbR^d$ and we define
\begin{align}
[P_d((\mu_i,f_i))]_j & = (T^{\mu_i}(x_j) - x_j)\rho_j^{\frac{1}{p}} \label{eq:Methods:LTLpPd} \\
[Q_d((\mu_i,f_i))]_j & = (f_i(T^{\mu_i}(x_j)) - h(x_j)) \rho_j^{\frac{1}{p}} \label{eq:Methods:LTLpQd} \\
\tilde{P}_d((\mu_i,f_i)) & = (P_d((\mu_i,f_i)),Q_d((\mu_i,f_i)) ). \label{eq:Methods:LTLptildePd}
\end{align} 
Given this definition we can write
\[ \dLTLp((\mu_1,f_1),(\mu_2,f_2)) = | \tilde{P}_d((\mu_1,f_1)) - \tilde{P}_d((\mu_2,f_2)) |_p. \]
The map $\tilde{P}_d$ embeds our signals into Euclidean space.
We have the following properties of the embedding (analogous to Propositions~\ref{prop:Methods:LOTCts} and~\ref{prop:Methods:LOTDis}).

\begin{proposition}
\label{prop:Methods:LTLpDis}
Let $\{x_i\}_{i=1}^n \subset \bbR^d$ and assume $\sigma=\frac{1}{n} \sum_{i=1}^n \delta_{x_i}$.
Define $\tilde{P}=\tilde{P}_d$ where $\tilde{P}_d$ is given by~\textnormal{(}\ref{eq:Methods:LTLpPd}-\ref{eq:Methods:LTLptildePd}\textnormal{)} with $\rho_j = \frac{1}{n}$.
Then, the following holds:
\begin{enumerate}
\item $\tilde{P}((\mu,f)) \in \ell^p$ for any $(\mu,f)\in \TLp$ with $\mu=\frac{1}{n} \sum_{j=1}^n \delta_{y_j}$,
\item $P((\sigma,h)) = 0$,
\item $\dTLp((\mu_1,f_1),(\mu_2,f_2)) = |\tilde{P}((\mu_1,f_1)) - \tilde{P}((\mu_2,f_2))|_p$ for any $(\mu_i,f_i)\in \TLp$ with $\mu_1=\frac{1}{n} \sum_{j=1}^n \delta_{y_j}$, $\mu_2=\frac{1}{n} \sum_{j=1}^n \delta_{z_j}$,
\item $\dTLp((\sigma,h),(\mu,f)) = \dTLp((\sigma,h),(\mu,f))$ for any $(\mu,f)\in\TLp$ with $\mu=\frac{1}{n} \sum_{j=1}^n \delta_{y_j}$.
\end{enumerate}
\end{proposition}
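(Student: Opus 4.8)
The plan is to reduce all four assertions to the discrete linear Wasserstein result Proposition~\ref{prop:Methods:LOTDis} through the graph identification~(\ref{eq:Methods:TLpIdentWp1}-\ref{eq:Methods:TLpIdentWp2}). First I would lift the data to $\Lambda = \Omega\times\bbR^m$: the reference becomes $\tilde{\sigma} = (\Id\times h)_*\sigma = \frac1n\sum_{j=1}^n \delta_{\boldsymbol{x}_j}$ with $\boldsymbol{x}_j = (x_j,h(x_j))$, and each signal becomes $\tilde{\mu}_i = (\Id\times f_i)_*\mu_i$, again a uniform measure on $n$ atoms. Because $\tilde{\sigma}$ and each $\tilde{\mu}_i$ are uniform discrete measures supported on exactly $n$ points, part 5 of Proposition~\ref{prop:Methods:TLpProp} supplies the optimal maps $\tilde{T}^{\tilde{\mu}_i}$ and guarantees the Monge--Kantorovich equivalence in $\Lambda$; this is precisely the standing hypothesis under which Proposition~\ref{prop:Methods:LOTDis} operates.

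The one genuinely new computation is to verify that $\tilde{P}_d$ from~(\ref{eq:Methods:LTLpPd}-\ref{eq:Methods:LTLptildePd}) coincides with the discrete linear Wasserstein embedding of $\tilde{\mu}$ relative to $\tilde{\sigma}$. Substituting the decomposition $\tilde{T}^{\tilde{\mu}}(\boldsymbol{x}) = (T^{\mu}(x),f(T^{\mu}(x)))$ recorded just above the statement gives $(\tilde{T}^{\tilde{\mu}}(\boldsymbol{x}_j) - \boldsymbol{x}_j)\rho_j^{1/p} = ((T^{\mu}(x_j)-x_j)\rho_j^{1/p},\,(f(T^{\mu}(x_j))-h(x_j))\rho_j^{1/p}) = ([P_d]_j,[Q_d]_j)$, so the two embeddings agree coordinate by coordinate.

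With this identification in hand each claim follows. Claim (1) is immediate, since $\tilde{P}((\mu,f))$ is a finite vector (both $T^{\mu}(x_j)-x_j$ and $f(T^{\mu}(x_j))-h(x_j)$ take finite values on the $n$-point support) and hence lies in $\ell^p$. Claim (2) holds because the optimal map from $\tilde{\sigma}$ to itself is the identity, so $T^{\sigma}=\Id$ and the $Q_d$ block vanishes as well. Claim (3)---which should read $\dLTLp$ rather than $\dTLp$ on the left---is merely the definition of $\dLTLp$ rewritten through the coordinatewise identity of the previous paragraph. For Claim (4) (again $\dLTLp$ on the left), using $\tilde{P}((\sigma,h))=0$ one obtains $\dLTLp((\sigma,h),(\mu,f)) = \|\Id - \tilde{T}^{\tilde{\mu}}\|_{\Lp(\tilde{\sigma})} = \dWp(\tilde{\sigma},\tilde{\mu}) = \dTLp((\sigma,h),(\mu,f))$, the last equality being the graph identification of $\TLp$ with $\Wp$.

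The main obstacle I anticipate is not a calculation but the care needed in the reduction: the lifted measures $\tilde{\sigma},\tilde{\mu}$ live on graphs and so are never absolutely continuous with respect to Lebesgue measure on $\Lambda$. Consequently the continuous embedding Proposition~\ref{prop:Methods:LOTCts} is unavailable, and one must rely on the discrete version together with the equal-mass Monge--Kantorovich equivalence (part 5 of Proposition~\ref{prop:Methods:TLpProp}) to secure existence of the $\tilde{T}^{\tilde{\mu}_i}$ in $\Lambda$. Once that existence is established, the remaining verifications are routine transcriptions of the $\LWp$ argument.
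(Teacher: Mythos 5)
Your proposal is correct and follows essentially the same route as the paper: the paper's own proof simply invokes Proposition~\ref{prop:Methods:TLpProp}(5) for the existence of the transport maps and then refers back to the argument of Proposition~\ref{prop:Methods:LOTCts}, which is exactly the reduction you carry out (with the coordinatewise identification of $\tilde{P}_d$ with the lifted $\LWp$ embedding made explicit). Your reading of parts (3) and (4) as containing typos --- the left-hand sides should be $\dLTLp$ --- is also the correct interpretation, matching the analogous statements in Proposition~\ref{prop:Methods:LOTDis}.
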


\begin{proof}
By Proposition~\ref{prop:Methods:TLpProp}(5) the transport maps $T^{\mu}, T^{\mu_1}, T^{\mu_2}$ exist.
The rest of the proof follows as in the proof of Proposition~\ref{prop:Methods:LOTCts}.   
\end{proof}

As for $\LWp$ we have that $\LTLp$ is exactly $\TLp$ when comparing with the reference measure, i.e. $\dLTLp((\sigma,h),(\mu,f)) = \dTLp((\sigma,h),(\mu,f))$.
When we are comparing two measures, neither of which are the base measure, then we make the approximation
\[ \dLTLp((\mu_1,f_1),(\mu_2,f_2)) \approx \dTLp((\mu_1,f_1),(\mu_2,f_2)). \]
When $(\mu_i,f_i)\neq (\sigma,h)$ then the approximation is only formal.
In particular, to derive quantitative bounds between $\LTLp$ and $\TLp$ requires a detailed analysis of the $\TLp$ space, including quantitative estimates on curvature.
To the authors knowledge there is not yet such a bound between the $\Wp$ and $\LWp$ distances, although recently \cite{Moosmuller::2020} have obtained bounds for some perturbations.

\begin{example}
	Let us consider how to generate a new $\TLp$ image from the linear space.
	We recall that colour images can be represented by $(\mu,f)$ where $\{x_i\}_{i=1}^n$ are the locations of pixels (which are uniform across $[0,1]\times [0,1]$), $\mu = \frac{1}{n}\sum_{i=1}^n \delta_{x_i}$ and $f:\{x_i\}_{i=1}^n \to \bbR^3$ represents the RGB values for each pixel.
	We take a reference image $(\sigma,h)$ of the same form (in particular $\sigma = \frac{1}{n}\sum_{i=1}^n \delta_{x_i}$), and note that the embedding (mapping $\tilde{\sigma}=(\Id\times h)_*\sigma$ to $\tilde{\mu}=(\Id\times f)_*\mu$) is given by $\alpha^{\tilde{\mu}} = (\alpha_1^{\tilde{\mu}},\dots,\alpha_n^{\tilde{\mu}})\in \bbR^{5n}$ where
	\[ \alpha_i^{\tilde{\mu}} = \frac{1}{n^{\frac{1}{p}}}\l \tilde{T}^{\tilde{\mu}}(x_i) - (x_i,h(x_i)) \r \in \bbR^5. \]
	To generate a new image we need to invert this mapping.
	Let $\alpha = (\alpha_1,\dots, \alpha_n) \in \bbR^{5n}$.
	We define $\tilde{T}_i = n^{\frac{1}{p}} \alpha_i + (x_i,h(x_i)) \in \bbR^5$.
	Then $y_i = (\tilde{T}_i)_{1:2}\in\bbR^2$ are the location of the pixels and $c_i = (\tilde{T}_{3:5})$ are the RGB values in the new image.
	In the $\TLp$ space the new image is represented by $(\nu,g)$ where $\nu = \frac{1}{n} \sum_{i=1}^n \delta_{y_i}$ and $g(y_i) = c_i$. 
	This is only well defined if $y_i$ are all unique.
	If not, then we use \emph{barycentric projection} (see also Section~\ref{subsec:Methods:Int}), for example if $y_i = y$ for all $i\in \mathcal{I}$ then we define $g(y) = \frac{1}{|\mathcal{I}|} \sum_{\mathcal{I}} c_i$ to be the empirical average.
\end{example}

\subsection{Geodesics and Interpolation} \label{subsec:Methods:Int}

The space $(\cP_p(\Omega), \dWp)$ is a \emph{geodesic space}, with easily characterisable geodesics. Letting $\pi^\dagger \in \Pi(\sigma, \mu)$ be the optimal transport plan that minimises the transport problem given by~\eqref{equation::Kantorovich}, we define $I_t:\Omega \times \Omega \to \Omega$, where $t \in [0,1]$, to be a linear interpolation, as follows:
\[ I_t(x,y) = (1-t)x + ty. \]
Then the geodesic in $\Wp$ is given by $\mu(t) = [I_t]_*\pi^\dagger$.
When there exists transport maps, i.e. $\pi^\dagger = (\Id \times T^\mu)_* \mu$ then the geodesic can be written $\mu_t = [T_t^\mu]_*\mu$ where $T_t^\mu(x) = I_t(x,T^\mu(x)) = (1-t)x + tT^\mu(x)$.
%Notice that this means that the geodesic between $\sigma$ and $\mu$ in Wasserstein space map directly to the geodesic between the linear embeddings of $\sigma$ and $\mu$ in LOT, i.e. $P(\sigma) = 0$ and $P(\mu)$ (where $P$ is defined as $P=P_c$ in~\eqref{eq:Methods:Pc} or $P=P_d$ in~\eqref{eq:Methods:Pd}).
Let $P$ be defined by $P=P_c$ in~\eqref{eq:Methods:Pc} then since 
%Now since, in the continuous density case,
\[ P(\mu_t) = (T_t^\mu - \Id) \rho^{\frac{1}{p}} = \l (1-t)\Id + tT^\mu  - \Id \r \rho^{\frac{1}{p}} = t\l T^\mu - \Id \r \rho^{\frac{1}{p}} = t P(\mu) \]
we see that the projection of the geodesic onto the Euclidean space is the geodesic between the projections.
In particular, the geodesic between $P(\sigma) = 0$ and $P(\mu)$ in Euclidean space is simply $t P(\mu)$.
The same argument holds in the discrete case where $P=P_d$ is defined by~\eqref{eq:Methods:Pd}.
Since the projection is invertible (at least in some open neighbourhood of the reference measure) we can map from the Euclidean embedding back to $\Wp$. Notably, this allows one to translate principal eigenvectors in PCA space (of the linear embedding) into modes of variation in $\Wp$, see~\cite{Wang:2013} for more details.

This argument does not directly apply to the $\TLp$ space since, by the following remark, the $\TLp$ space does not permit geodesics. %The counter example is contained in the following remark.

\begin{remark}
Consider the measure $\mu = \frac12 \delta_0 + \frac12\delta_1$ and
the functions $f(0) = 0, f(1) = 10, g(0) = 10, g(1) = 0.$ Then the
transport between $(\mu,f)$ and $(\mu,g)$ is from $(0,0)$ to $(1,0)$ and from
$(1,10)$ to $(0,10)$. The "half way" point would be the measure
$\mu_{\frac12} = \delta_\frac12$ and the function that takes the value
$10$ and $0$ at $x=\frac12$, which is not a function.
\end{remark}

However, this does not prevent us from interpolating and visualising modes of variation. Indeed, let $\tilde{T}^{\tilde{\mu}}(\boldsymbol{x}) = (T^{\mu}(x),f(T^{\mu}(x)))$ be the optimal $\TLp$ map pushing $(\sigma,h)$ to $(\mu,f)$, then the map
\[ \tilde{T}_t^{\tilde{\mu}}(\boldsymbol{x}) = \l (1-t)x + tT^{\mu}(x), (1-t)h(x) + t(f(T^{\mu}(x))) \r \]
interpolates between the signals $(\sigma, h)$ and $(\mu, f)$. 
In fact, this is the geodesic in $\Wp(\Omega\times\bbR^m)$; that is $\tilde{\mu}_t = \tilde{T}_t^{\tilde{\mu}}$ is the geodesic in $\Wp(\Omega\times\bbR^m)$ between $\tilde{\sigma}$ and $\tilde{\mu}$.
Although we can invert $\tilde{P}=\tilde{P}_d$ (defined in~\eqref{eq:Methods:LTLpPd}) in the Wasserstein space, i.e. for all $p\in \bbR^{nm}$ there exists $\tilde{\nu}\in \cP(\Omega\times \bbR^m)$ such that $P(\tilde{\nu}) = p$ (note that this is $P$ and not $\tilde{P}$ since we are inverting with respect to $\Wp(\Omega\times\bbR^m)$) we cannot guarantee that $\tilde{\nu}$ can be written in the form $\tilde{\nu} = (\Id \times g)_*\nu$ for some $g\in \Lp(\nu)$.
Hence, we cannot in general invert the linear embeddings from $\TLp$ back into $\TLp$.
Instead we use an approximate inversion.
We define $\tilde{P}^{-1}(p) = (\nu,\bar{g})$ where $\tilde{\nu}$ satisfies $P(\tilde{\nu}) = p$, $\bar{g}(x) = \mathbb{E} \tilde{\nu}_x$ and (where we use disintegration of measures) $\tilde{\nu} = \tilde{\nu}_x \otimes \nu$ with the latter meaning
\[ \tilde{\nu}(A\times B) = \int_A \tilde{\nu}_x(B) \, \dd \nu(x) \qquad \text{for all measureable } A\subset \Omega, B \subset \bbR^m. \]
In other words, we define the "inverse" map from the linear embedding of $\TLp$ back into $\TLp$ as the inverse map in $\Wp(\Omega\times\bbR^m)$ and projected onto $\TLp(\Omega)$:
\begin{align*}
\text{1: } & p \in \left\{ \text{linear } \TLp \text{ space} \right\} \mapsto \tilde{\nu} \in \cP(\Omega\times \bbR^m) \text{ using inverse of } \LWp \text{ in the space } \Omega\times\bbR^m \\
\text{2: } & \tilde{\nu} \mapsto (\nu,g)\in\TLp \text{ by projecting } \Wp(\Omega\times\bbR^m) \text{ onto } \TLp(\Omega).
\end{align*}
The projection onto $\TLp$ is done by taking the mean across each fibre in $x\in\Omega$.
We note that the second step does not have to be performed in the linear Wasserstein setting.
With this definition we are also able to visualise any point in the linear embedding in $\TLp$ space.

%and therefore the velocity of this interpolation is constant in time:
%\begin{equation}
%\frac{\partial I}{\partial t} = (T(x) - x, g \circ T(x) - h).
%\end{equation}
%This observation allows points in Euclidean space can be recovered in the $TL^p$ space. To this end, let $T_v = Id - v$ be a perturbation in euclidean space and let $T^{-1}_v \circ T_v = Id$ define the inverse. Given a reference signal $(\sigma, h)$, we can then recover the euclidean perturbation in $TL^p$ space. The transforms are given as follows:
%\begin{align}
%\sigma_v(x) &= \det(D_{T_v^{-1}}(x)) \sigma \circ (T^{-1}_{v}(x))\\
%h_v(x) &= \det(D_{T_v^{-1}}(x)) h \circ (T^{-1}_{v}(x)).
%\end{align}
%Assuming $T_v^{-1} \in C^1(\Omega)$ and the Jacobi determinant of $T_v$ is non-zero at $x$, then inverse $T_v^{-1}$ is guaranteed to exist in a local neighbourhood of $x$, by the inverse function theorem.

\section{Results} \label{sec:Results}

In this section we apply the $\LTLp$ framework to three real world examples to auslan (Australian sign language), breast cancer histopathology and financial time series.
We also include two synthetic examples and a further application to cell morphometry in the appendix.
Throughout we will choose $p=2$.

\subsection{Application to Auslan Data} \label{subsec:Results:Auslan}

We apply the transportation methodology presented in this manuscript to the Australian Sign language (Auslan) dataset~\cite{Kadous:2002}. A native Auslan signer was recorded, using fifth dimension technology gloves, making $95$ different signs repeated over a period of $9$ weeks. The sign was repeated $3$ times at each recording, thus each word was measured $27$ times. This means there are a total of $2565$ signs in the dataset. Each measurement is considered as a multivariate time-series. The measurements taken for each hand are the $x,y,z$ positions, along with roll, pitch and yaw. In addition, the bend of each of the $5$ fingers is recorded. Thus at each frame $22$ measurements are observed. We consider the Auslan data as functions $f_i:\mathbb{R} \to \mathbb{R}^{22}$, for $i = i,..., 2565$. We truncate the number of time frames to $44$ because little variation was observed past this point. The goal of this task is to classify signs given $f_i$ $i = i,..., 2565$ as input to $95$ possible words (labels) as output.
%\vspace{\baselineskip}

We apply the $\LTLp$ and $\LWp$ frameworks to this dataset. Since $\TLp$ can handle multi-channel signals, no additional pre-processing was needed. However, to apply $\Wp$ additional pre-processing was required. Firstly, all signals were made positive and then the mean was taken so that there was only a signal channel with positive values. We then normalised so the signal integrated to unity. A linear embedding was obtained as described in previous sections. Once this linear embedding is obtained, we use the 1 nearest neighbour (1NN) algorithm to predict the signs from the linear embedding of the signals. 
As an assessment of performance we use the macro-F1 score (the harmonic mean of the precision and recall)~\cite{He:2008}.
We assess performance with a $5$-fold cross-validation framework and repeat $100$ times to produce a distribution of scores. In addition, we compare to the standard $\TLp$ methodology since for this particular data set, even though costly it is possible to perform the full computation. %and also linear $\TWkp$ by including the gradient of the spatial positions $x,y,z$ as additional information. 
We also recorded timings for each of the methods.
%\vspace{\baselineskip}

Table \ref{table::timings} shows that the linear transportation methods are considerably faster than the full transportation methods.
Indeed the $\LTLp$ distance was on the order of magnitude of ten's of seconds, whilst the full $\TLp$ took several hours.
Figure \ref{figure:AuslanCompare} demonstrates that our proposed $\LTLp$ method significantly outperforms the $\LWp$ approach (T-test, $p < 10^{-4}$) on the Auslan data. 
%We also see that providing derivative information of the signals does not lead to any improvement in performance. 
There is a loss in classification ability of $\LTLp$ versus the $\TLp$ method, which is unsurprising as the linear transportation method is approximate. However, on the Auslan dataset we observe this difference to be small and this minor improvement comes at computational cost orders of magnitude greater.

\begin{table}[ht!]
	\begin{center}
		\begin{tabular}{ |c|c|c|c| } 
			\hline
			Application & $\LWp$ & $\LTLp$ & $\TLp$\\
			\hline
			\hline
%			1-D Synthetic &  & 1.3 & 0.2 & 90.2\\ 
%			\hline
%			2-D Synthetic &  & 7.1 & 16.9 & 707.2\\  
%			\hline
%			Cell Morphometry &  & 242.1 & 512.7 & 161080\\
%			\hline 
			Auslan & 12.1 & 13.0 & 91200\\
			\hline
			Breast Cancer Histopathology & 25407.0 & 2919.8 & $>345600$\\
			\hline
			Financial Time Series & 39.5 & 192.3 & - \\ % Wp times for FTS 1622.1   
			\hline
		\end{tabular}
	\end{center}
	\caption{CPU times in seconds to compute each transportation method on each dataset. Computation was halted after 4 CPU days ($=345600$ seconds).} 
    \label{table::timings}
\end{table}

\begin{figure}[ht]
	\begin{subfigure}[t]{0.48\textwidth}
		\centering
		\includegraphics[width=0.8\textwidth]{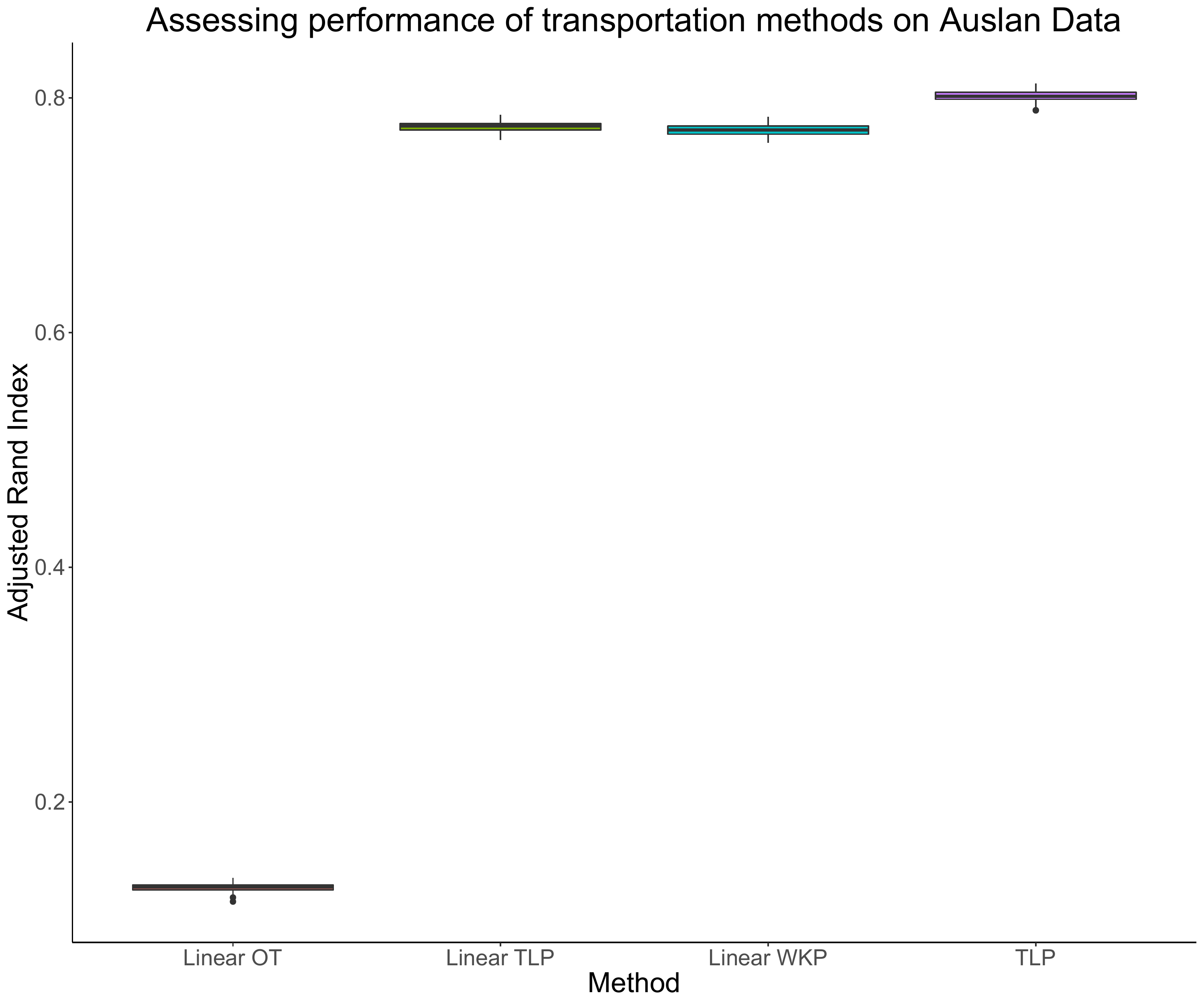}
		\caption{}
	\end{subfigure}
	\begin{subfigure}[t]{0.48\textwidth}
		\centering
		\includegraphics[width=0.95\textwidth]{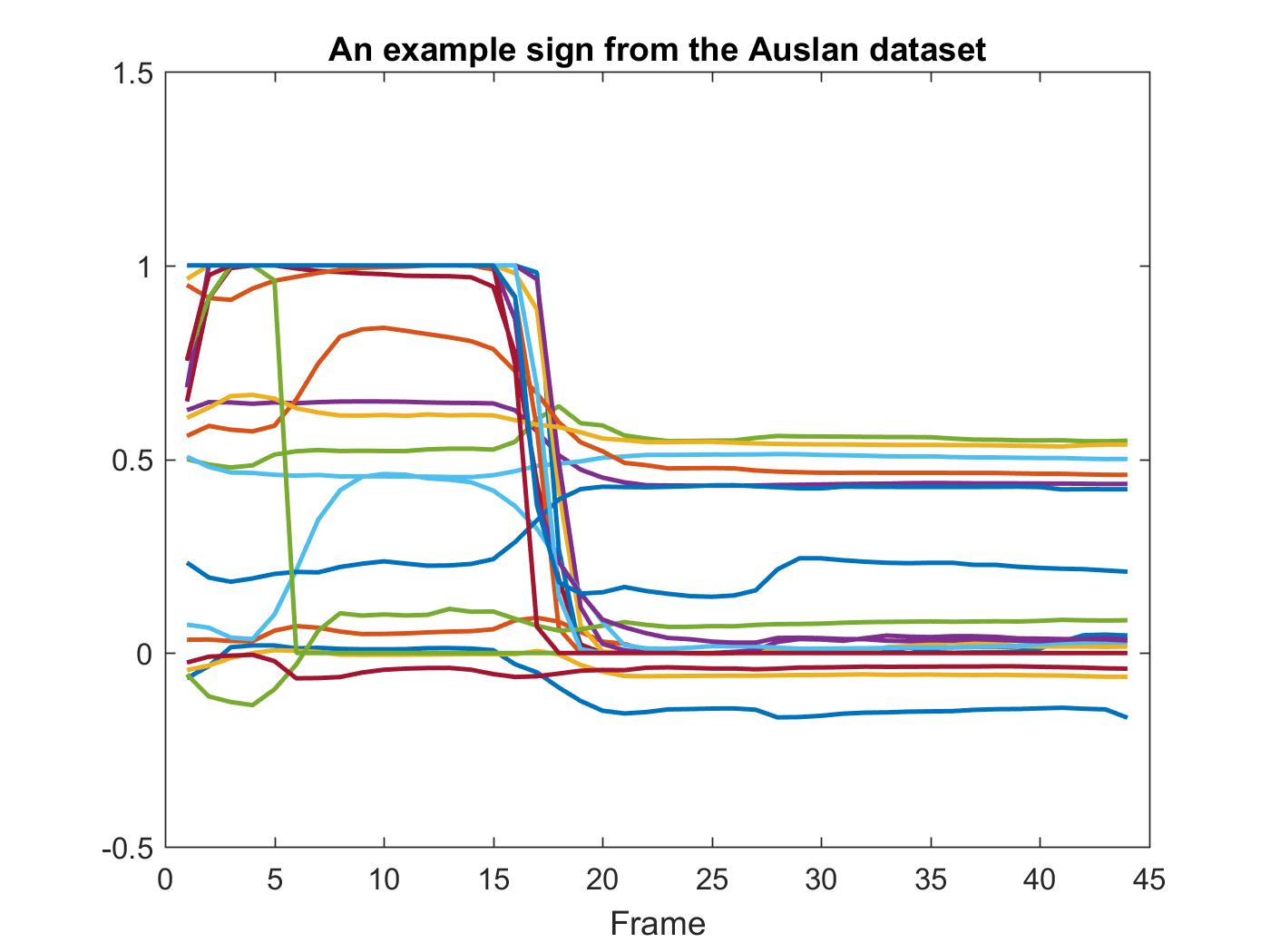}
		\caption{}
	\end{subfigure}
	\caption{(a) Distributions of macro-F1 scores for different transportations methods. Evidently $\LTLp$ significantly outperforms $\LWp$. %\mt{[Olly: Can you remove linear $TW^{k,p}$ from the plot, and add $\Wp$ if we have it? Also, could we change the x-axis labels to read $\Wp$, $\LWp$, $\TLp$ and $\LTLp$, and the y-axis says Rand index, is this the same thing as macro F1-score?]} \oc{Removed TWkp, didn't run Wp, changes x-axis and y-axis labels} 
	(b) An example multi-variate signal from the Auslan dataset. This represent a single word from a native signer. Each line represents data from fifth dimension technology gloves. For example one line represents the $x$ position of the left hand for a total of $44$ frames. A total of $22$ different lines represent a $22$ different measurements across the frames}.
	\label{figure:AuslanCompare}
\end{figure}
%\clearpage

\subsection{Application to Breast Cancer Histopathology} \label{subsec:Results:Cancer}

In this section, we demonstrate the applicability of $\LTLp$ to images from breast cancer histopathology. Invasive Ductal Carcinoma is an aggressive and common form of breast cancer and deep learning based approaches have been used to construct classifiers to analyse such data \cite{Cruz:2014, Janowczyk:2016}. We analyse these datasets using transportation based approaches. We randomly sample $100$ images each from two patients $1$ healthy and $1$ cancerous, totalling 200 images. Each image is on a $50 \times 50$ pixel grid. To apply $\LWp$ to these images we first convert the images to a single intensity channel and renormalise so that the intensities integrate to unity. We apply the $\LTLp$ approach without any ad-hoc preprocessing, since it can be directly applied to un-normalised multi-channelled images. We compute transport maps in each case using entropy regularised approaches and then linearly embed these images %, in both the linear OT framework and linear $\TLp$ framework,
as described in earlier sections. We visualise the linear embeddings using PCA. In addition, we again employ the 1NN classifier using the same framework as in the Auslan application. We report distributions of macro-F1 scores for both $\LWp$ and $\LTLp$.

From Table~\ref{table::timings} we see that the computational cost of the $\LTLp$ and $\LWp$ distances differs significantly more than might be expected. 
This is due to the $\LTLp$ distance converging more quickly (and therefore needing fewer iterations than the $\LWp$ distance (and conversely in the financial time series example).
The $\TLp$ distance was not included as a comparison as it was too expensive to compute.

Figure \ref{figure:cancerCompare} demonstrates clear differences between the $\LTLp$ embedding and $\LWp$ embedding. The cancerous and healthy images separate more obviously in the PCA representation of the $\LTLp$ embeddings. This is supported when using the 1NN classifier, where a mean macro-F1 score of $0.70$ is reported in the $\LWp$ case, whilst for $\LTLp$ the mean macro-F1 score is $0.88$ representing a greater than $25\%$ improvement. It is clear from the box plots that $\LTLp$ outperforms $\LWp$ (T-test $p < 10^{-16}$). Using linear interpolation, we visualise perturbations, in units of standard deviation, in principal component space as synthetic images (Figure \ref{figure:cancerInterp}). The interpolation in the embedding produced by $\LTLp$ demonstrates localised mass moving from the centre of the image towards the edges. This corresponds to cancer invading the milk ducts in cancerous tissue with open milk ducts in non-cancerous tissue. Linear interpolation in the $\LWp$ embedding visualises mass moving from the lower right to the upper left of the plot, there is no physical interpretation for this variation. It is clear that the $\LTLp$ synthetic images are more interpretable than the $\LWp$ synthetic images.

\begin{figure}[ht]
	\begin{subfigure}[t]{0.33\textwidth}
		\centering
		\includegraphics[width=0.95\textwidth]{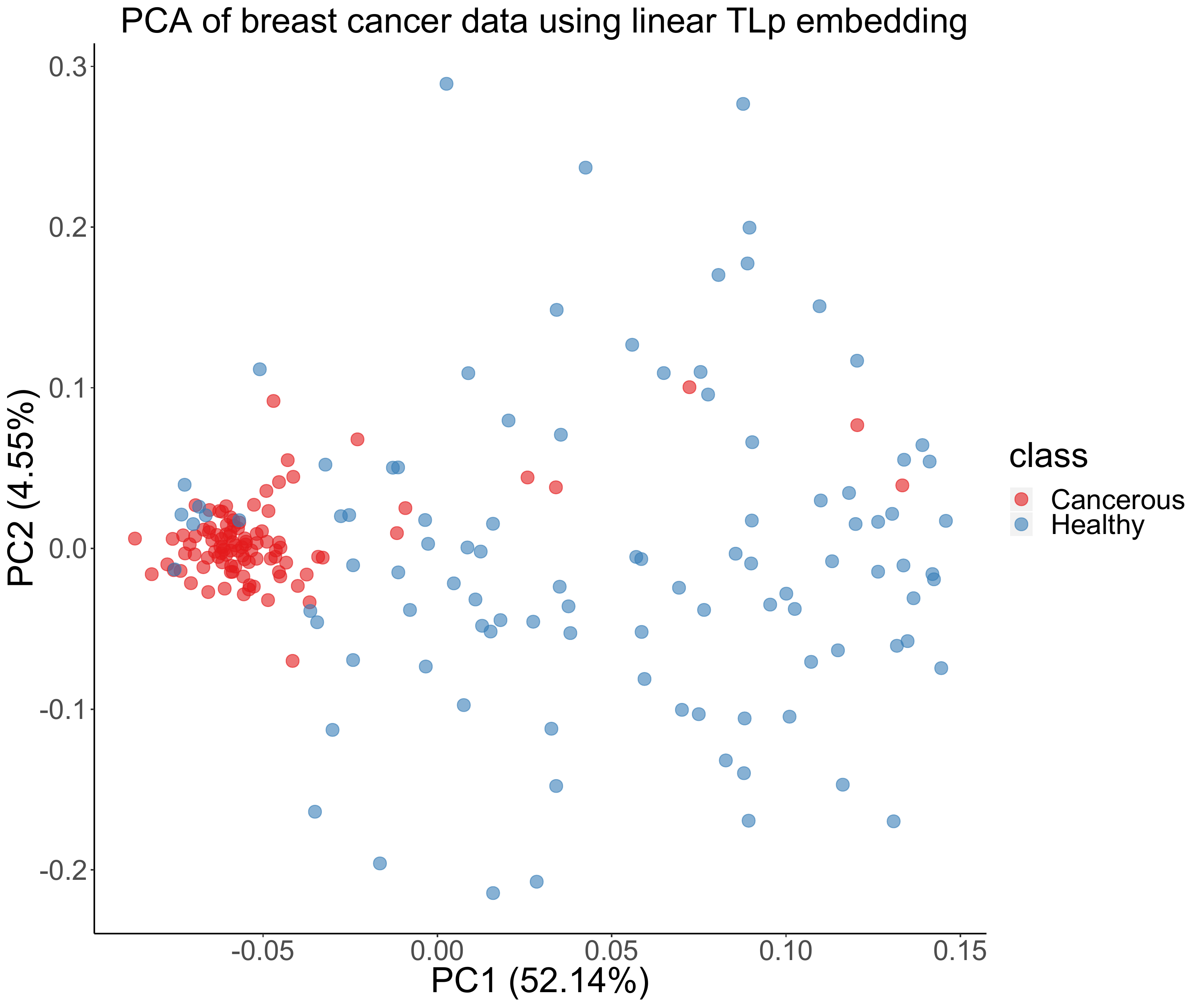}
		\caption{}
	\end{subfigure}
	\begin{subfigure}[t]{0.33\textwidth}
		\centering
		\includegraphics[width=0.95\textwidth]{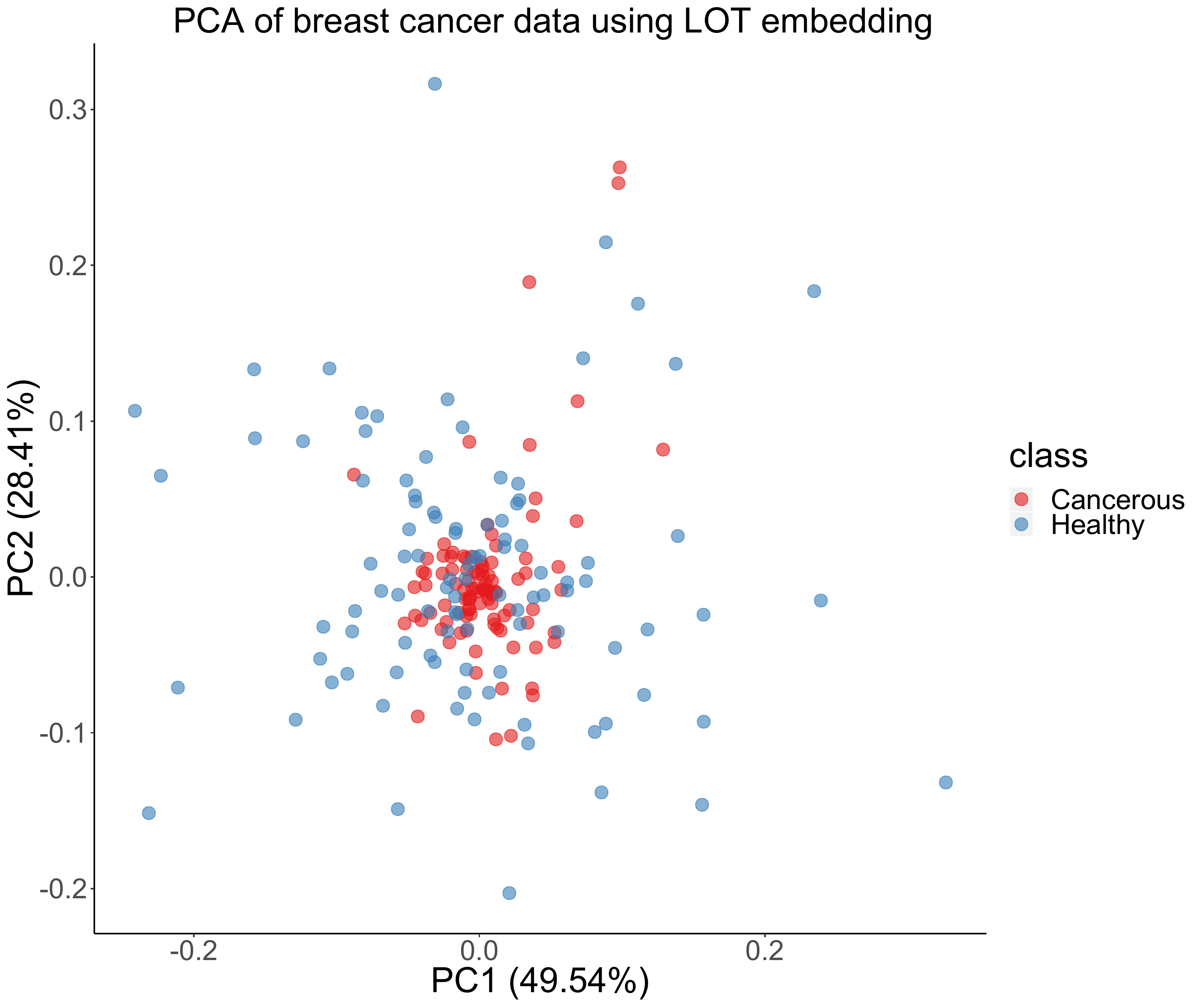}
		\caption{}
	\end{subfigure}%
	\begin{subfigure}[t]{0.33\textwidth}
		\centering
		\includegraphics[width=0.95\textwidth]{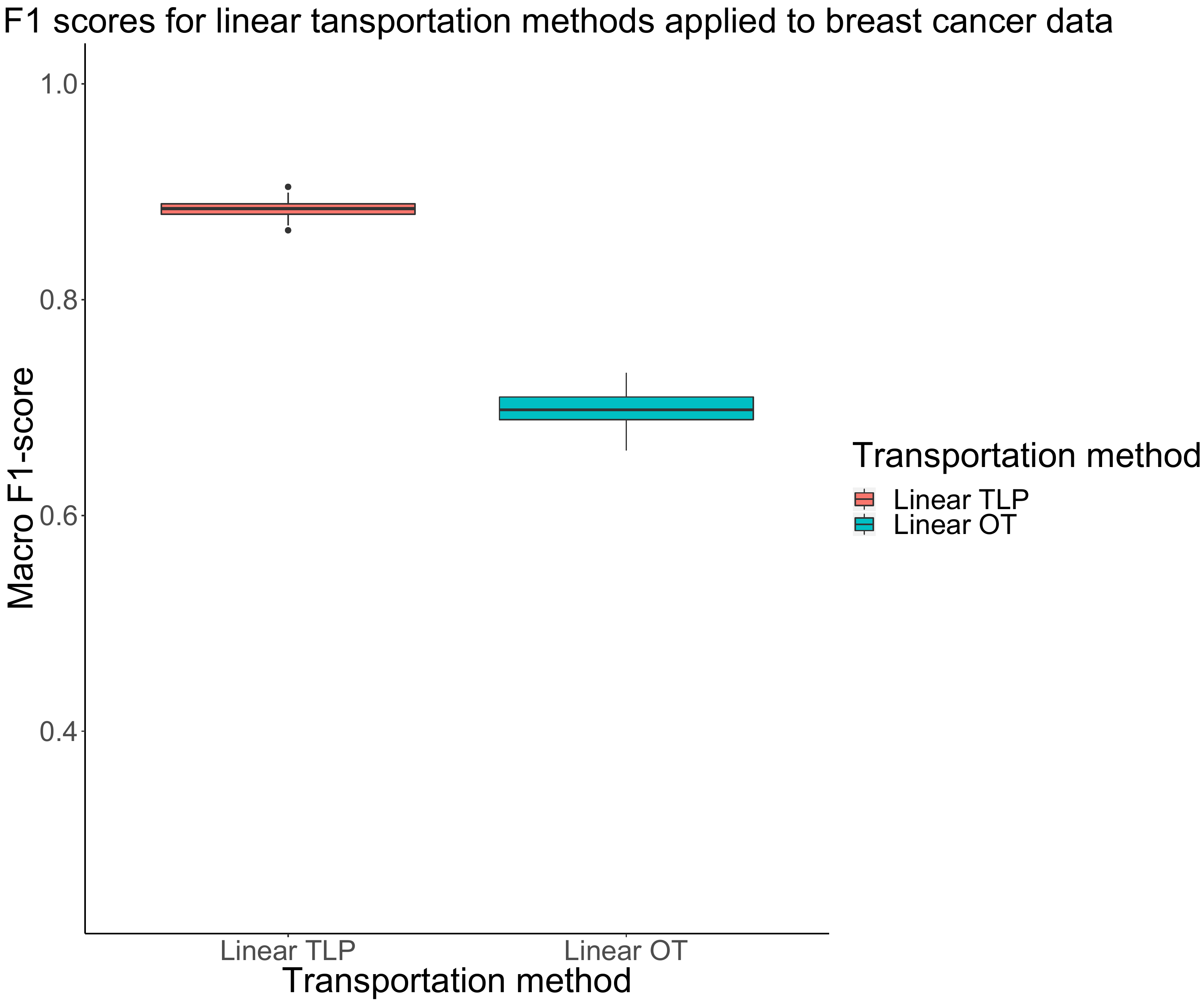}
		\caption{}
	\end{subfigure}%

	\begin{subfigure}[t]{0.49\textwidth}
	\centering
	\includegraphics[width=0.646\textwidth]{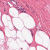}
	\caption{}
\end{subfigure}%
\begin{subfigure}[t]{0.49\textwidth}
	\centering
	\includegraphics[width=0.646\textwidth]{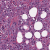}
	\caption{}
\end{subfigure}%
	
	\caption{(a,b) PCA plots using the $\LTLp$ embedding and the $\LWp$ embedding, we observe that $\LTLp$ embedding separates classes. (c)  Distributions of macro F1 scores for $\LWp$ and $\LTLp$ embeddings in the application to breast cancer histopathology using the 1NN classifier. %\mt{[Olly: Do we have the F1 scores of TLp and $\Wp$ to compare with (as we did for Auslan)? And can we change the titles on (a,b) to replace linear TLp with $\LTLp$ and LOT with $\LWp$? Similarly in (c) for the x-axis and chart?]} \oc{Wp and TLp took ages for this examples. I gave up after 4 days ...}
	(d) An example image from a healthy patient. (e) An example image from a patient with breast cancer.}
	\label{figure:cancerCompare}
\end{figure}

\begin{figure}[ht!]
	\centering
	\includegraphics[height=4in]{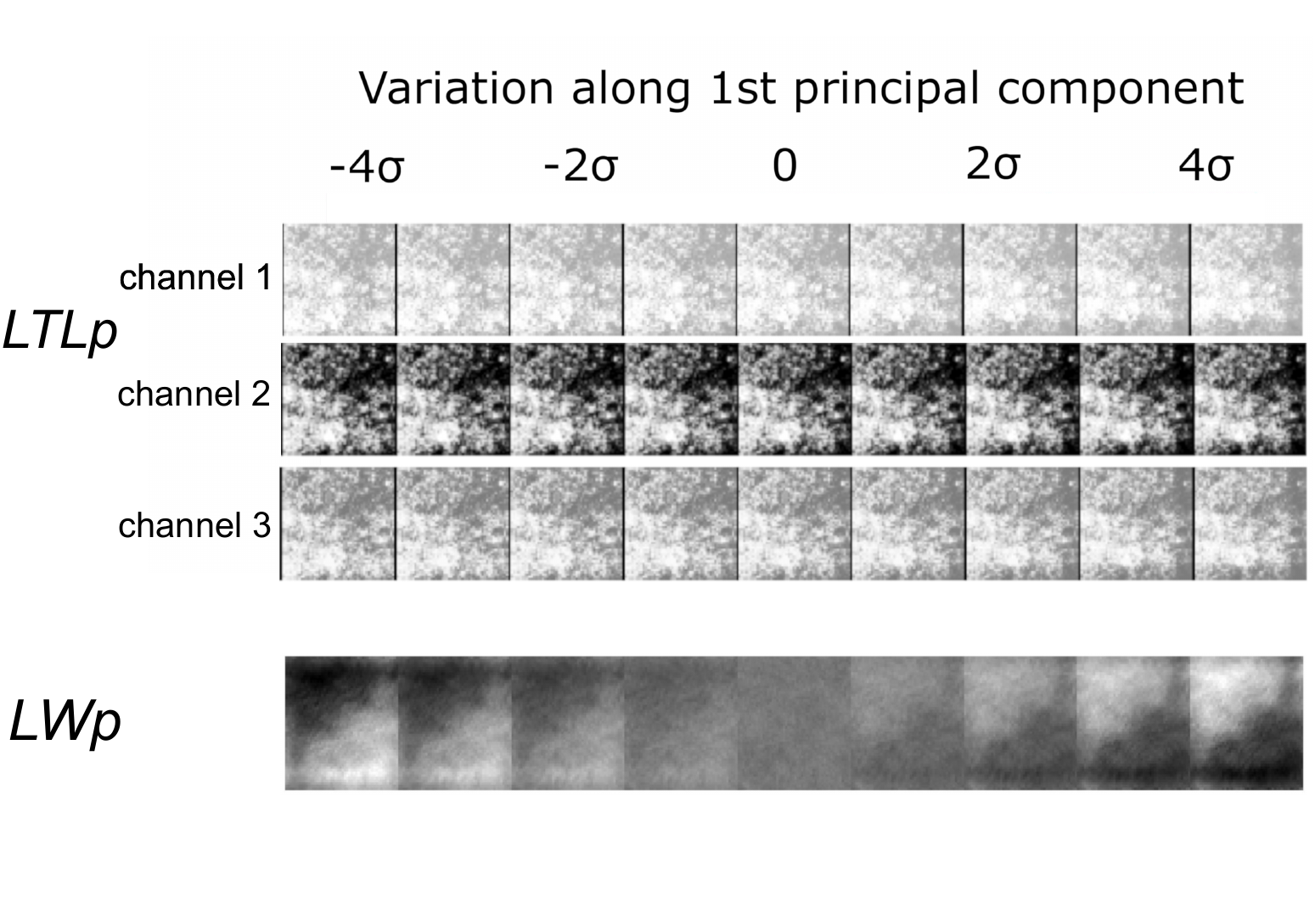}
	\caption{Variations along the first principal component in the application to breast cancer histopathology for both the $\LTLp$ embedding and the $\LWp$ embedding.
	%\mt{[Olly: Can you try and plot the LTLp images in greyscale, or plot the LTLp images for each of the RGB channels (i.e. have three columns) to see what visualisation looks best? Could you also replace OT with $\Wp$?]}
	}
	\label{figure:cancerInterp}
\end{figure}
%\clearpage

%\FloatBarrier

\subsection{Application to Financial Time Series} \label{subsec:Results:Financial}

In this section, we consider an application of transportation distances to financial time series data.
ore specifically, we use daily close prices for constituents of the SP1500 index, for the period 2$^\text{nd}$ January 2004 - 23$^\text{rd}$ March 2020.

%\vspace{-2mm}
\subsubsection{Experimental Setup and Background on Financial Time Series}
We only consider  instruments (stocks) available throughout the entire history, which amounts to approximately $n=1150$. We use daily log-returns, with the return of instrument $i$, between times  $t_1$ and $t_2$, defined as $\rmR_{i}^{(t_1, t_2)} = \log{ \frac{ P_{i, t_2}}{ P_{i,t_1}}},$ where $P_{i, t}$ denotes the price of instrument $i$ at the end of day $t$. On any given day $t$, we consider the matrix $S_t=[P_{t-m+1},\cdots,P_t]$ of size $n \times m$ (where $P_t$ is the vector $[P_{1,t},\cdots P_{1150,t}]^\top$), capturing the daily returns for the past $m$ days (with fixed $m=20$ throughout the experiments). We refer to $S_t$ as the sliding window, as we vary the time component. The goal is to use and compare various techniques for computing the k-nearest-neighbors of $S_t$, which we denote by
\[ \mathcal{N}_k(S_t) = \lb t_j \,:\, j=1,\dots,k \text{ s.t. } S_{t_1}, S_{t_2}, \ldots, S_{t_k} \text{ are the } k \text{ nearest neighbors of } S_t\rb, \]
by pooling together the similarities between the multivariate time series comprising $S_t$ with prior historical time series $ S_{m}, \ldots, S_{t-h}$, where $h$ is the future horizon at which we aim to predict. Note that, at any given time $t$, the available history to query for the k-nearest-neighbors of $S_{t}$ ends at $S_{t-h}$, in order to avoid look forward data snooping. Figure \ref{fig:schematicDigramKnnFinance}  is a schematic diagram of our pipeline process.    
%\vspace{-3mm}
\begin{figure}[h!]
\begin{center}
\includegraphics[width=0.80\columnwidth]{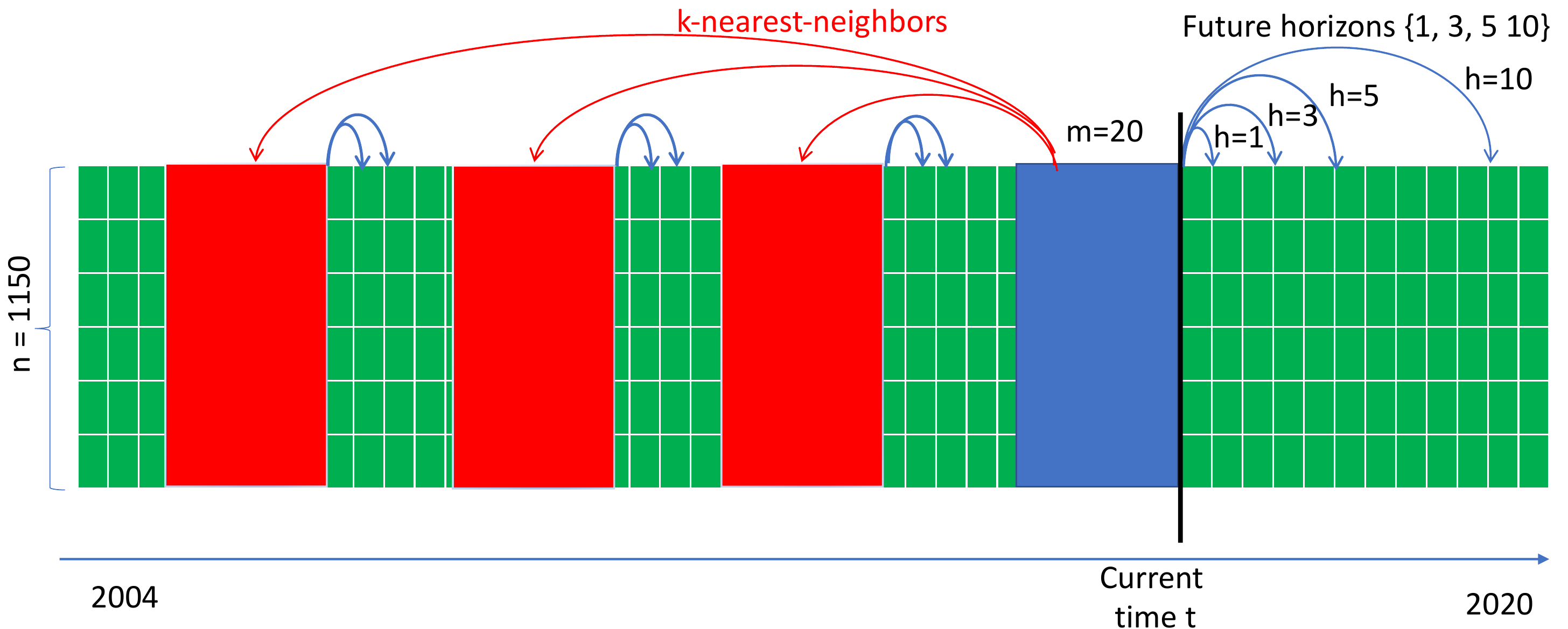}
\end{center}
\vspace{-6mm}
\caption{Schematic diagram of our pipeline. % for the financial time series application. 
We use a sliding window approach, where the current window is shown in blue, and the historical k-nearest-neighbor windows in red.}
\label{fig:schematicDigramKnnFinance}
\vspace{-4mm}
\end{figure}

%\vspace{-4mm}
\paragraph{Future returns} 
We consider forward looking returns (referred to as targets) of two different types, and at various horizons. 
% For ease of notation, in later figures we use the abbreviation \textbf{fret} to denote forward looking returns (often referred to as \textit{targets}), and include in the name of the return additional information to reflect the type of return. 
The future return can be the raw returns itself, or various decompositions of it. We let 
% $ \text{fret}_{i,t}^{(h)} $, or for simplicity of notation 
$\rmf_{i,t}^{(h,\RR)} = \log{ \frac{ P_{i, t+h}}{ P_{i,t}}}$ denote the raw return of instrument $i$ at time $t$, with future horizon $h$.  
In our setting, we consider the following future horizons $h \in \{1,3,5,10\}$ days. One often uses the S\&P500 as a proxy for the entire market return. The S\&P500 is a stock market index that measures the stock performance of 500 large companies listed on stock exchanges in the US market. 
The index has a corresponding ETF (Exchange Traded Fund), which can be traded much like any other regular stock (with symbol \textsc{SPY}), and we denote its raw return by 
$\rmf_{\SPY,t}^{(h,\RR)} = \log{ \frac{ P_{\SPY, t+h}}{ P_{\SPY,t}}}$. 
With this in mind, for any instrument $i$, one can then consider its future \emph{market-excess return} (MR) as
$ \rmf_{i,t}^{(h,\MR)} = \rmf_{i,t}^{(h,\RR)} - \beta_i \rmf_{\SPY,t}^{(h,\RR)}$.
For simplicity, we assume $ \beta_i = 1$ across all instruments $i=1,\dots,n$, though there are various techniques to infer the individual betas from historical prices  \cite{hollstein_prokopczuk_2016}. We remark that the main reason for benchmarking our predictions against market-excess returns, as opposed to only raw returns, is essentially to hedge away the market risk and increase the Sharpe Ratio score defined further below. % The 

%\vspace{-3mm}
\paragraph{Estimates of future returns} Once we have identified the k-nearest-neighbor (knn) periods of the current window $S_t$, the prediction made at time $t$, for a given horizon $h$, is a weighted sum of the corresponding historical future returns, where the weights are inversely proportional to the knn distances. More precisely, if we denote by $\hat{\rmf}_t^{(h,\RR)}$ the $n \times 1$ vector of forecasts for the future $h$-day raw returns at time $t$, its values are given by 
%\vspace{-4mm}
%\begin{equation}
\[ \hat{\rmf}_t^{(h,\RR)} = \sum_{t_j\in\cN_k(S_t)} w_j \rmf_{t}^{(h,\RR)}, \]
%\end{equation}
%\vspace{-4mm}
%\noindent
where the weights $w_i$ are given by $w_j\sim\frac{1}{d(S_t,S_{t_j})}$, normalized such that $\sum_{t_j\in\cN_k(S_t)} w_j = 1$, where $d(S_t, S_{t_j})$ denotes the distance between the current window $S_t$, and its nearest historical neighbors $S_{t_j}, t_j \in  \mathcal{N}_k(S_t)$.
Similarly, we compute estimates for the future $h$-day market-excess returns (MR), by pooling together k-nn historical forward looking market-excess returns.
% the historical forward looking market excess returns of the k-nearest-neighbor periods.

%\vspace{-4mm}
% \section{Performance evaluation}
\paragraph{P\&L} For performance evaluation, we rely on standard metrics from the finance  literature. %  If $\alpha$ denotes our
For a given set of forecasts, the corresponding \textsc{PnL} (Profit and Loss) on  day $t$ for a given return $\rmf_t$ (shortly chosen to be the raw return $\hat{\rmf}_t^{(h,\RR)}$ or the market excess return $\hat{\rmf}_t^{(h,\MR)}$) is defined by  
%\vspace{-4mm}
%\begin{equation}
\[ \PnL_t= \sum_{i=1}^n  \sign( \alpha_{i,t}) \cdot  \rmf_{i,t}, \quad  t = 1, \ldots, T, \] 
%\end{equation}
%\vspace{-4mm}
%\noindent  
where $\alpha_{i,t}$ denotes our forecast for stock $i$ on day $t$. Note that the PnL increases if and only if the sign of the forecast $\alpha$ agrees with the sign of the future return $\rmf_{i,t}$, and decreases otherwise. The sum is across all the $n$ instruments, and $\rmf_{i,t}$ is the future return (either raw return (RR) or market-excess return (MR)) of stock $i$ on day $t$. We explore different forward looking horizons $ h \in \{1,3,5,10\}$). 
We add a superscript to the PnL calculation to indicate its dependency on horizon $h$ and the type of return considered (RR or MR). For instance, in our setting, the $h$-day forward looking market-excess return, computed daily, is given by 
%
%\vspace{-4mm}
%\begin{equation}
\[ \PnL_t^{(h,\MR)}= \sum_{i=1}^n  \sign \left(  \hat{\rmf}_{i,t}^{(h,\MR)} \right)  \cdot  \rmf_{i,t}^{(h,\MR)}, \quad  t = 1, \dots, T, \quad  h \in \{1,3,5,10\}. \] 
%\end{equation}
%\vspace{-4mm}

%\vspace{-4mm}
\paragraph{Sharpe Ratio}
After computing the daily PnL time series for all available days, 
% (in a rolling window approach), 
we capture the risk-adjusted performance by computing the corresponding (annualized) Sharpe Ratio % (SR) as  
%
%\vspace{-3mm}
%\begin{equation}
\[ \text{Sharpe Ratio (SR)} := \frac{ \mean(\PnL)}{ \stdev(\PnL)} \times  \sqrt{252}, \]
%\end{equation}
%\vspace{-4mm}
%
%\noindent  
where the scaling is due to the fact that there are 252 trading days within a calendar year. For simplicity, we apply the same scaling $\sqrt{252}$ also to the longer horizons $h>1$, and refer the reader to \cite{benhamou2019testing} for an in depth discussion on 
Sharpe Ratios\footnote{The annualized Sharpe Ratio is calculated from daily observations as $\frac{\mu - r_f}{\sigma} \sqrt{252}$,  where $\mu$ is the average daily PnL return, $r_f$ denotes the risk-free rate, and $\sigma$ the standard deviation of the PnL returns. Since the risk-free rate is 
% typically very 
close to zero over the period of study, we compute the Sharpe Ratio as $\frac{\mu}{\sigma} \cdot \sqrt{252}$.} 
and practical considerations arising from the fact that typical forecasts for equity returns are usually serially correlated.  
%For simplicity, we choose to use the same normalizing constant even for the longer horizons, and 
We attribute the future $h$-day PnL to each day $t$ (leading to overlapping windows), as opposed to maintaining $h$ parallel portfolios, and computing their daily total PnL. We are mainly interested in relative performance of the methods, in terms of Sharpe Ratio and PnL, and less on the actual magnitudes of these performance metrics, and their practical considerations.

%\vspace{-2mm}
\paragraph{Average PnL in basis points}
In the financial literature, a typical performance measure is the average return per dollar traded, in percentage. For example, one is typically interested in the annualized return of the portfolio. In what follows, we denote by PPT (PnL Per Trade) the average daily PnL per unit of notional (eg., \$1). For instance, if at time $t_0$ the available capital is \$100, and the cumulative PnL at time $t_{252}$ (thus after one year) is  \$10, then the annualized return amounts to 10\%. Recalling that 1\% amounts to 100 basis points (bpts), an annualized return of 10\% translates to approximately PPT = 4 bpts per day (since $4 \times 252 \approx  1000$ bpts, which amounts to 10\%). 
Essentially, the PPT is telling us how much would we earn for each \$1 traded in the markets (excluding transaction costs and fees). For simplicity, we ignore sizing/liquidity effects and assume that each day, we invest \$1 for each of the $n$ instruments, which leads us to the following simplified notion of PPT, averaged over the entire trading period comprised of $T$ days 
%
%\vspace{-4mm}
%\begin{equation}
\[ \PPT = \sum_{t=1}^T  \frac{\PnL_t }{ n } = \frac{ \sum_{t=1}^T  \PnL_t }{ T n }.\]
%\label{ppt}
%\end{equation}
%\vspace{-4mm}

%\vspace{-3mm}
\paragraph{Quintile Portfolios}
One is often interested in understanding the performance of the forecasts, as a function of their respective magnitudes. To this end, one typically considers only a subset (eg, top $q$\% strongest in magnitude forecasts) of the universe of stocks, usually referred to as \textit{quantile} portfolios in the literature \cite{Fama}. 
% Assume for example a universe of 500 stocks, from  the SP500 index. 
A quantile-based analysis simply constructs and evaluates portfolios composed of stocks which fall in a specific quantile bucket, or above a quantile threshold. We choose to use upward-contained quintile buckets, which we denote by $qr_i, i=1, 2, \ldots, 5$ indicating the quintile rank of each stocks, meaning that stocks with quintile rank $qr_i$ correspond to the top $1 - \frac{i-1}{5}$ fraction of  largest-in-magnitude forecasts. The colors in Figures \ref{fig:PS_knn_100_MaxHist_Inf_BARS} denote the quintile portfolios traded based on the magnitude of the forecasts. For example, the red bars $qr_1$ correspond to the full universe of stocks, while the green bars $qr_4$ denote the top 40\% largest in magnitude forecasts.

\subsubsection{Methods Comparison}

We compare the prediction performance of $\Wp$, $\LWp$ and $\LTLp$, and leave out the full $\TLp$ due to its prohibitive computational running cost. In addition, we compare to another  more classical approach, not relying on transportation distance methodology, given by the simple Pearson correlation between the original time series. More precisely, for each window $S_t$ (matrix of size $n \times m, \; t =1, \ldots, T$), we first standardize the returns in each row (eg, corresponding to each stock), and denote the resulting matrix by $\tilde{S}_t, t =1, \ldots, T$. Next, we unwrap each  matrix $\tilde{S}_t$ into a vector $\psi_t  \in  \mathbb{R}^{nm}$, and finally compute the pairwise distance between a pair of time windows $S_i$ and $S_j$, using a correlation-based distance between their corresponding flattened versions: 
$ \COR_{ij} := 1 - \Corr(\psi_i,\psi_j). $ 
Note that, in light of the pre-processing step that standardized each stock, this corresponds, up to a scaling constant, to the squared Euclidean distance between the corresponding vectors  $|| \psi_i - \psi_j  ||_F^2$. 

Figure \ref{fig:DistHistBar} shows numerical results comparing the various methods considered. The left column is a heatmap showing the $ T \times T $ pairwise distance matrix between all days available in history, in the interval 2004 - 2020. We note that $\LTLp$ clearly highlights the financial crisis occurred in 2008, followed by $\LWp$, and to some extent, $\Wp$, while COR show barely visible signs of this event. The middle columns show a distribution of the pairwise distances, while the right columns show the  row sums of the distance matrix. Construing the distance matrix as a network with distance/dissimilarity information, this plot effectively plots the degree of each node (i.e., of each time period corresponding to a sliding window of length 20). The periods of time with the largest dissimilarity degree correspond to the financial crisis in 2008. Note that, for each of visualization, we standardize the degree vector.

Next, we zoom in into the $\LTLp$ pairwise distance matrix, and show the resulting degrees in Figure \ref{fig:TotalDistanceDegree_LTPL_annotated}. After computing the $\LTLp$ distance matrix, we interpret this as a distance network, and compute the total distance degree of each node, after standardization. In this plot, we are able to recognize many of the major financial market events that have happened over the last two decades: the big financial crisis of 2007-2008, the 2010 Flash crash, the August 2011 markets fall (between May-October 2011), the Chinese market crash from January 2016, the period Oct-Nov 2018 (when the stock market lost more than \$2 trillion),  August 2019 (a highly volatile month in the global stock markets), and finally, the February 2020 stock market crash triggered by the COVID-19 pandemic. It is interesting to observe that the distance degree corresponding to the COVID-19 pandemic is matched in magnitude only by the 2007-2008 financial crisis.
Furthermore, in the top right of Figure~\ref{fig:TotalDistanceDegree_LTPL_annotated} we plot the top $50$ eigenvalues of the $\LTLp$ distance matrix, 
% and a heatmap of the top $5$ eigenvectors, altogether 
highlighting the usual market mode top eigenvector, with the second eigenvector highly localized on the 2007-2008 financial crisis and the February-March 2020 Covid-19 pandemic period (plots of the top 5 eigenvectors are shown in the Appendix, see Figure~\ref{fig:eigenLTLP}).

\begin{figure}[h!]
\begin{center}
\includegraphics[width=0.6\columnwidth]{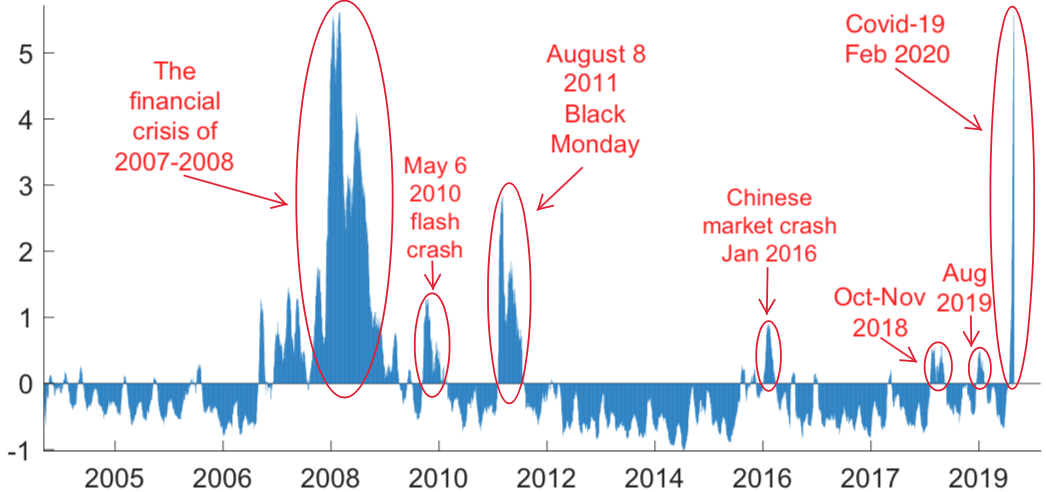}
% \hspace{-2mm}
\includegraphics[width=0.39\columnwidth]{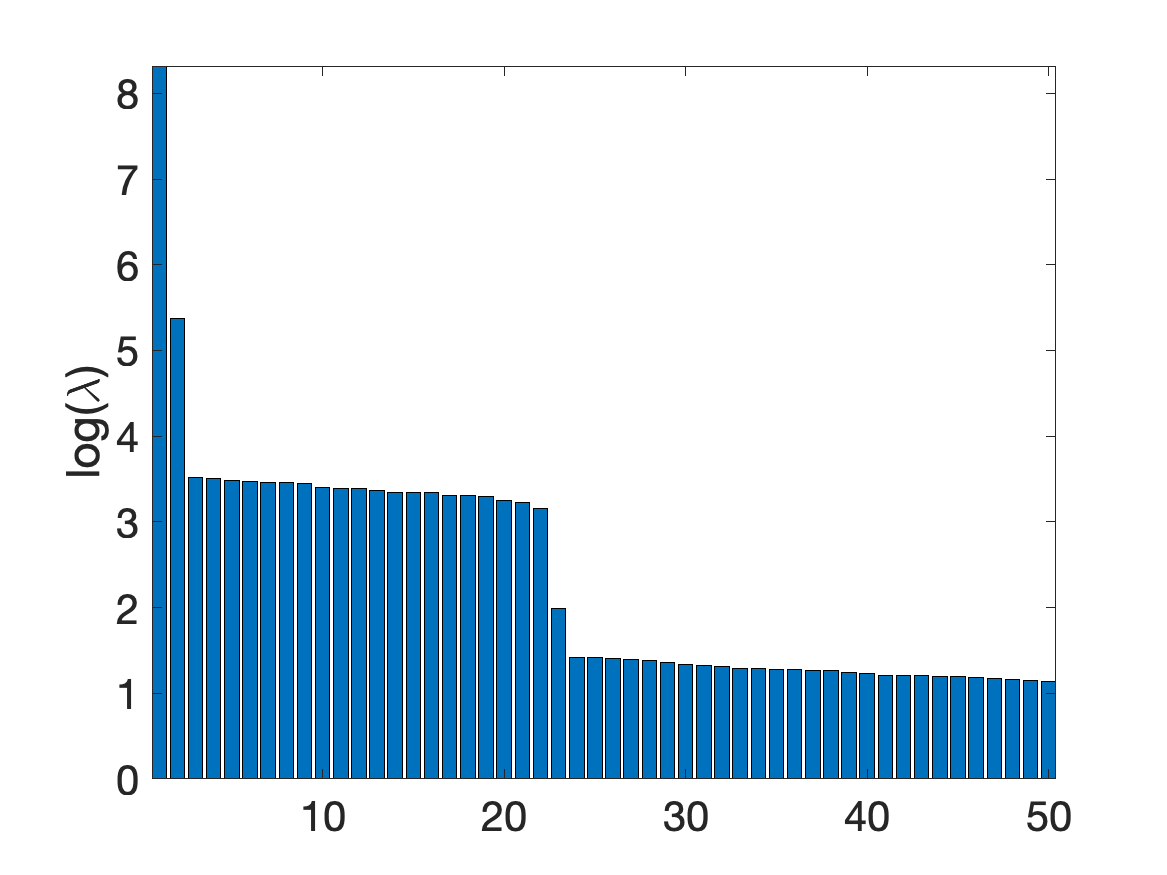}
\\ 
\includegraphics[width=0.32\columnwidth]{{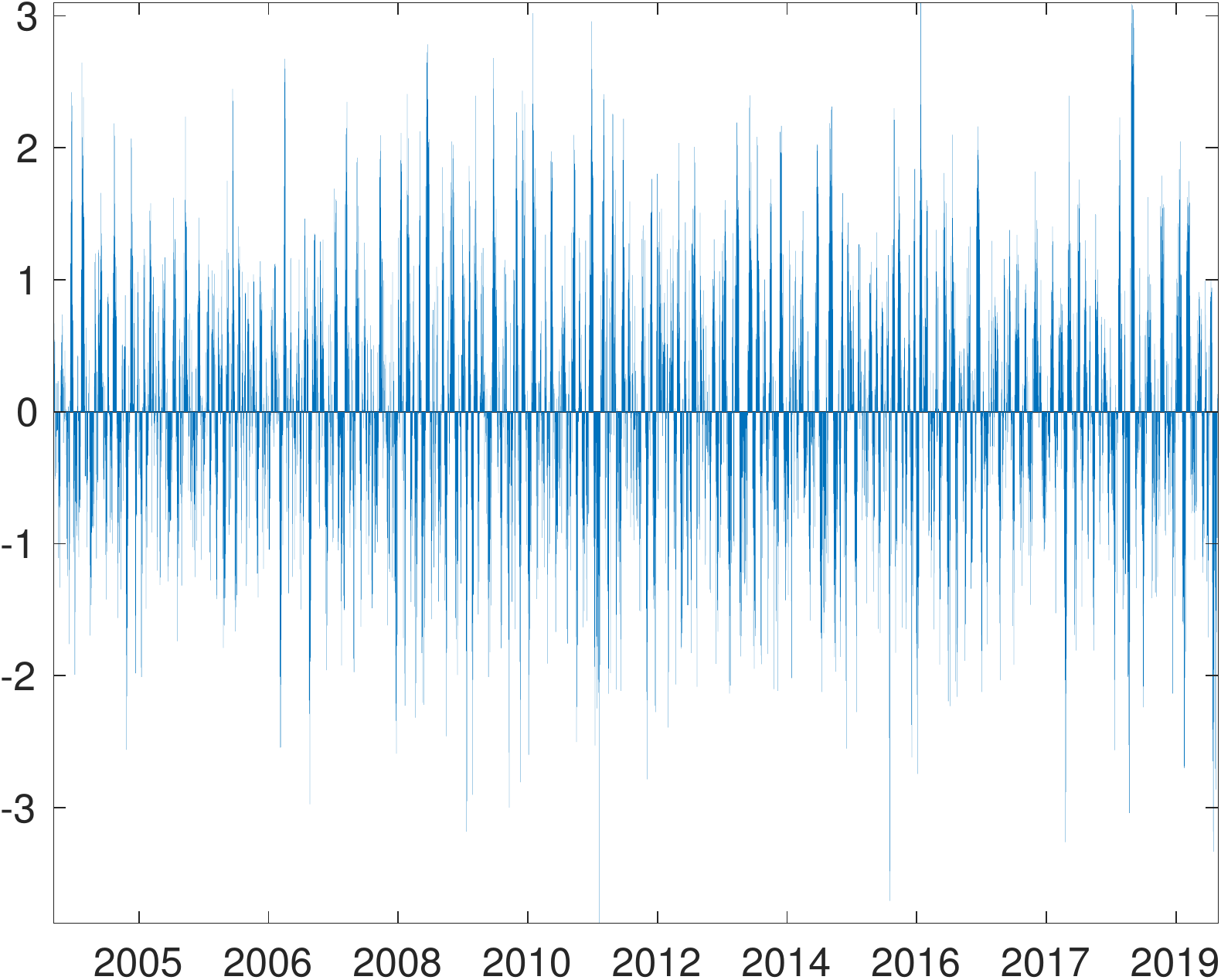}}
\includegraphics[width=0.32\columnwidth]{{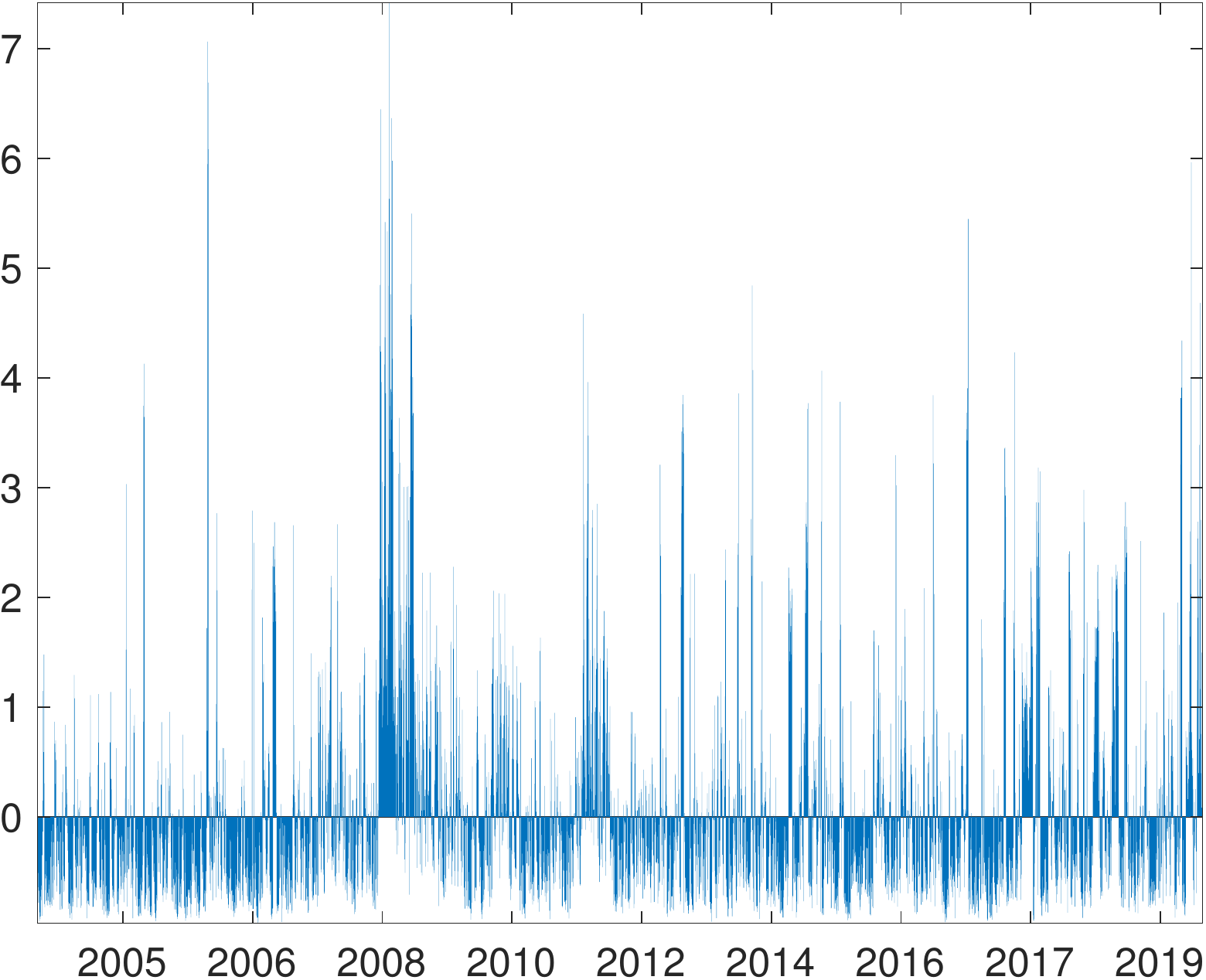}} 
\includegraphics[width=0.32\columnwidth]{{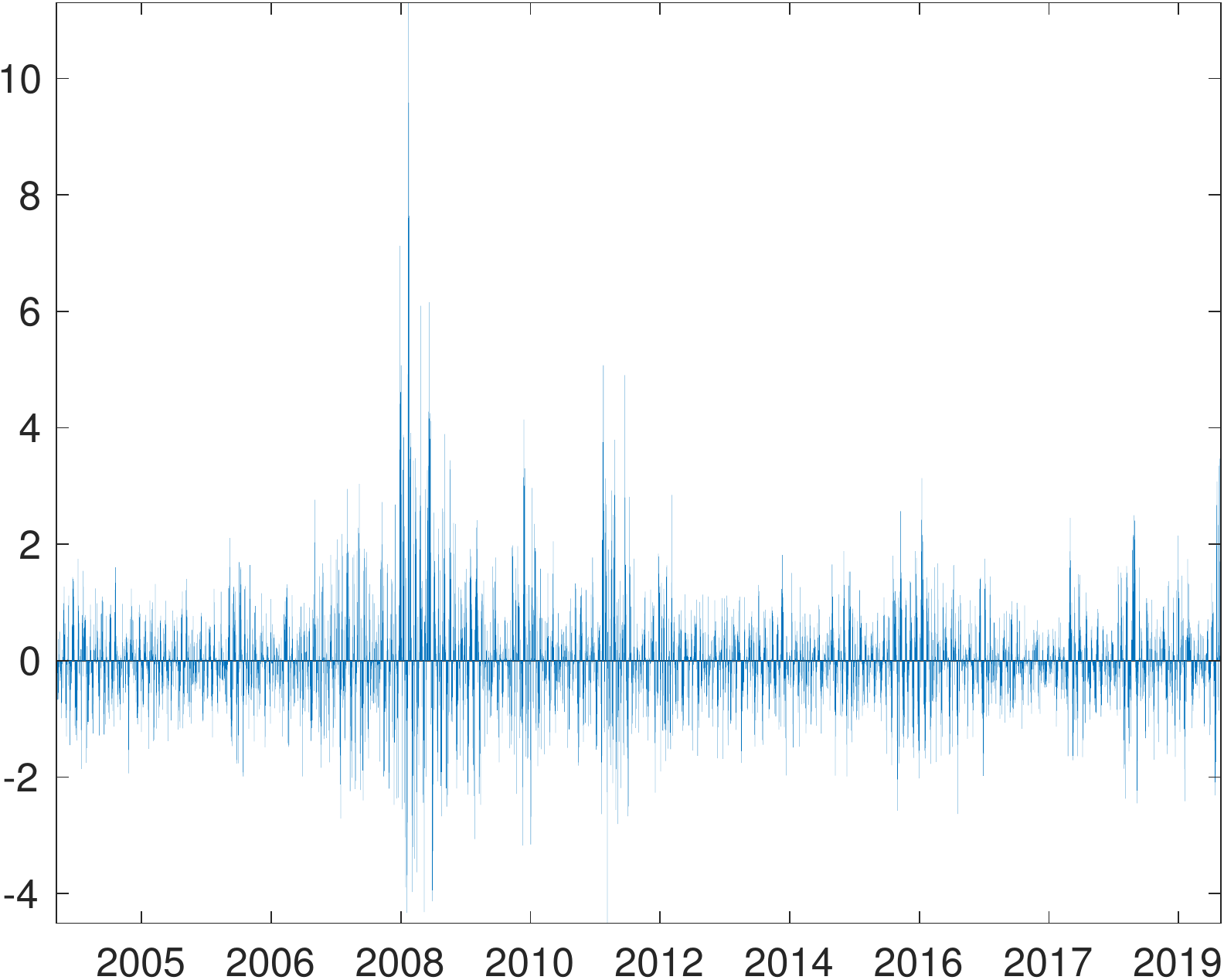}}
\end{center}
\vspace{-2mm}
\caption{Normalized total distance for each day, as computed via $\LTLp$ , annotated with the major market events. More explicitly, we compute the distance matrix between all pairs of days (where the data for a given by is given by the previous $m=20$ days, including the day of), construe this as a distance network, and compute the total degree of each node (which we show in the above figure, after standardization, for ease of visualization).}
\label{fig:TotalDistanceDegree_LTPL_annotated}
\end{figure}

%\mt{[Mihai: there were some missing outdated references to figures in this paragraph - I updated them but can you check I got it right?]}  \mc{looks good to me now.}
Figure \ref{fig:PS_knn_100_MaxHist_Inf_BARS} shows portfolio statistics for the various methods, across different target future horizons  $h \in \{1,5,10\}$, for both raw-returns (RR) and market-excess returns (MR). The corresponding cumulative PnL plots across time are given in Figure~\ref{fig:PS_knn_100_MaxHist_Inf_EvoQR5} %\ref{fig:PS_knn_100_MaxHist_Inf_EVO_1_day}, \ref{fig:PS_knn_100_MaxHist_Inf_EVO_5_day} and \ref{fig:PS_knn_100_MaxHist_Inf_EVO_10_day} 
for future horizons. %$h \in \{1,5,10\}$, respectively. 
Here, we fixed the number of nearest neighbors to $k=100$, and allow the knn search to span back until the start of the available history $T=1$. When forecasting raw returns (left column in Figure \ref{fig:PS_knn_100_MaxHist_Inf_BARS}), all methods perform rather poorly, with COR and $\LWp$ showing the best performance for $h=1$, while for $h \in \{5,10\}$,  $\LTLp$ clearly outperforms all other methods.  
This supports the assumption that $\TLp$ is better able to model similarities in financial time series.

In the market-excess returns setting, for $h=1$, all methods return a similar performance in terms of Sharpe Ratio (SR) around 1, except for $\Wp$ which has a SR of around 0.5; in terms of PnL, most methods achieve a PPT of 1-3 basis points (bpts). However, for longer horizons, $\LTLp$ clearly outperforms all other methods, both in terms of Sharpe Ratio and PPT.

%Finally, in Figure \ref{fig:PS_knn_50_MaxHist_1000_BARS}, we displayed the portfolio statistics for the various methods, but when the k-nearest neighbors algorithm used $k=50$ and the historical window available to the knn  query is restricted to only the previous 1000 days. Results are qualitatively similar to the previous setting, with a slight improvement for the $\LWp$ method at the longer horizons. 
%\mt{[Mihai: I cut the part with the $k=50$ nearest neighbours because we also seem to have cut the figure. Feel free to add it back, or add it to the appendix.]} \mc{Sounds good to me, we can leave it out.} 

%\vspace{-2mm} 
 
\begin{figure}
\newcommand{\wid}{0.22\textwidth}
\newcommand{\widh}{1.4in}
\newcolumntype{C}{>{\centering\arraybackslash}m{\wid}}
%\begin{table*}[!htp]\sffamily
\begin{center}
\begin{tabular}{l*4{C}@{}}
& COR  & $\Wp$  & $\LWp$  & $\LTLp$  \\ 
& \includegraphics[width=\wid]{{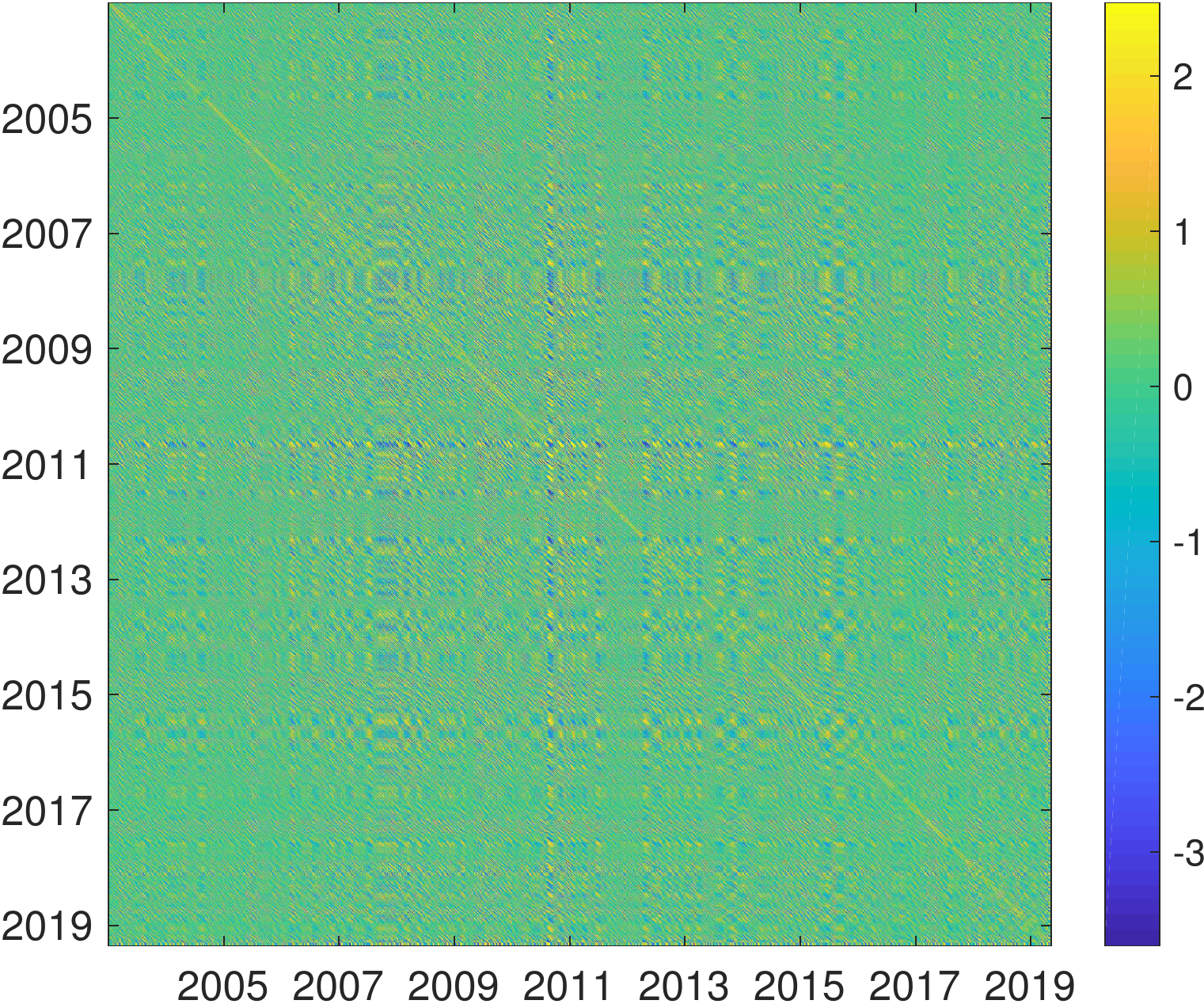}} 
& \includegraphics[width=\wid]{{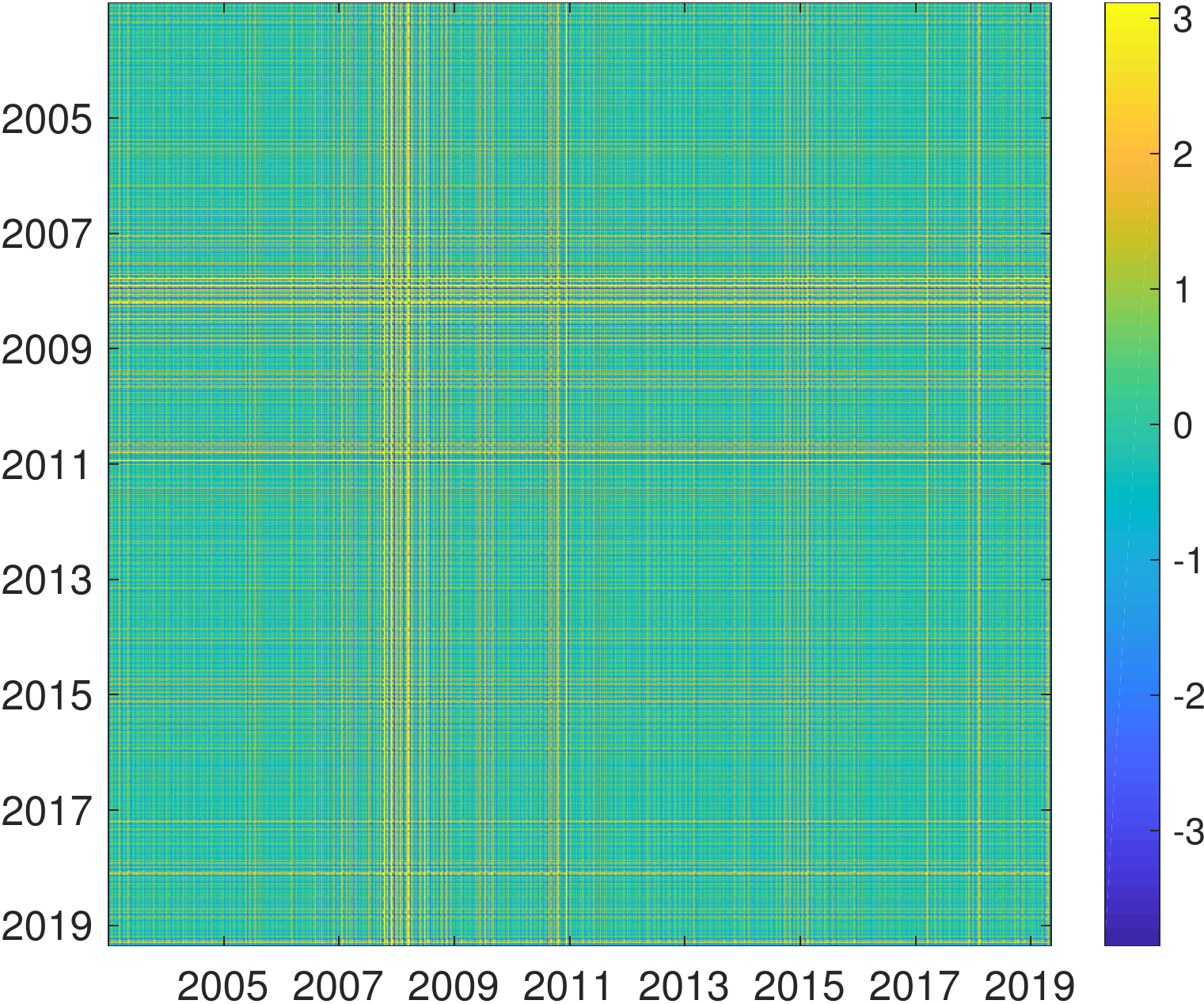}} 
& \includegraphics[width=\wid]{{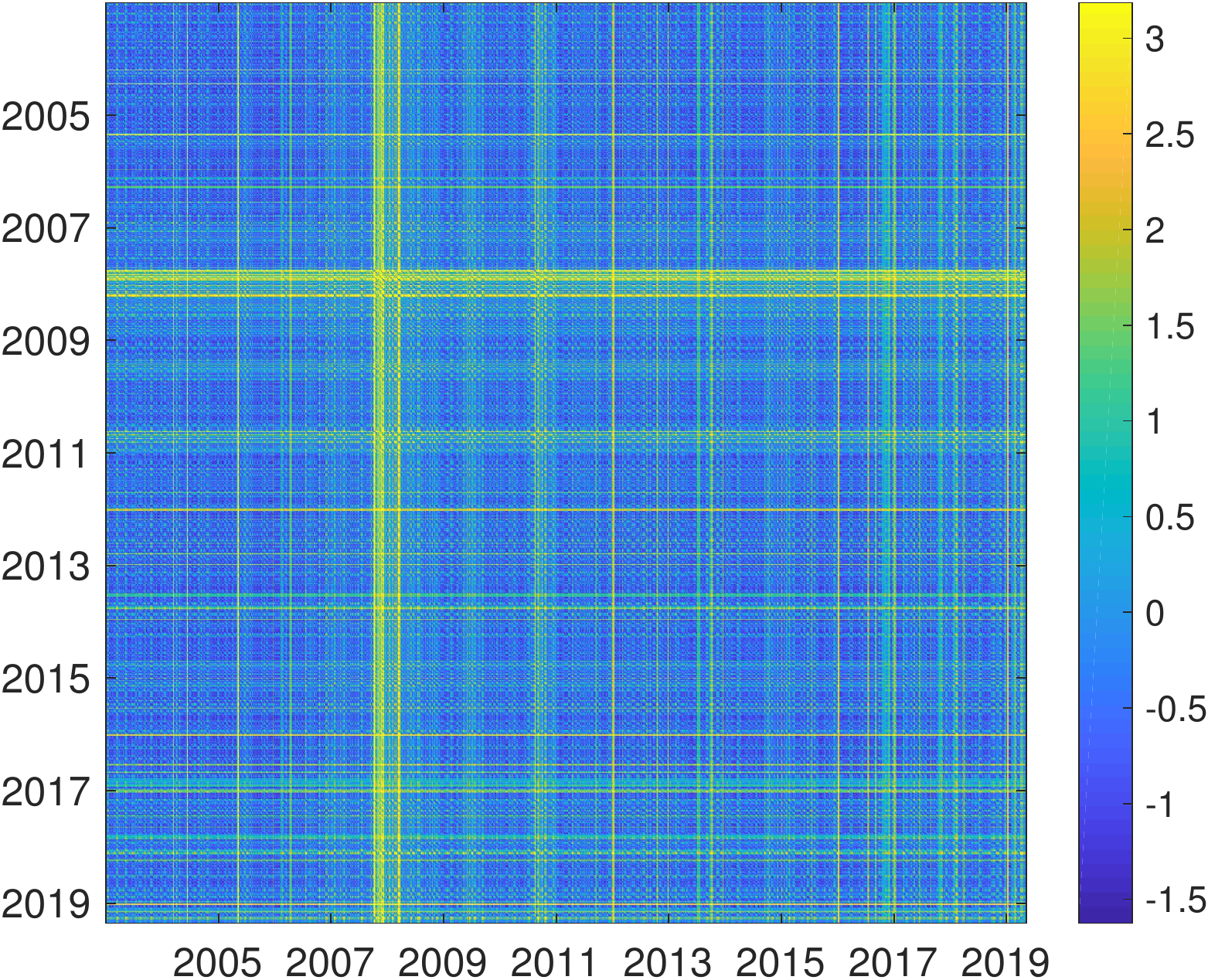}} 
& \includegraphics[width=\wid]{{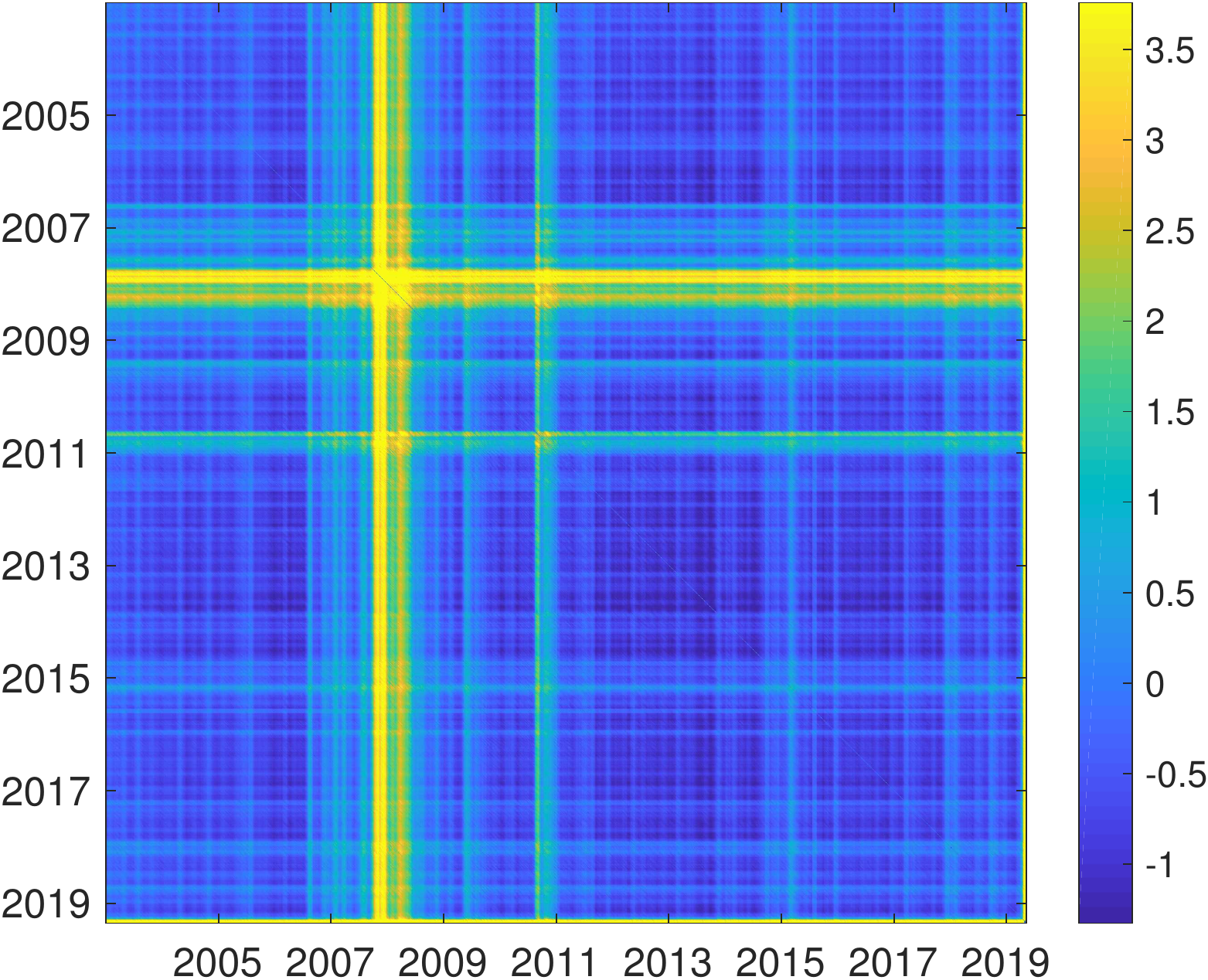}}\\
\end{tabular}
\end{center}
%\captionsetup{width=0.99\linewidth}
%\captionof{figure}{
\caption{Pairwise distance matrices corresponding to each of the methods considered.}
\label{fig:DistHistBar}	
%\end{table*} 
\end{figure}

\begin{figure}
% \vspace{-2mm}
%\newcommand{\widd}{3.0in}
\newcommand{\widd}{0.45\textwidth}
\newcolumntype{D}{>{\centering\arraybackslash}m{\widd}}
%\begin{table*}[!htp]\sffamily
%\hspace{2mm}
\begin{center}
%\hspace{-6mm}
\begin{tabular}{l*2{D}@{ }}
%\hspace{-6mm}
    &  Raw Returns (RR) & Market-Excess Returns (MR)  \\ 
\hline 
\rotatebox{90}{1-day}  %\hspace{-5mm}
& \includegraphics[width=\widd]{{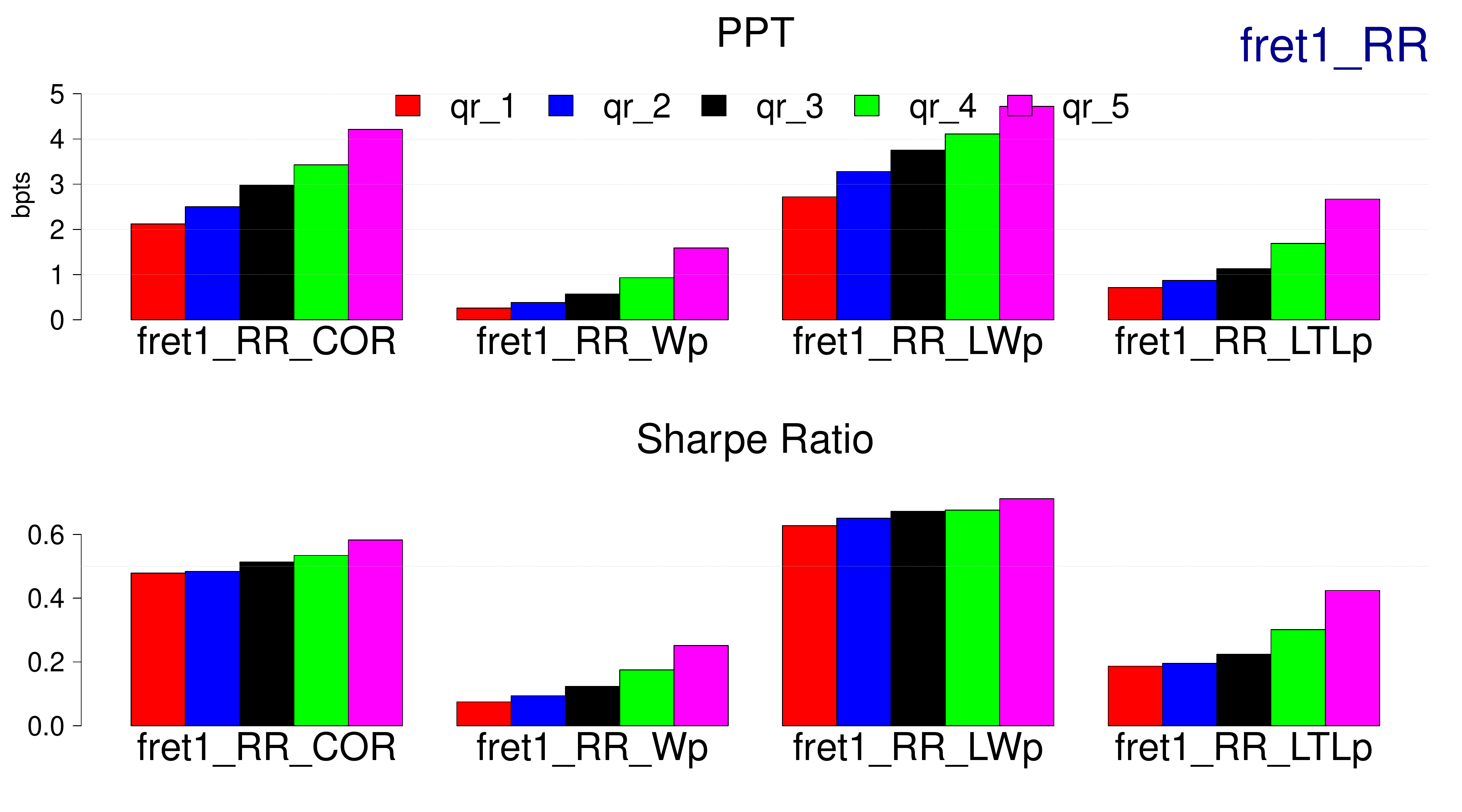}} 
& \includegraphics[width=\widd]{{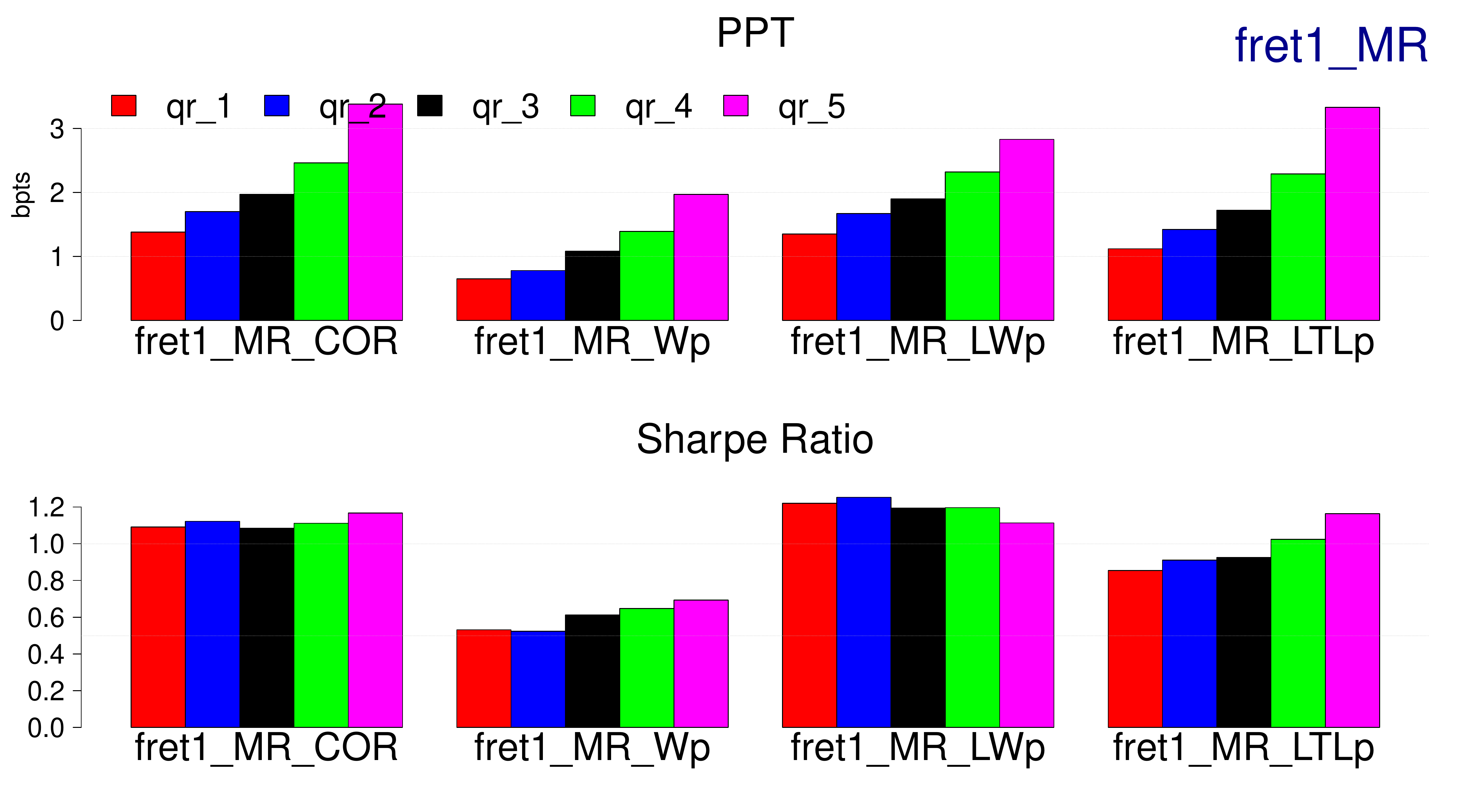}}  \\
% \vspace{-4mm}
% \hrule
%\hline 
%% 
%\rotatebox{90}{\textsc{3-DAY}}   \hspace{-5mm}
%& \includegraphics[width=\widd]{{PLOTS/PS/PS_knn_100_MaxHist_Inf_ALL_20040102_20200323_fret3_RR_BAR.pdf}} 
%& \includegraphics[width=\widd]{{PLOTS/PS/PS_knn_100_MaxHist_Inf_ALL_20040102_20200323_fret3_MR_BAR.pdf}}  \\
%
\hline 
\rotatebox{90}{5-day}  %\hspace{-5mm}
& \includegraphics[width=\widd]{{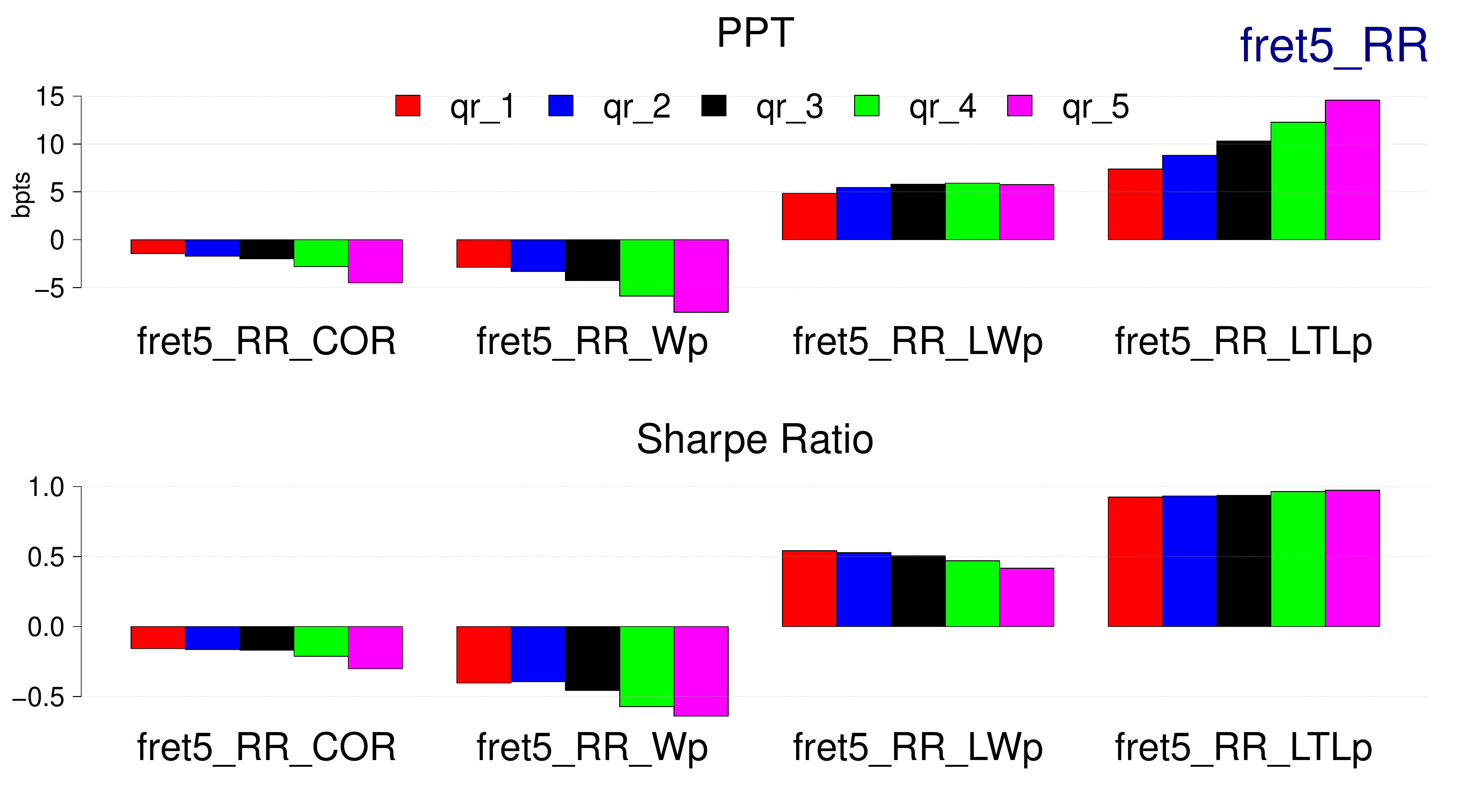}} 
& \includegraphics[width=\widd]{{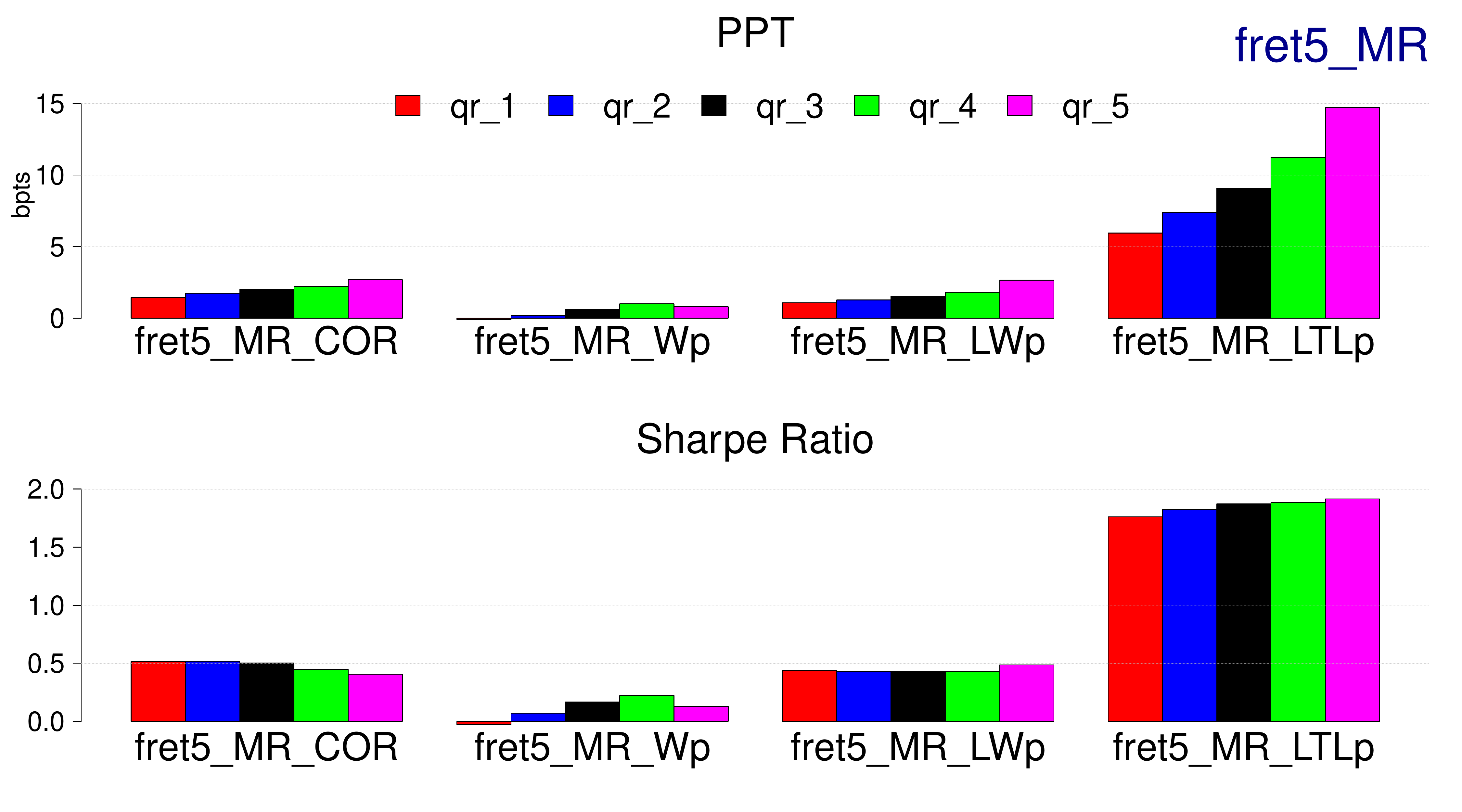}}  \\
\hline 
\rotatebox{90}{10-day}   %\hspace{-5mm}
& \includegraphics[width=\widd]{{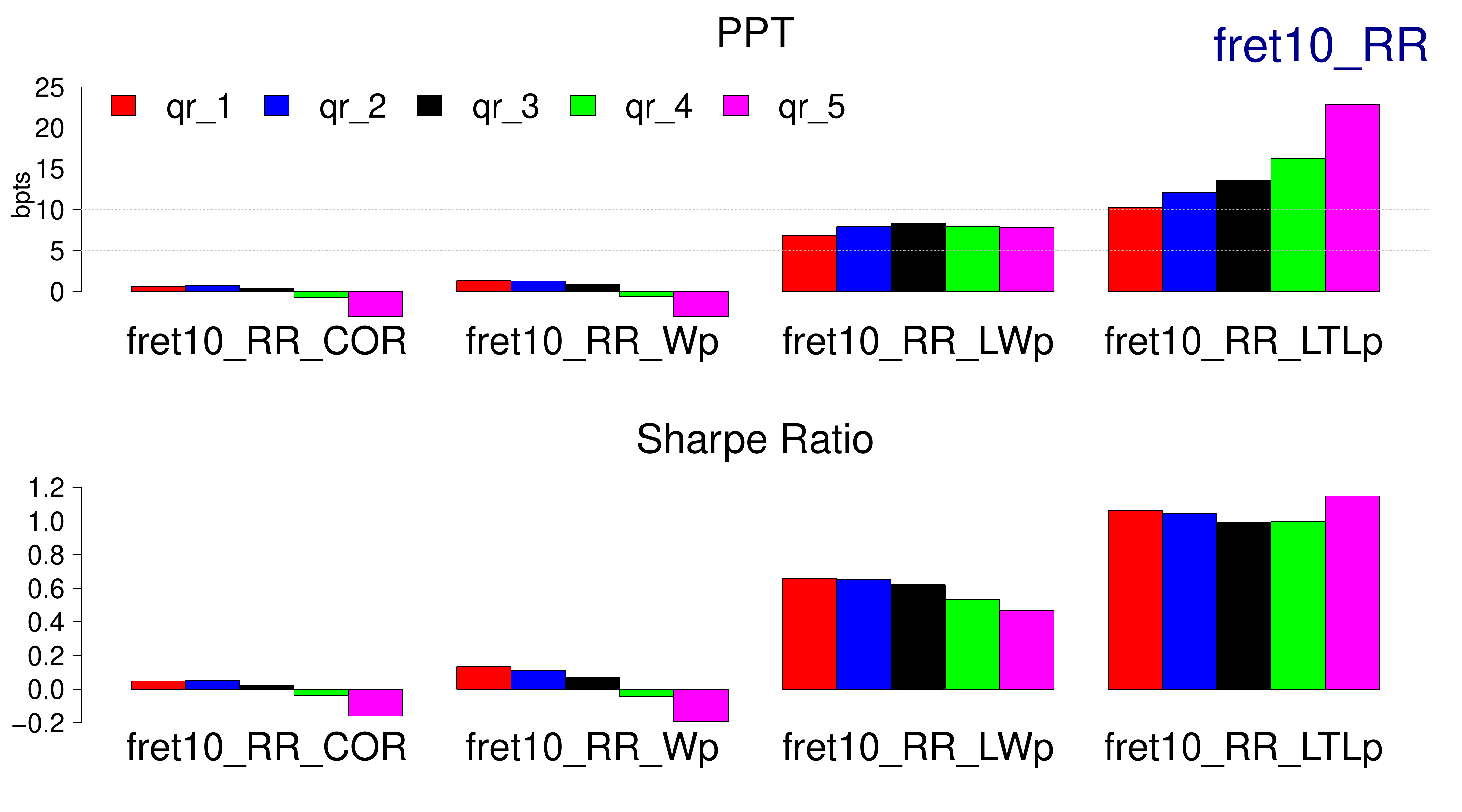}} 
& \includegraphics[width=\widd]{{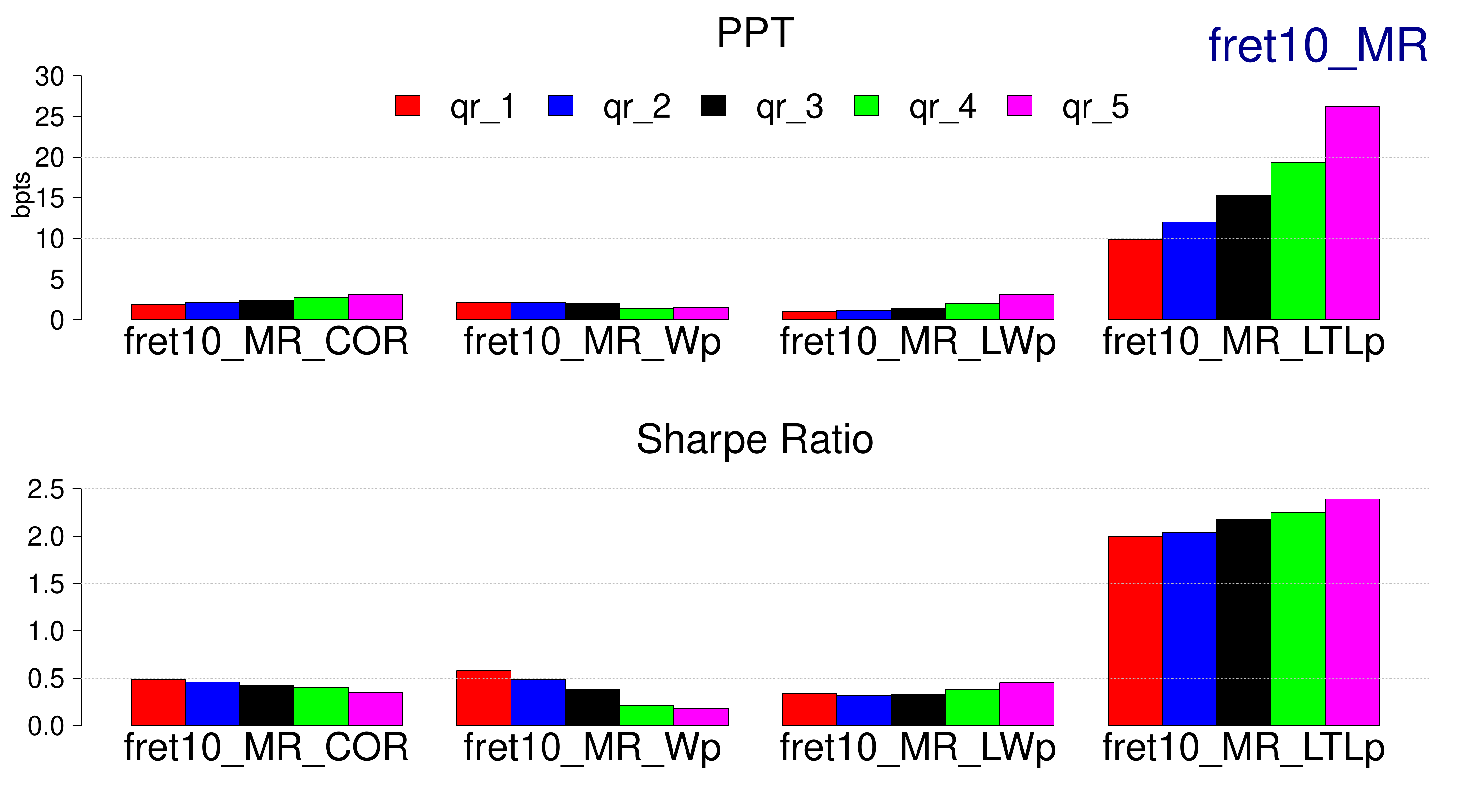}}  \\
\hline 
\end{tabular}
\end{center}
\captionsetup{width=0.99\linewidth}
%\vspace{-5mm}
%\captionof{figure}{
\caption{Portfolio statistics  % (PPT PnL and Sharpe Ratio) 
across various future horizons $ h \in \{1,5,10\} $   %(indexing the rows), 
across two types of returns: raw returns and market-excess returns. % (indexing the columns). 
The colors denote quintile portfolios, and the $x$-axis denotes the forecasts made by each method \{COR, $\Wp$, $\LWp$, $\LTLp$\}. %\mt{[Mihai: Could you change the abbreviations to $\Wp$, $\LWp$ from POT and LOT? Is it also possible to be consistent with the spacing in the light grey horizontal lines?]}   \mc{Ok, have a look at the updated figures. The figures are in R and there wasn't a way to properly add latex-like superscripts. Regarding the spacing, it is now consistent now across plots, though it may not look so for some of the plots due to their y-axis limits...} 
}   
\label{fig:PS_knn_100_MaxHist_Inf_BARS}	
%\end{table*} 
\end{figure}

\begin{figure}
\newcommand{\widd}{0.45\textwidth}
\newcolumntype{D}{>{\centering\arraybackslash}m{\widd}}
%\begin{table*}[!htp]\sffamily
%\hspace{2mm}
\begin{center}
%\hspace{-6mm}
\begin{tabular}{l*2{D}@{ }}
%\hspace{-6mm}
    &  Raw Returns (RR) & Market-Excess Returns (MR)  \\ 
\hline 
\rotatebox{90}{1-day}  %\hspace{-5mm}
%& \includegraphics[width=\widd]{{PLOTS/PS/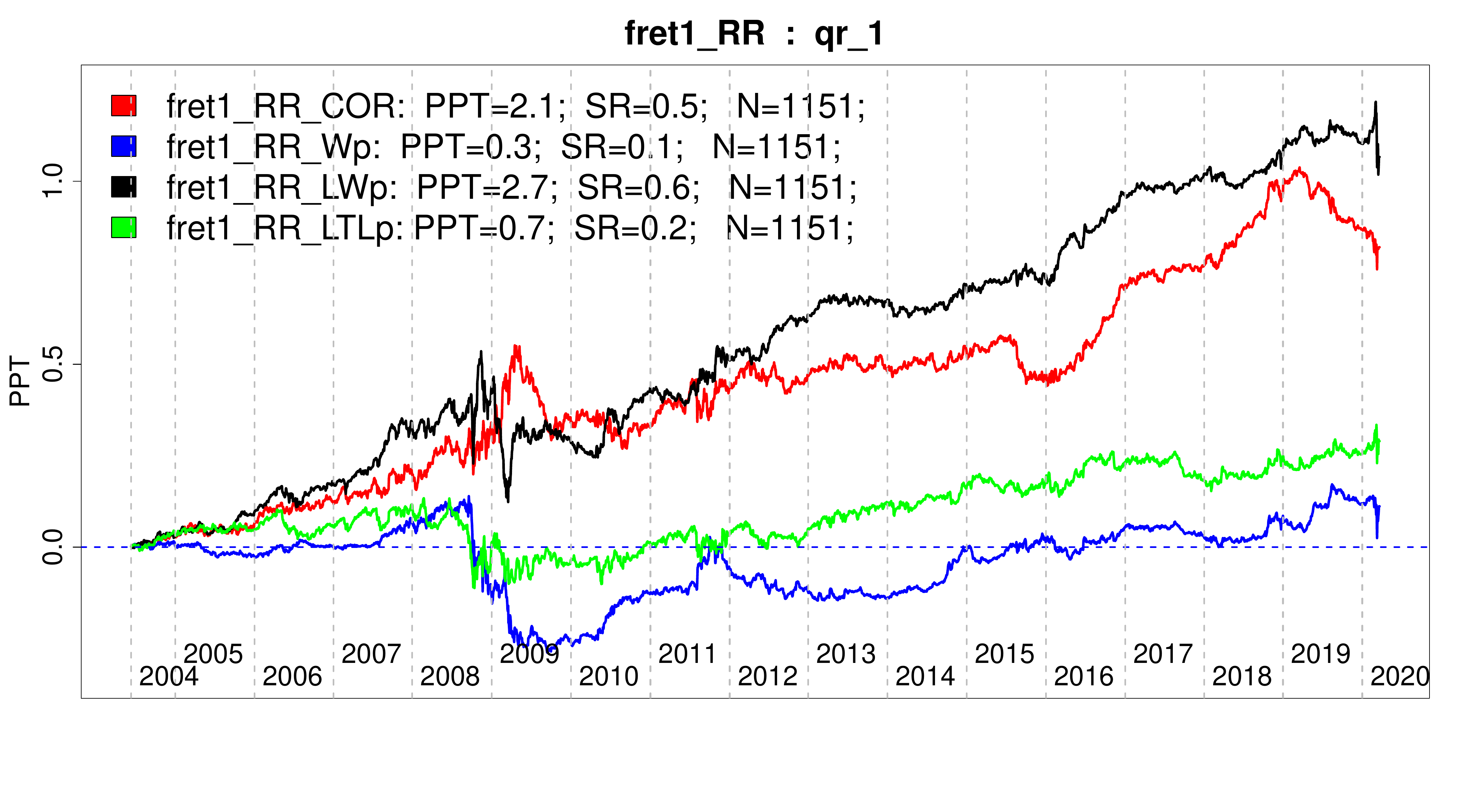}} 
%& \includegraphics[width=\widd]{{PLOTS/PS/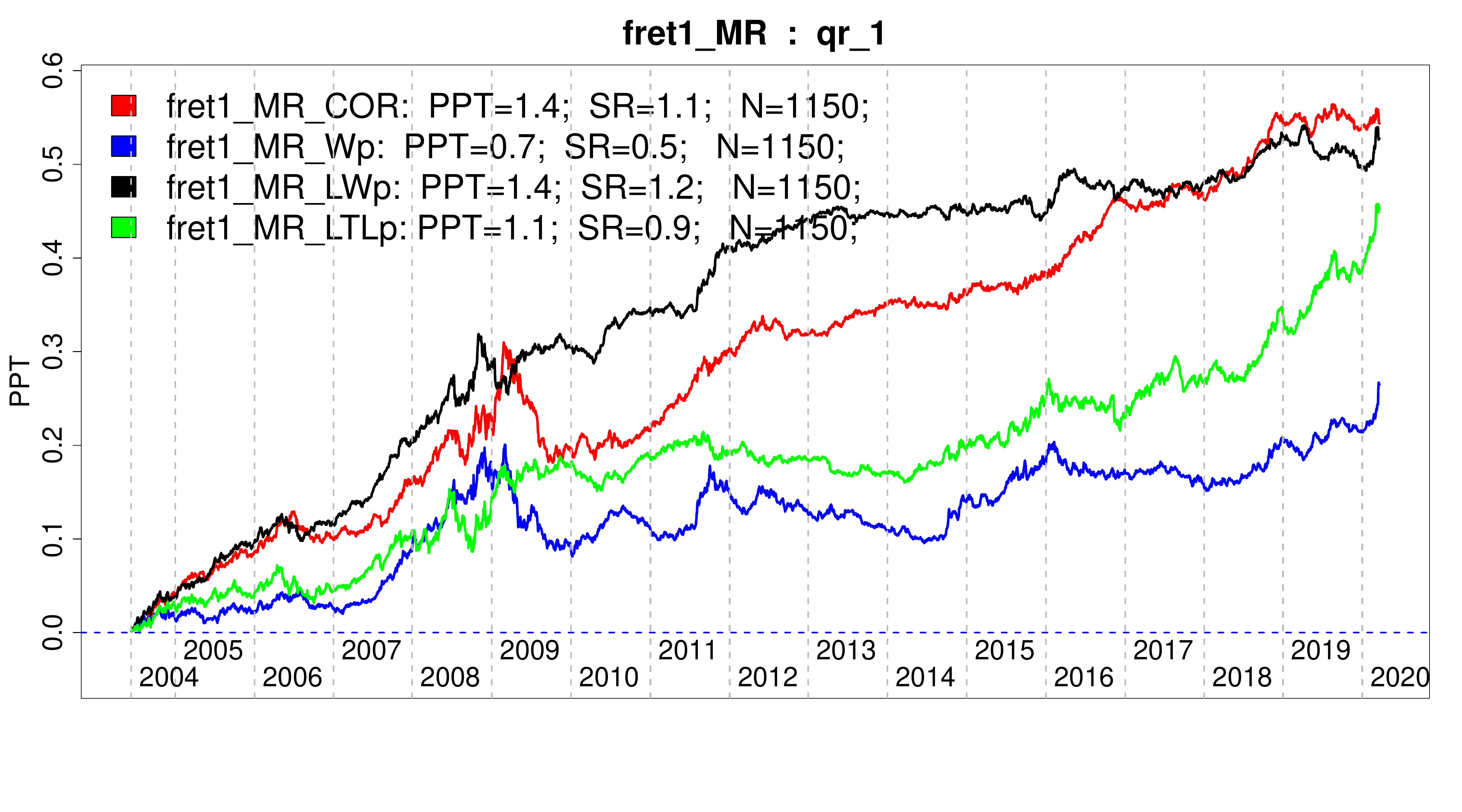}}  \\
& \includegraphics[width=\widd]{{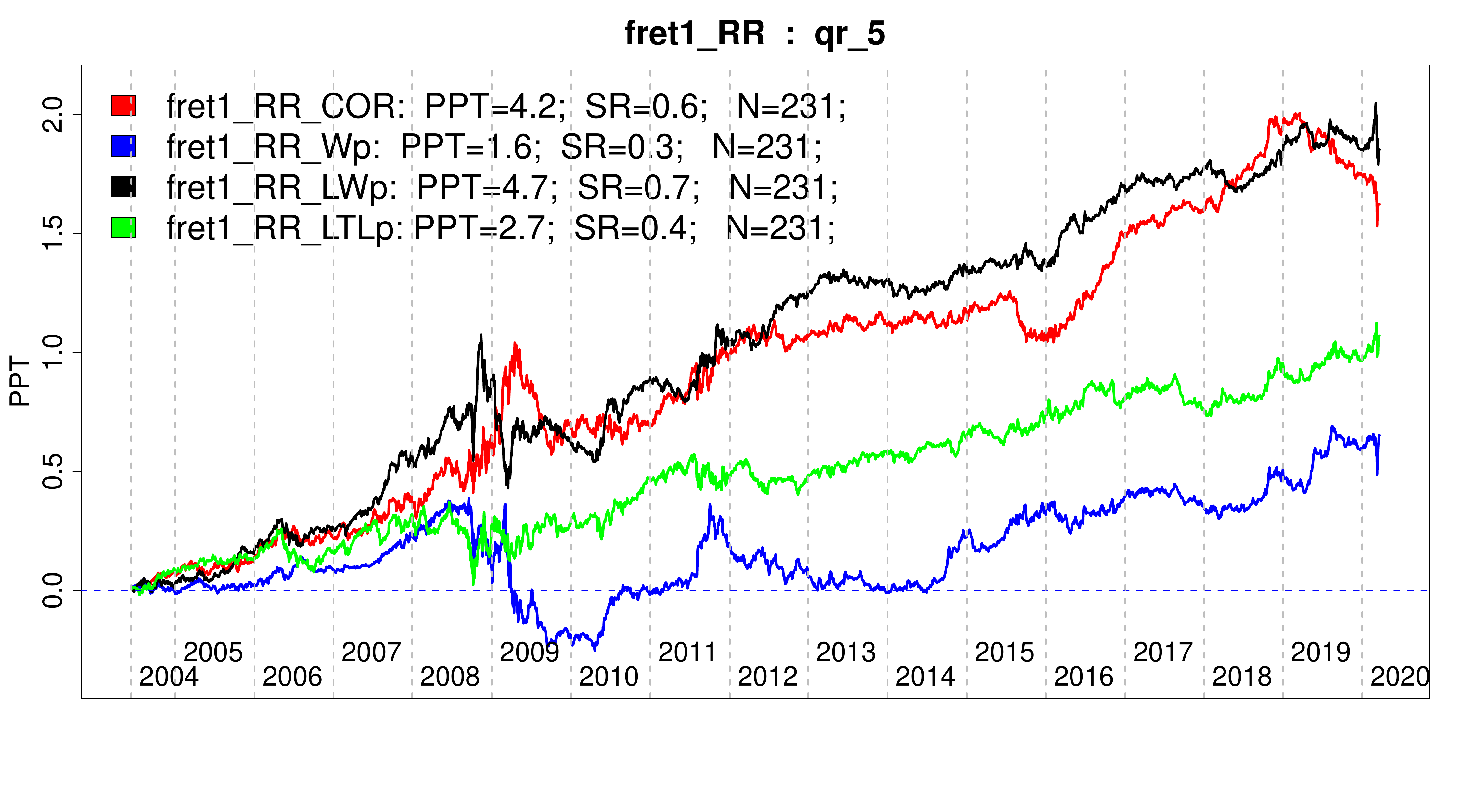}} 
& \includegraphics[width=\widd]{{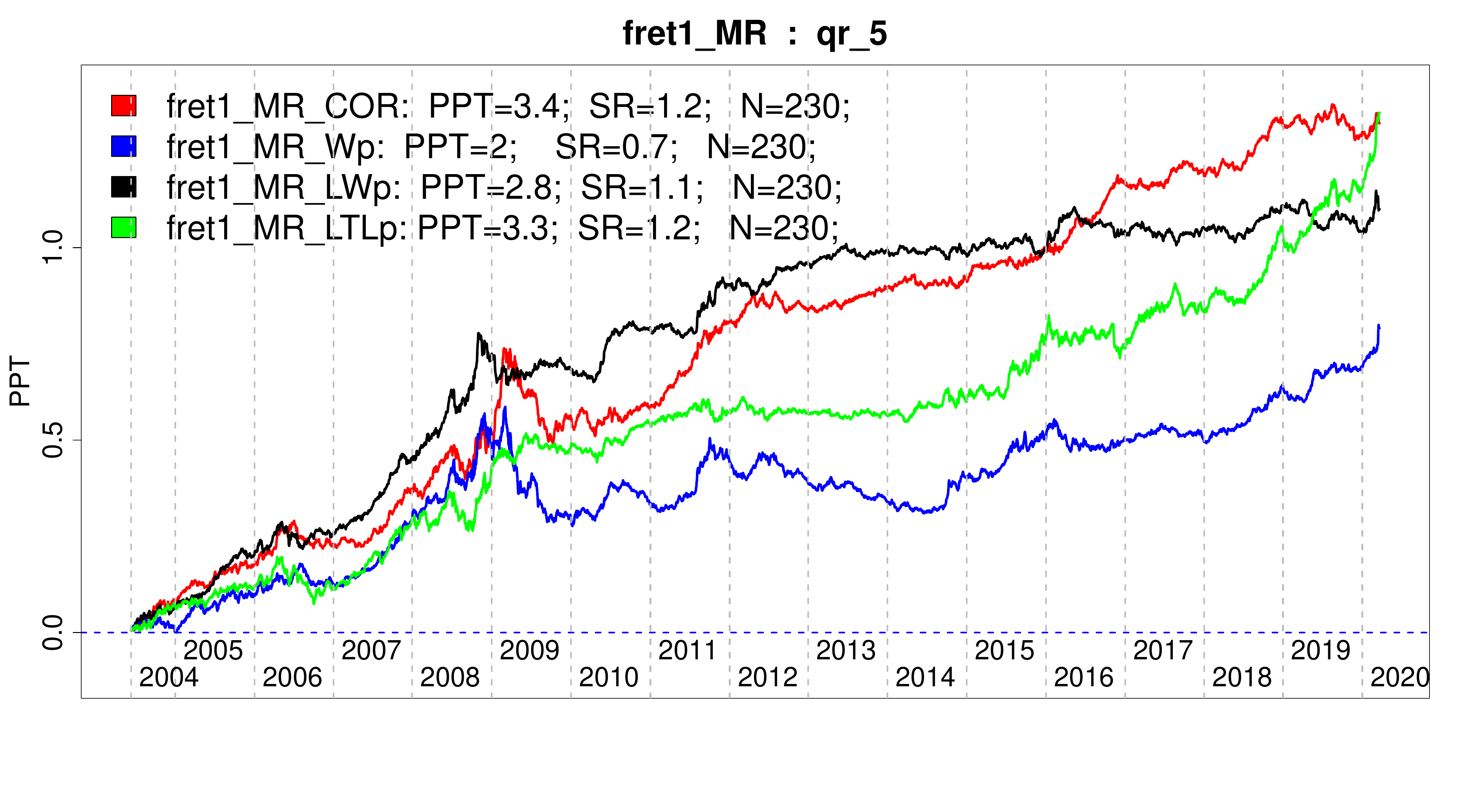}}  \\
% \hrule
\hline 
%\vspace{-4mm}
\rotatebox{90}{5-day}  %\hspace{-5mm}
%& \includegraphics[width=\widd]{{PLOTS/PS/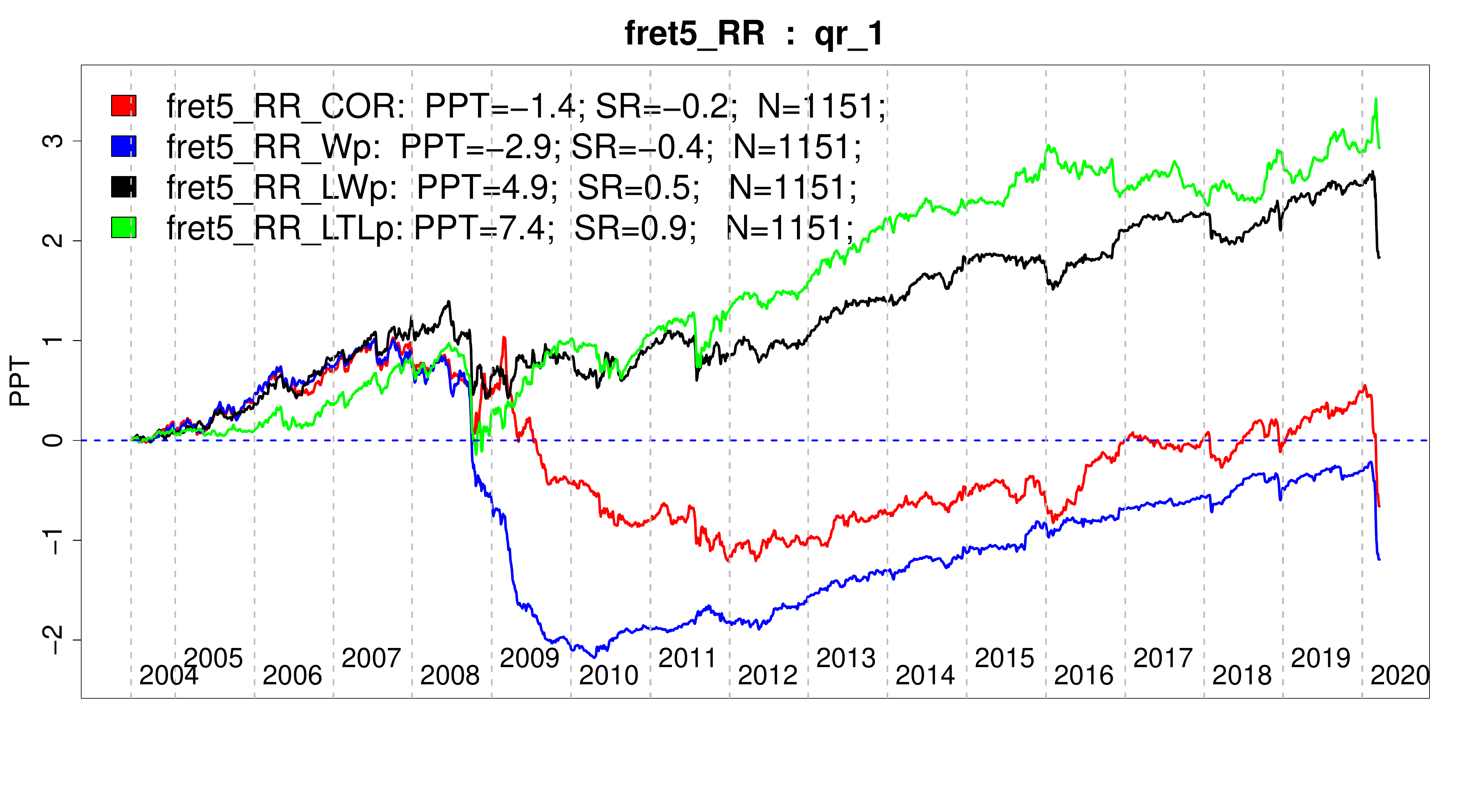}} 
%& \includegraphics[width=\widd]{{PLOTS/PS/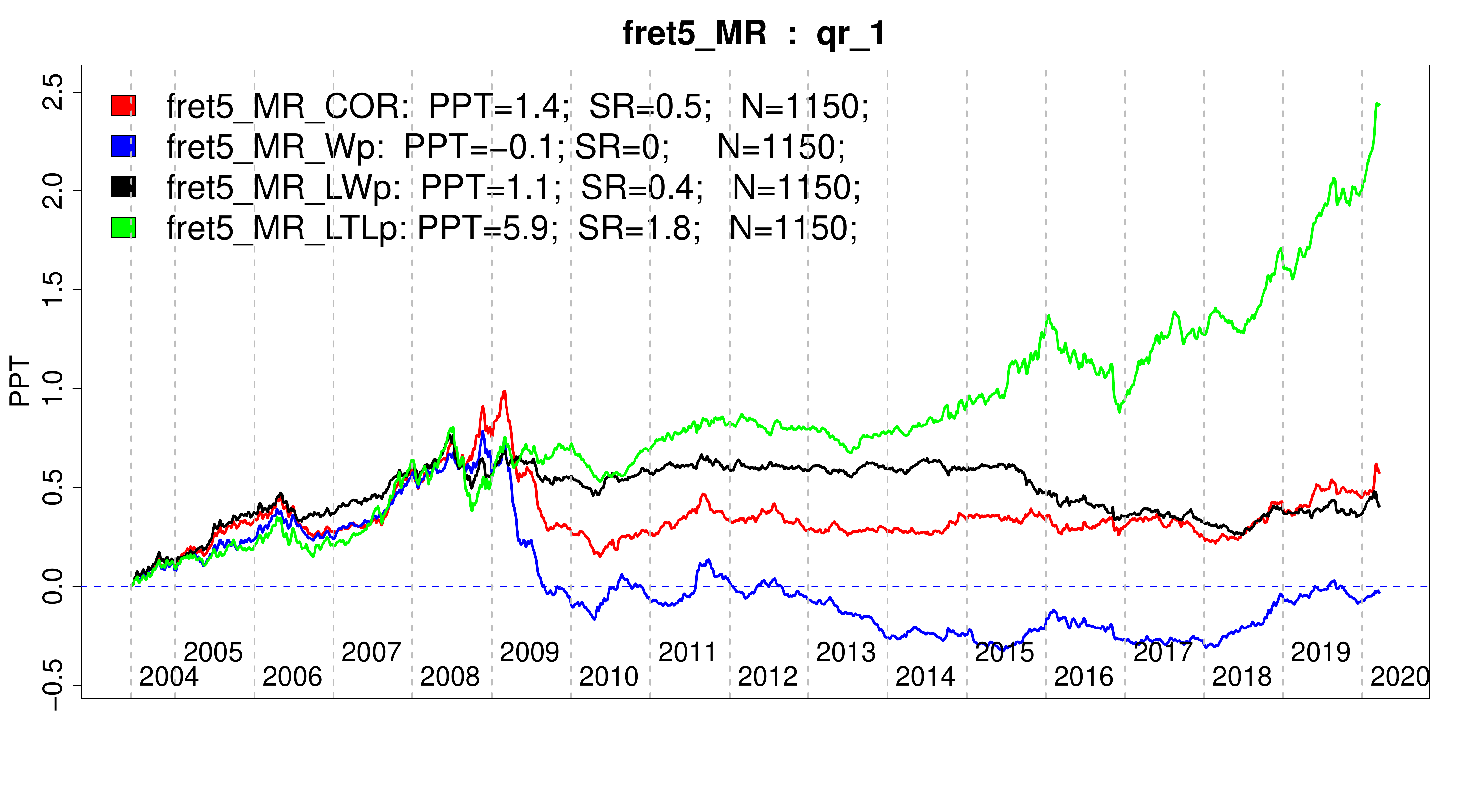}}  \\
& \includegraphics[width=\widd]{{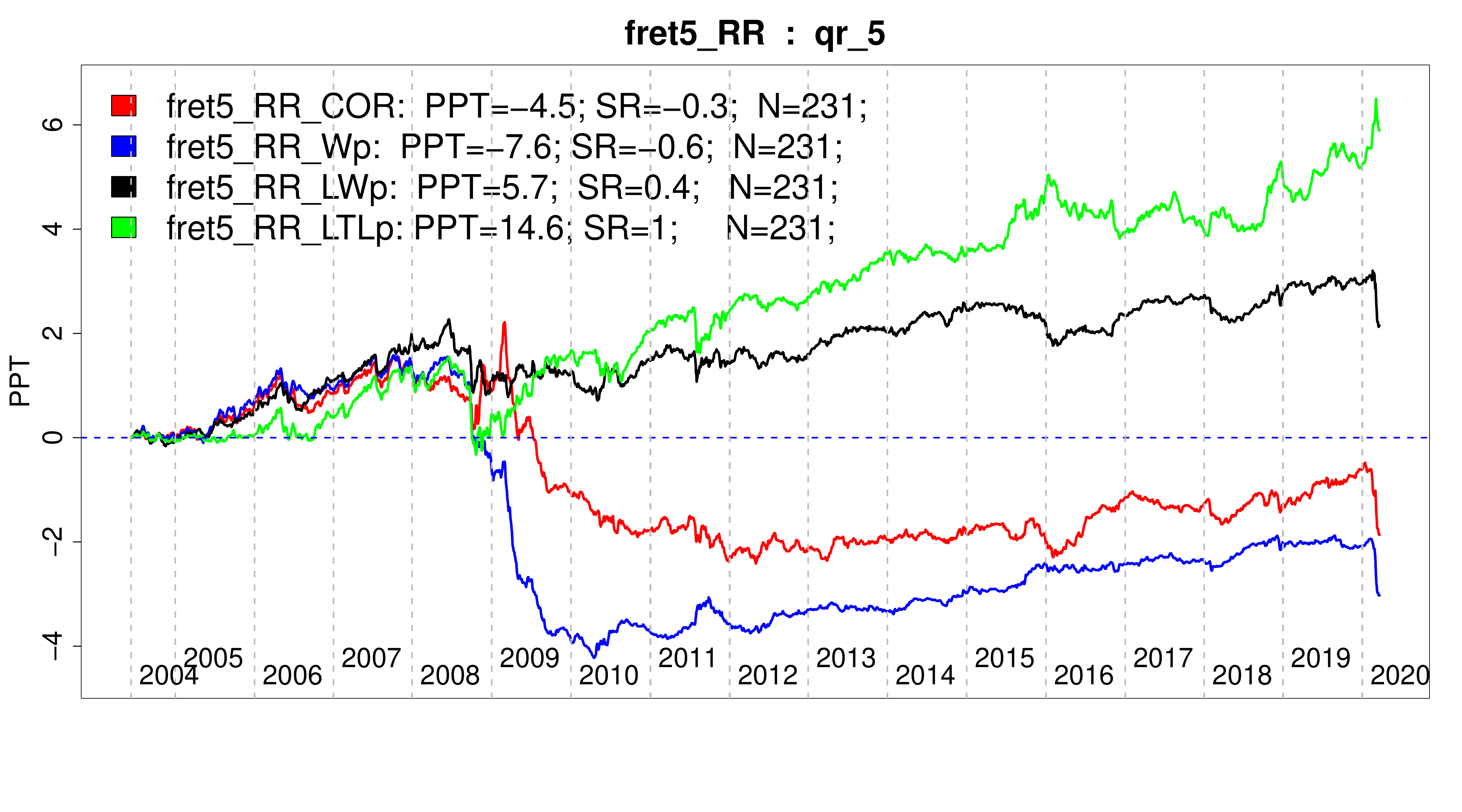}} 
& \includegraphics[width=\widd]{{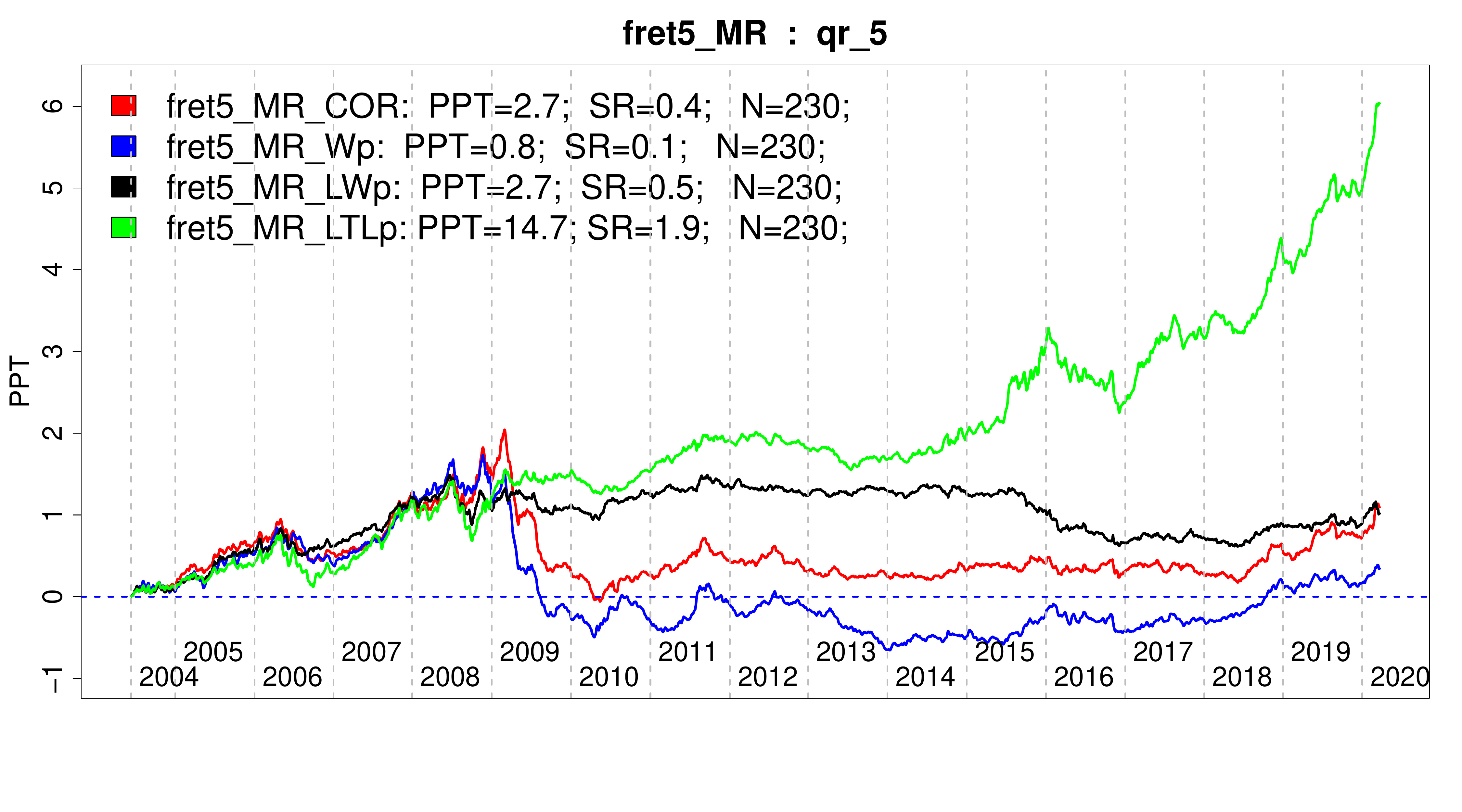}}  \\
\hline 
\rotatebox{90}{10-day}   %\hspace{-5mm}
%& \includegraphics[width=\widd]{{PLOTS/PS/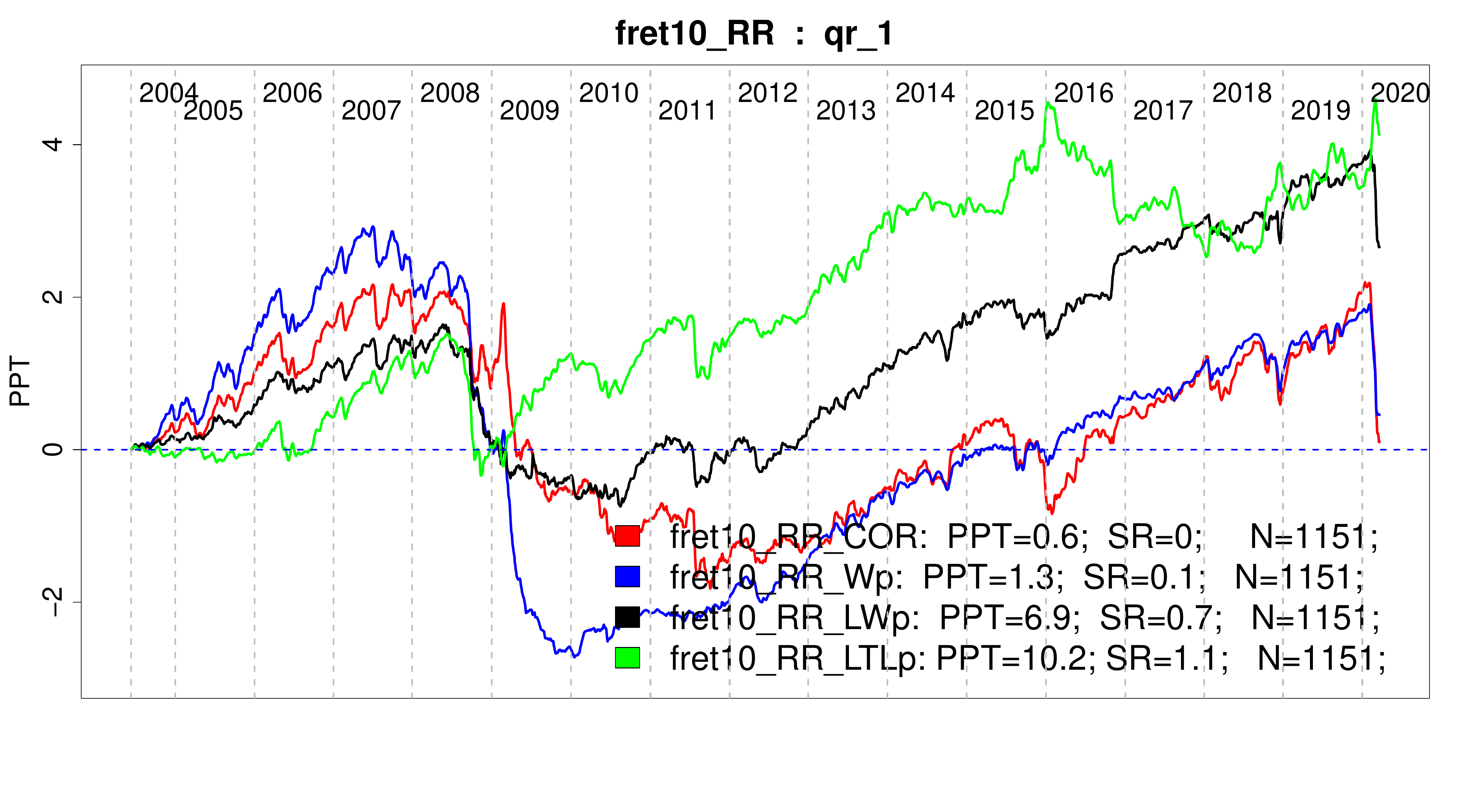}} 
%& \includegraphics[width=\widd]{{PLOTS/PS/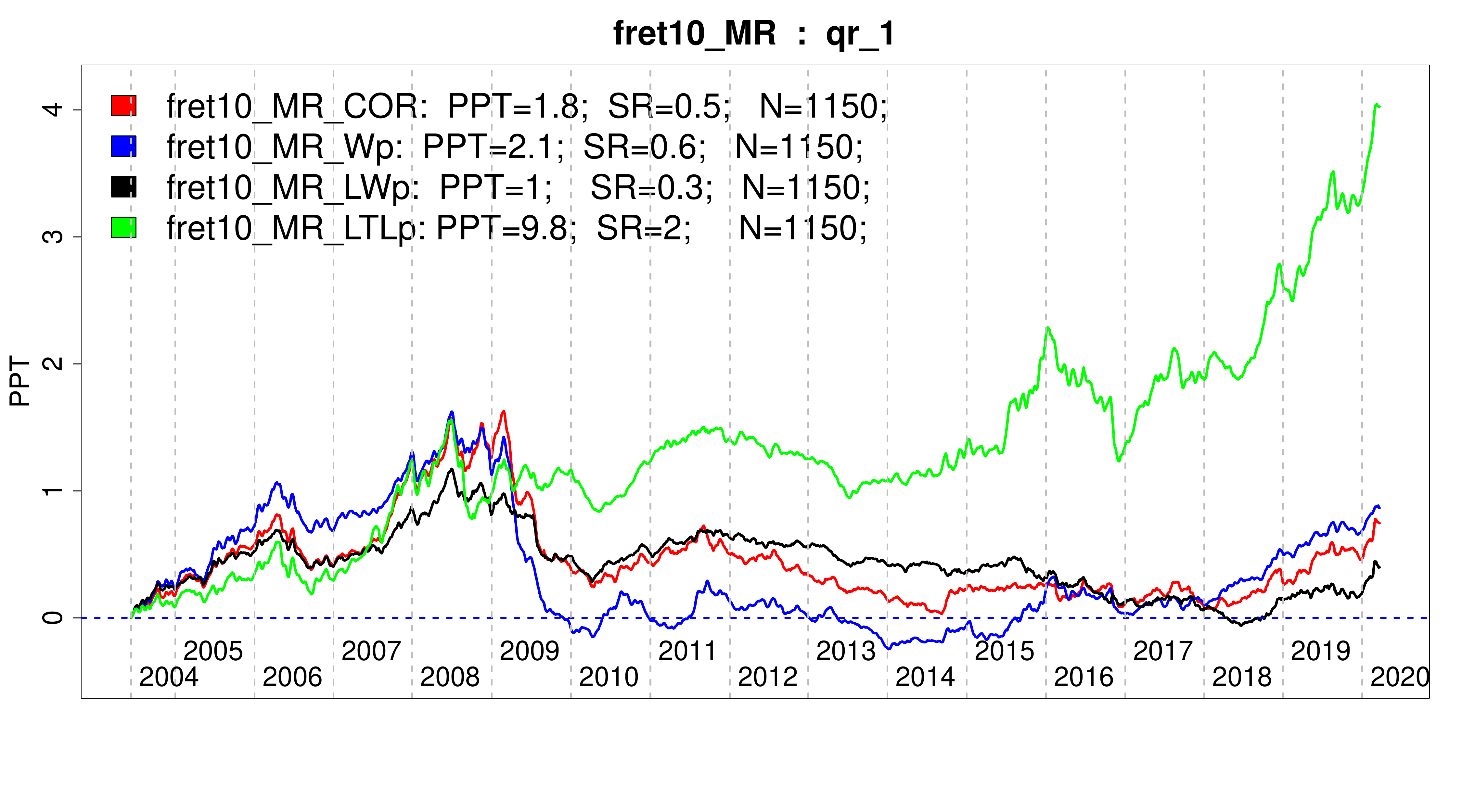}}  \\
& \includegraphics[width=\widd]{{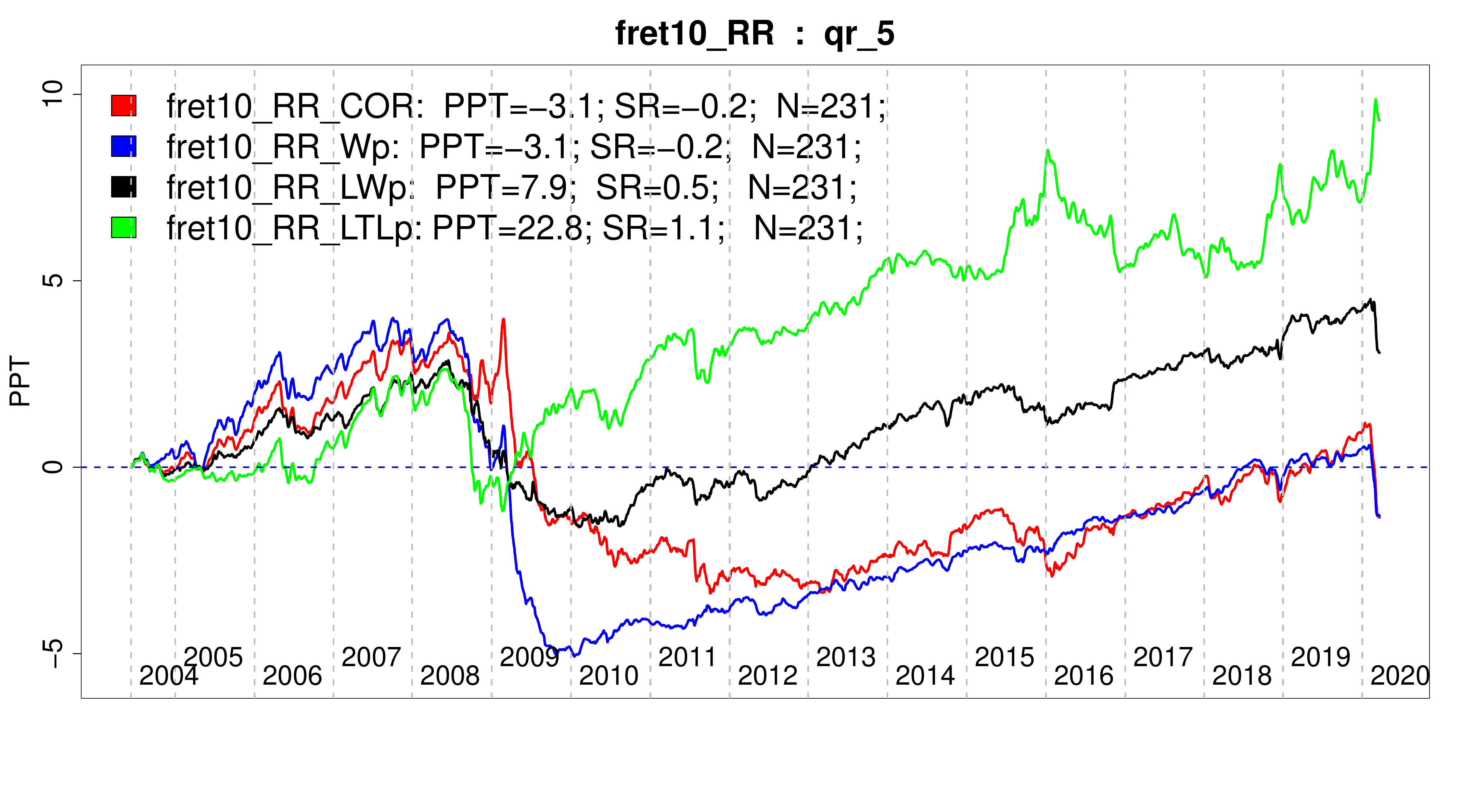}} 
& \includegraphics[width=\widd]{{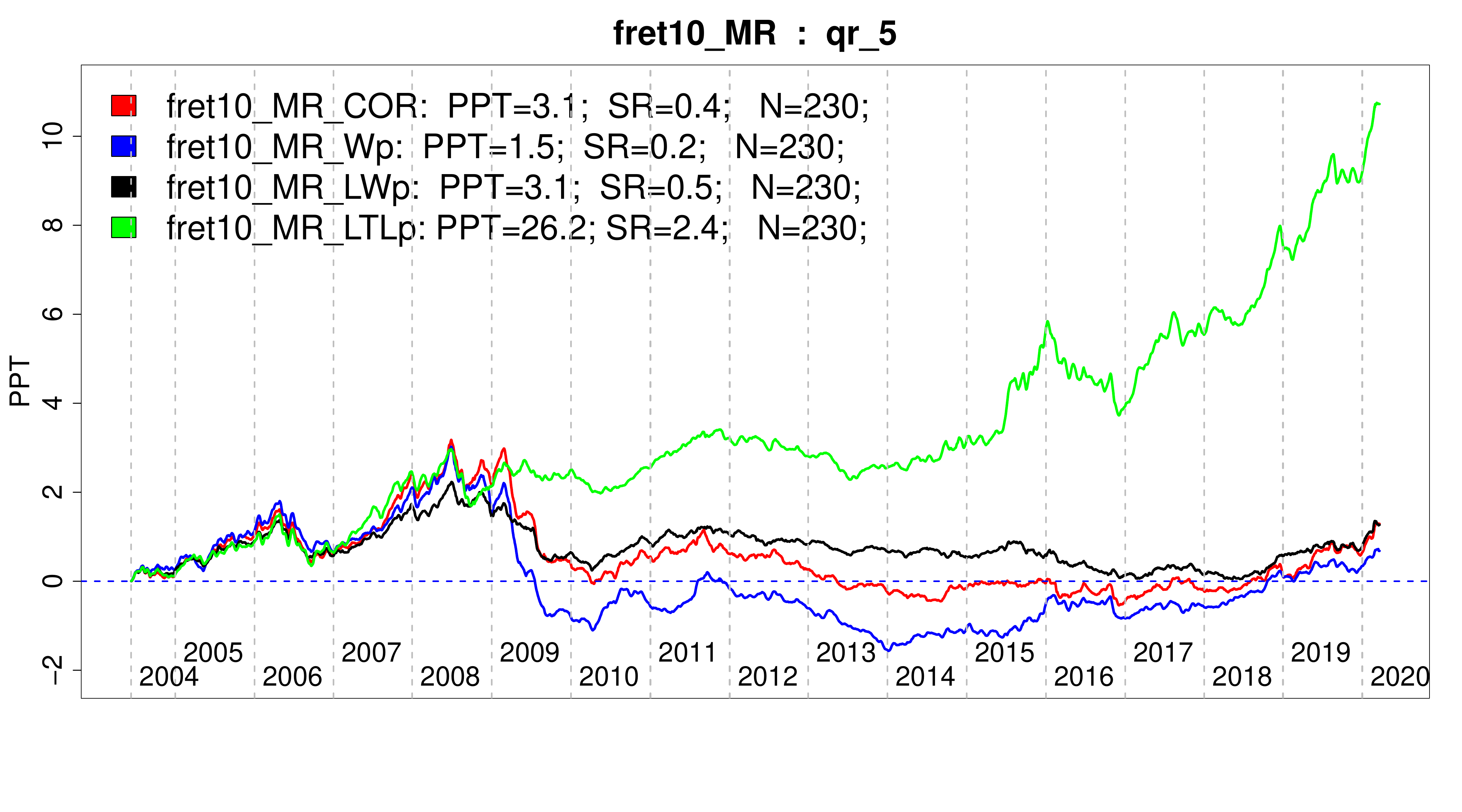}} \\
\hline 
\end{tabular}
\end{center}
\captionsetup{width=0.99\linewidth}
%\vspace{-1mm}
%\captionof{figure}{
\caption{Cumulative PnL of the top quintile portfolio $qr_{5}$ for the  1,5,10-day future horizons, for raw returns and market-excess returns, across different methods. The legend contains performance statistics of each forecast method: PnL Per Trade (PPT) in basis points, Sharpe Ratio (SR), while $N$ denotes the size of the portfolio (since $qr_{5}$ contains the top 20\% largest magnitude forecasts, this amount to $N \approx 1150/5 =  230$ stocks.)
}
\label{fig:PS_knn_100_MaxHist_Inf_EvoQR5}	
%\end{table*} 
\end{figure}

\section{Discussion}

We have presented a linear optimal transport distance for use in pattern recognition tasks based on the $\TLp$ distance. The $\TLp$ distance generalises the Wasserstein distance such that it can be applied to multi-channelled data, for example colour images and multivariate time-series. However, when pairwise distances are needed the computation of the $\TLp$ distance can make its routine application in pattern recognition tasks computationally infeasible. We proposed a method to alleviate this problem by extending the $\LWp$ framework to the $\TLp$ setting and call this the $\LTLp$ framework. In a dataset of $N$ signals, these linear transportation approaches need to only compute $N$ transport maps/distances and thus the cost scales linearly in the number of signals. In contrast, if pairwise distance are needed then the cost scales quadratically with the number of signals, which can render the problem infeasible.
\vspace{\baselineskip}

We developed the theory required to allow these transportation methods to be applied. Showing the preservation of these transportation distances with respect to a reference signal, as well as showing that the $\LTLp$ defines a bona fide metric on the $\TLp$ space. We additionally showed that the linear transportation methods facilitates a linear embedding of our signals allowing simple statistical methods to be applied to the data.
\vspace{\baselineskip}

%We compared the LOT and $\LTLp$ approaches on two synthetic examples and showed that in both cases $\LTLp$ out performs LOT. LOT suffers from lack of sensitivity to high-frequency perturbations and the requirement of positivity and mass constraints result in signal compression when normalisation is applied. The sensitivity of the $\LTLp$ metric to high-frequency perturbations and no normalisation led to tighter clusters for each class and was thus a significant improvement over LOT.
%\vspace{\baselineskip}

We compared our methods on Auslan, Breast Cancer Histopathology and Financial times series problems. %, in which a complex multivariate times series is recorded from measurements of hand movements of an Australian signer. 
In each case, the $\LTLp$ approach significantly outperformed the $\LWp$ approach. % in the classification task of signal to signs. 
Furthermore, this came at minimal increased computational cost.
Even though $\TLp$ often outperformed $\LTLp$ on this task this improved performance comes at an unreasonable additional computational cost.
\vspace{\baselineskip}

Our method still relies on the computation of transport maps and this comes at a cost. We have found entropy regularised methods to perform well but suffer from instability. Fast and stable algorithms which are memory efficient are still needed by the community for reliable computation of transport maps.
\vspace{\baselineskip}

Extensions of the $\TLp$ framework could be to the manifold setting \cite{Feldman:2002}, which would allow computation of the $\TLp$ distance on a graph. The $\TLp$ barycentre and other transformations have also yet to be explored. Considering $\TLp$ has a spatial penalty it could also be combined with other penalty terms for more complex applications, for example one could also include a penalty on derivatives of signals as in the $\TWkp$ distance, see~\cite{Thorpe:2017}. %Indeed little progress on the theory and application of $\TLp$ and $\TWkp$ distances has been made.
\vspace{\baselineskip}

%\mt{I think it would make a very nice application to financial data that captures trade events, which occur in a very asynchronous manner throughout the day across the many different instruments, and TLP could make a nice job at capturing this phenomena as it could allow for comparison of signals at different discretizations?}

\section*{Acknowledgements}

This work was supported by The Alan Turing Institute under EPSRC grant EP/N510129/1.
In addition the authors are grateful for discussions with Elizabeth Soilleux, whose interest in machine learning methods for diagnosing coeliac disease motivated this work, and Dejan Slep\v{c}ev.
OMC is a Wellcome Trust Mathematical Genomics and Medicine student and is grateful for generous funding from the Cambridge school of clinical medicine.
MC acknowledges support from the EPSRC grant EP/N510129/1 at The Alan Turing Institute.
TDH was supported by The Maxwell Institute Graduate School in Analysis and its Applications, a Centre for Doctoral Training funded by the EPSRC (grant EP/L016508/01), the SFC, Heriot-Watt University and the University of Edinburgh. 
CBS acknowledges support from the Leverhulme Trust project on `Breaking the non-convexity barrier', the Philip Leverhulme Prize, the Royal Society Wolfson Fellowship, the EPSRC grants EP/S026045/1 and EP/T003553/1, the EPSRC Centre Nr. EP/N014588/1, the Wellcome Innovator Award RG98755, European Union Horizon 2020 research and innovation programmes under the Marie Skodowska-Curie grant agreement No. 777826 (NoMADS) and No. 691070 (CHiPS), the Cantab Capital Institute for the Mathematics of Information and the Alan Turing Institute.
MT is grateful for the support of the Cantab Capital Institute for the Mathematics of Information (CCIMI) and Cambridge Image Analysis (CIA) groups at the University of Cambridge, and is supported by the European Research Council under the European Union's Horizon 2020 research and innovation programme grant agreement No 777826 (NoMADS) and grant agreement No 647812.
KCZ was supported by the Alan Turing Institute under the EPSRC grant EP/N510129/1.
\vspace{2\baselineskip}

%\clearpage
%\bibliographystyle{natbib}
%\bibliographystyle{unsrt}
\bibliographystyle{plain}
\bibliography{optimaltransport}

\appendix
\appendixpage

In the appendix we include some additional background on computing the $\TLp$ distance and further applications.

\section{Computing the \texorpdfstring{$\LTLp$}{LTLp} Embedding} \label{app::section::ComputingTLp}

In this section we review some methods for computing the $\LTLp$ embedding.
In principle, any algorithm that can compute optimal transport distances can be adapted to compute $\TLp$ by either interpreting $\TLp$ as a Wasserstein distance on the graphs of functions, or as an optimal transport problem with cost $c(x,y;f,g) = |x-y|_p^p + |f(x)-g(y)|_p^p$.
We refer to~\cite{peyre18} for a thorough review of computational methods for optimal transport.
Here, we review entropy regularised optimal transport and flow minimisation in the setting of $\TLp$.
We note that as the Kantorovich problem is a linear program then one can use algorithms such as the \emph{simplex} or \emph{interior-point} methods.
Although there are multi-scale approaches~\cite{Oberman:2015} these are typically not state-of-the-art for high dimensional images/signals and so we omit them from this review.

Once we have obtained optimal $\TLp$ maps $\tilde{T}^{\tilde{\mu}_i}:\Omega\times \bbR^m\to\Omega\times \bbR^m$ for each of the transportation problems between $\tilde{\sigma}\in \cP(\Omega\times\bbR^m)$ and $\tilde{\mu}_i\in\cP(\Omega\times\bbR^m)$ for $i = 1,..., N$ we can embed into Euclidean space by~(\ref{eq:Methods:LTLpPd}-\ref{eq:Methods:LTLptildePd}) where $\tilde{T}^{\tilde{\mu}_i} = (T^{\mu_i},f_i\circ T^{\mu_i})$. %, we have $n$ optimal transportation maps $T_{i}$ for $i = 1,..,n$. Assuming the cost function $c$ is the $p$ - Euclidean distance $|\boldsymbol{x} - \boldsymbol{y}|^p$, then the Euclidean embedding of $\tilde{\mu}_i \in TL^{p}$, with respect to $\tilde{\sigma}$, is given by the following mapping
%\begin{equation}
%P(\tilde{\mu}_i)\cdot \sigma^{1/p} = (T_{i} - \Id, h \circ T_i - f)\cdot \sigma^{1/p}.
%\end{equation}
Hence, linear statistical methods can be applied. %In some cases we maybe interested in using derivative information of the signals, in Section~\ref{subsec:Methods:Sobolev} we further extend the theory to the case of the linear transportation Sobelev distance.

%\subsubsection{Linear Programming}

%Consider the case when
%\begin{equation} \label{eq:Methods:MeasAss}
%\mu = \sum_{i=1}^m p_i \delta_{x_i}, \quad \nu = \sum_{j=1}^n q_j \delta_{y_j}, \quad f_i=f(x_i) \quad \text{and} \quad g_j = g(y_j).
%\end{equation}
%Then the $\TLp$ distance can be written as
%\[ \dTLp^p((\mu,f),(\nu,g)) = \min_{\pi} \sum_{i=1}^m \sum_{j=1}^n \l |x_i - y_j|_p^p + |f_i - g_j|_p^p \r \pi_{ij} \]
%where the minimum is taken over matrices $\pi\in \bbR^{n\times m}_+$ such that the row sums are $(p_1,\dots, p_m)$ and the column sums are $(q_1,\dots, q_n)$.
%This is a linear programme and can be solved, for example, by using the simplex or interior point methods.
%Although this works well for small problems (i.e. when $n$ and $m$ are small) linear programming algorithms quickly become infeasible for large problems. 
%There are ad-hoc multiscale methods such as~\cite{Oberman:2015} that allow for the application of linear programming methods to larger problems but these do not have theoretical guarantees (for example, there are no guarantees regarding finding the global minimiser).

\subsection{An Entropy Regularisation Approach} \label{subsubsec:Methods:ComputingTLp:Ent}

We assume two pairs $(\mu,f),(\nu,g)\in\TLp$ can be written in the form
\[ \mu = \sum_{i=1}^m p_i \delta_{x_i}, \quad \nu = \sum_{j=1}^n q_j \delta_{y_j}, \quad f_i=f(x_i) \quad \text{and} \quad g_j = g(y_j). \]
It was proposed in~\cite{Cuturi:2013} to consider the entropy regularised problem
\begin{equation} \label{eq:Methods:EntReg}
S_\eps((\mu,f),(\nu,g)) = \min_{\pi} \l \sum_{i=1}^m \sum_{j=1}^n \l |x_i - y_j|_p^p + |f_i - g_j| \r \pi_{ij} - \eps H(\pi) \r
\end{equation}
where $\eps>0$ is a positive parameter that controls the amount of regularisation, $H$ is entropy and defined by
\[ H(\pi) = - \sum_{i=1}^n \sum_{j=1}^m \pi_{ij} \log \pi_{ij} \]
and the minimum in~\eqref{eq:Methods:EntReg} is taken over matrices $\pi\in \bbR^{n\times m}_+$ such that the row sums are $\boldsymbol{p} = (p_1,\dots, p_m)$ and the column sums are $\boldsymbol{q} = (q_1,\dots, q_n)$.
When $\eps\to 0$, the results of~\cite{carlier17} imply that $S_\eps((\mu,f),(\nu,g))\to \dTLp^p((\mu,f),(\nu,g))$.
Subsequent developments of the entropy regularised approach have appeared in~\cite{Cuturi:2014,Benamou:2015}. %; here we describe the original method~\cite{Cuturi:2013}.
%Though, in principle, any algorithm than can compute OT distances can be adapted to compute $TL^{p}$ distances \citep{Peyre:2017}. Consider a set of signals $((\mu_1, f_1),...,(\mu_n, f_n))$ that we wish to embed in Euclidean space. Let $(\sigma, h)$ be the reference signal, which could be, say, the empirical average of all signals. Lift all signals so that the base measure is supported on its graph; precisely, construct the pushforward $\tilde{\mu}_i = (\Id \times f_i)_*\mu_i$. For simplicity we consider the cost matrix $C_{ij} = |x_i - y_j|^{p} + |f_i - h_j|^{p}$. We perform computations in the discrete setting and take an entropy regularised approach by considering:
%\begin{equation}
%S_{\varepsilon} = \inf_{\tilde{\pi} \in \tilde{\Pi}(\tilde{\sigma}, \tilde{\mu}) = \tilde{\mu}}\sum_{i,j = 1}^{N}C_{ij}\tilde{\pi}_{ij} - \varepsilon H(\tilde{\pi}),
%\end{equation}
%where $H(\tilde{\pi}) = - \sum_{i,j = 1}^{N}\tilde{\pi}_{ij}\log \tilde{\pi}_{ij}$ is the entropy.
The measure $S_{\varepsilon}$ is referred to as the Sinkhorn distance.
It is easy to see that
%\begin{equation}
\[ S_{\varepsilon}((\mu,f),(\nu,g)) = \varepsilon \inf_{\pi} \{\KL(\pi|K)\}, \]
%\end{equation}
where $K_{ij} = \exp\left(- \frac{C_{ij}}{\varepsilon}\right)$ is the Gibbs distribution, $C_{ij} = |x_i-y_j|^p_p + |f_i-g_j|_p^p$, and $\KL$ denotes the \emph{Kullback-Leibler} divergence.
The minimisation is taken over the same set as in~\eqref{eq:Methods:EntReg}
The optimal choice for $\pi$ can be written in the following form:
%\begin{equation}
\[ \pi^{\dagger} = \diag(u)K\diag(v), \]
%\end{equation} 
where $u,v$ are the limits, as $r \to \infty$, of the sequence
%\begin{equation}
\[ v^{(0)} = \boldsymbol{1},\,\,u^{(r)} = \frac{\boldsymbol{p}}{Kv^{(r)}},\,\,v^{(r+1)} = \frac{\boldsymbol{q}}{K^{T}u^{(r)}}, \]
%\end{equation}
%where $\boldsymbol{q}$ and $\boldsymbol{p}$ are vectors of the masses of $\tilde{\sigma}$ and $\tilde{\mu}$ respectively~\mtold{[Add citation!!!]}.
see~\cite{Benamou:2015}.
The entropy regularisation means the optimal $\pi$ for $S_\eps$ cannot be written as a transport map.
To obtain an approximation to the optimal transport map one can use Barycentric projections as in~\cite[Section 2.3]{Oberman:2015}.

\subsection{A Flow Minimisation Approach}

Following~\cite{Haker:2004} we derive a flow minimization method for finding the transportation map in $\TLtwo$. Let $\Omega \subset \bbR^d$ be a compact domain with smooth boundary and let $(\mu, f)$ and $(\nu, g)$ be signals in $\TLtwo(\Omega, \mathbb{R}^m)$, where $f,g: \Omega \to \mathbb{R}^m$ are square-integrable functions. Furthermore, we assume that the measures $\mu$ and $\nu$ admit densities with respect to the Lebesgue measure. Abusing notation we write $\dd\mu(x) = \mu(x)\dd x$. The variational $\TLtwo$ problem is finding the diffeomorphic map $T:\Omega \to \Omega$, which minimises the following energy
\begin{align}
\eps(T) & = \int_{\Omega} \l |T(x) - x|^2_2 + | g(T(x)) - f(x)|^2_2 \r \mu(x)\,\dd x \label{eq:Methods:epsT} \\
&\text{subject to }T_*\mu = \nu. \label{eq:Methods:epsTCons}
\end{align}
We assume the following polar factorization of $T$.
Let $s:\Omega \times [0,\infty) \to \Omega$  and assume the second coordinate is time. We further assume for any fixed $t$, $[s(\cdot,t)]_*\mu = \mu$.
That is, $s(\cdot,t):\Omega\to \Omega$ is a mass preserving rearrangement of $\mu$.
Let $T^0:\Omega\to\Omega$ be an initial mass preserving map between $\mu$ and $\nu$, i.e. $T^0_* \mu = \nu$, for example the Knothe-Rosenblatt coupling \cite{Villani:2008}.
We assume that $s(\cdot,t)$ is invertible in $x$ for every $t$ and with an abuse of notation we write $s^{-1}$ for this inverse, i.e.
\begin{equation} \label{eq:Methods:sInv}
s^{-1}(s(x,t),t) = x = s(s^{-1}(x,t),t) \quad \text{for all } x\in \Omega \text{ and for all } t\in [0,\infty).
\end{equation}
We require that $T = T^0 \circ s^{-1}$.
The strategy in~\cite{Haker:2004} is to evolve $s(\cdot,t)$ using a gradient descent step such that it converges to a minimiser of~\eqref{eq:Methods:epsT} satisfying the constraint~\eqref{eq:Methods:epsTCons} as $t\to \infty$. 
We first consider sufficient conditions on $s$ in order to guarantee that~\eqref{eq:Methods:epsTCons} holds for all $t>0$.
The proof of the proposition can be found in~\cite[Section A.2]{Haker:2004}.

\begin{proposition}
\label{prop:Methods:sEvol}
Let $\Omega\subset\bbR^d$ be a compact domain with a smooth boundary and $\mu,\nu\in\cP(\Omega)$.
Assume that $\mu$ and $\nu$ have $C^1$ densities with respect to the Lebesgue measure on $\Omega$ and with an abuse of notation write $\dd\mu(x) = \mu(x) \, \dd x$ and $\dd\nu(x) = \nu(x) \, \dd x$
Let $\chi$ be a $C^1$ vector field on $\Omega$ satisfying $\Div(\chi) = 0$ on $\Omega$ and $\chi\cdot \n = 0$ on $\partial \Omega$ where $\n$ is the normal to the boundary of $\Omega$.
Assume $s:\Omega\times [0,\infty)\to \Omega$ is differentiable and invertible in the sense of~\eqref{eq:Methods:sInv}, and $T^0$ satisfies $T^0_*\mu = \nu$.
If, $s(\cdot,0) = \Id$ and for all $t>0$
\begin{equation} \label{eq:Methods:sEvol}
\frac{\partial s}{\partial t}(x,t) = \frac{1}{\mu(s(x,t))} \chi(s(x,t))
\end{equation}
then $[s(\cdot,t)]_*\mu = \mu$.
Furthermore $\frac{\partial T^t}{\partial t} = - \frac{1}{\mu}\nabla T^t\chi$ and $T^t_*\mu = \nu$ where
\begin{equation} \label{eq:Methods:Tt}
T^t = T^0(s^{-1}(\cdot,t)).
\end{equation}
\end{proposition}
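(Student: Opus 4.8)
The plan is to read the evolution equation~\eqref{eq:Methods:sEvol} as saying that $s(\cdot,t)$ is the flow of the \emph{autonomous} velocity field $v:=\chi/\mu$, since it reads $\partial_t s(x,t)=v(s(x,t))$ with $s(\cdot,0)=\Id$. The boundary hypothesis $\chi\cdot\n=0$ transfers to $v\cdot\n=0$ on $\partial\Omega$, so trajectories cannot exit $\Omega$ and $s(\cdot,t):\Omega\to\Omega$ is a well-defined diffeomorphism whose inverse $s^{-1}(\cdot,t)$ is exactly the map entering~\eqref{eq:Methods:Tt}. With this picture the whole proposition reduces to one central claim (mass preservation) plus two comparatively routine deductions.

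The crux is the identity $[s(\cdot,t)]_*\mu=\mu$, which I would establish by tracking the pushforward density. Writing $\rho(\cdot,t)$ for the density of $[s(\cdot,t)]_*\mu$, it solves the continuity equation $\partial_t\rho+\Div(\rho v)=0$ with $\rho(\cdot,0)=\mu$; since $\Div(\mu v)=\Div(\chi)=0$ by hypothesis, the time-independent field $\rho\equiv\mu$ is itself a solution, and uniqueness forces $\rho(\cdot,t)=\mu$ for all $t$. More self-containedly, I would set $m(x,t):=\mu(s(x,t))\,\det\nabla_x s(x,t)$, note $m(x,0)=\mu(x)$, and show $\partial_t m\equiv 0$: Liouville's formula gives $\partial_t\det\nabla_x s=(\det\nabla_x s)\,(\Div v)(s)$, while $\Div v=\Div(\chi/\mu)=-\chi\cdot\nabla\mu/\mu^2$ because $\Div\chi=0$, and the two resulting contributions to $\partial_t m$ cancel exactly. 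I expect this to be the main obstacle, because a naive change of variables is circular: it presupposes control of the very Jacobian one is trying to understand. The argument must genuinely use $\Div\chi=0$, either through uniqueness for the transport equation or through the Jacobian cancellation above.

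Granting mass preservation, $T^t_*\mu=\nu$ follows immediately. Because $s(\cdot,t)$ is a $\mu$-preserving bijection, so is $s^{-1}(\cdot,t)$ (apply $s^{-1}(\cdot,t)_*$ to $s(\cdot,t)_*\mu=\mu$), and therefore $T^t_*\mu=(T^0\circ s^{-1}(\cdot,t))_*\mu=T^0_*\big(s^{-1}(\cdot,t)_*\mu\big)=T^0_*\mu=\nu$.

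For the evolution of $T^t=T^0\circ s^{-1}(\cdot,t)$ I would differentiate in $t$ by the chain rule and compute $\partial_t s^{-1}$ from the defining identity $s(s^{-1}(y,t),t)=y$: differentiating in $t$ gives $\partial_t s^{-1}=-[\nabla_x s]^{-1}\,\partial_t s$, and evaluating $\partial_t s$ via~\eqref{eq:Methods:sEvol} at the base point $x=s^{-1}(y,t)$ (where $s(x,t)=y$) yields $\partial_t s=\chi(y)/\mu(y)$. Differentiating the same identity in $y$ gives $\nabla_y s^{-1}=[\nabla_x s]^{-1}$, so that $\partial_t T^t=\nabla T^0\,\partial_t s^{-1}=-\nabla T^0\,[\nabla_x s]^{-1}\,\chi/\mu=-\tfrac{1}{\mu}\nabla T^t\,\chi$, the stated formula. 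This final step is routine once the inverse-function derivatives are assembled; the only care required is the consistent evaluation of $\mu$, $\chi$, and the Jacobians at matching base points.
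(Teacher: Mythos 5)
Your argument is correct; note that the paper itself does not prove this proposition but defers entirely to \cite[Section A.2]{Haker:2004}, whose argument is essentially the one you give: mass preservation via $\Div(\mu v)=\Div(\chi)=0$ for the flow of $v=\chi/\mu$ (your Liouville/Jacobian cancellation is the self-contained version of this), followed by the chain-rule computation of $\partial_t T^t$ from the identities defining $s^{-1}$. The only point worth flagging is that well-posedness of the flow (and hence invertibility of $s(\cdot,t)$) would require $\mu$ bounded away from zero, but the proposition sidesteps this by assuming differentiability and invertibility of $s$ as hypotheses, so your remark that $s(\cdot,t)$ ``is a well-defined diffeomorphism'' should be read as part of the standing assumptions rather than something you derive.
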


If we restrict ourselves to look for transport maps of the form~(\ref{eq:Methods:sEvol}-\ref{eq:Methods:Tt}) then we must decide how to choose $\chi$.
Let us define $s_\chi:\Omega\times [0,\infty)\to \Omega$ by~\eqref{eq:Methods:sEvol} with $s_\chi(\cdot,0) = \Id$ and $T^t_\chi:\Omega\to \Omega$ by~\eqref{eq:Methods:Tt} with $s = s_\chi$.
An obvious criterion is to choose $\chi$ so that $\eps(T^t_\chi)$ decreases quickest over all choices of $\chi$.
To this end we compute the derivative of $\eps(T^t_\chi)$ with respect to $t$.

\begin{lemma}
In addition to the assumptions and notation of Proposition~\ref{prop:Methods:sEvol} let $f\in C^1(\Omega;\bbR^m)$ and $g\in L^2(\nu)$ and define $\eps$ by~\eqref{eq:Methods:epsT}.
Define $s_\chi:\Omega\times [0,\infty)\to \Omega$ by~\eqref{eq:Methods:sEvol} with $s_\chi(\cdot,0) = \Id$ and $T^t_\chi:\Omega\to \Omega$ by~\eqref{eq:Methods:Tt} with $s = s_\chi$.
Then we have
\[ \frac{\dd}{\dd t} \eps(T^t_\chi) = - \int_\Omega Q(x,t) \cdot \chi(x)  \, \dd x \] %\frac{\partial s_\chi}{\partial t}(s^{-1}_\chi(x,t),t) \, \mu(x)
where
\begin{equation} \label{eq:Methods:Q}
Q(t,x) = 2T^t_\chi(x) + 2\sum_{i=1}^m g_i(T_\chi^t(x)) \nabla f_i(x).
\end{equation}
% Q(t,x) = 2\sum_{i=1}^m \nabla f_i(x)\l g_i(T^t_\chi(x)) - f_i(x) \r + 2 (T^t(x) - x).
\end{lemma}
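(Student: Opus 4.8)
The plan is to reduce the computation to differentiation under the integral sign, but the immediate obstacle is that $\eps(T^t_\chi)$ contains the term $g(T^t_\chi(x))$ while $g$ is assumed only to lie in $L^2(\nu)$, so it cannot be differentiated in $t$. The key manoeuvre is to remove the $t$-dependence from $g$ by exploiting the mass-preservation property $[s_\chi(\cdot,t)]_*\mu = \mu$ established in Proposition~\ref{prop:Methods:sEvol}. Writing $T^t := T^t_\chi = T^0\circ s_\chi^{-1}(\cdot,t)$ and substituting $x = s_\chi(z,t)$ in both terms of $\eps$, the pushforward identity $\int_\Omega \Phi(x)\mu(x)\,\dd x = \int_\Omega \Phi(s_\chi(z,t))\mu(z)\,\dd z$ together with $T^t(s_\chi(z,t)) = T^0(z)$ yields
\[ \eps(T^t) = \int_\Omega \left( |T^0(z) - s_\chi(z,t)|_2^2 + |g(T^0(z)) - f(s_\chi(z,t))|_2^2 \right) \mu(z)\,\dd z. \]
Now the $t$-dependence sits only in $s_\chi(z,t)$ and in $f(s_\chi(z,t))$; since $T^0$ is fixed, $g$ enters only through the frozen function $g\circ T^0\in L^2(\mu)$. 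This is precisely why the hypothesis $g\in L^2(\nu)$ suffices whereas $f\in C^1$ is required.

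With $g$ frozen I would differentiate under the integral, justified by $f\in C^1$, compactness of $\Omega$, and the smoothness of $s_\chi$, invoking the evolution equation $\partial_t s_\chi(z,t) = \chi(s_\chi(z,t))/\mu(s_\chi(z,t))$ from~\eqref{eq:Methods:sEvol}. The spatial term contributes $-2(T^0(z) - s_\chi(z,t))\cdot\partial_t s_\chi$, and, after expanding $\partial_t f_i(s_\chi(z,t)) = \nabla f_i(s_\chi(z,t))\cdot\partial_t s_\chi$, the signal term contributes $-2\sum_i (g_i(T^0(z)) - f_i(s_\chi(z,t)))\nabla f_i(s_\chi(z,t))\cdot\partial_t s_\chi$. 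Substituting the evolution equation and changing variables back via $x = s_\chi(z,t)$ (again using mass preservation and $T^0(z) = T^t(x)$), the factors $\mu(s_\chi)$ cancel, giving
\[ \frac{\dd}{\dd t}\eps(T^t) = -\int_\Omega \left[ 2(T^t(x) - x) + 2\sum_{i=1}^m (g_i(T^t(x)) - f_i(x))\nabla f_i(x) \right]\cdot\chi(x)\,\dd x. \]

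Finally I would reconcile this with the stated $Q$. The bracketed field differs from $Q(t,x) = 2T^t(x) + 2\sum_i g_i(T^t(x))\nabla f_i(x)$ by exactly $-2x - 2\sum_i f_i(x)\nabla f_i(x) = -\nabla(|x|_2^2 + |f(x)|_2^2)$, a pure gradient. Since $\chi$ satisfies $\Div(\chi) = 0$ in $\Omega$ and $\chi\cdot\n = 0$ on $\partial\Omega$, integration by parts gives $\int_\Omega \nabla\phi\cdot\chi\,\dd x = \int_{\partial\Omega}\phi\,(\chi\cdot\n)\,\dd S - \int_\Omega \phi\,\Div(\chi)\,\dd x = 0$ for every $\phi\in C^1(\Omega)$. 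Hence the gradient correction integrates to zero against $\chi$, the bracketed field may be replaced by $Q$, and the claimed identity follows.

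The main obstacle is the non-differentiability of $g$; once it is circumvented by the change of variables the remainder is essentially bookkeeping, the only genuine subtleties being the justification of differentiation under the integral and the observation that the exact variational gradient is determined only modulo gradients when tested against divergence-free, boundary-tangential fields, which licenses the convenient simplified form of $Q$.
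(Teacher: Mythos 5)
Your proof is correct, and the core mechanism is the same as the paper's: use the mass-preservation $[s_\chi(\cdot,t)]_*\mu=\mu$ to change variables so that $g$ only ever appears as the frozen function $g\circ T^0$, differentiate under the integral via the evolution equation $\partial_t s_\chi = \chi(s_\chi)/\mu(s_\chi)$, and change variables back. Where you diverge is in how the stated form of $Q$ is recovered. The paper first expands each square as $|g(T(x))|^2+|f(x)|^2-2g(T(x))\cdot f(x)$ and observes that the first two integrals are constant in $t$ (using $T^t_*\mu=\nu$ and the invariance of $\mu$), so only the cross term survives differentiation and the derivative comes out \emph{exactly} as $-\int_\Omega Q\cdot\chi\,\dd x$, with no appeal to the structure of $\chi$. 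You instead differentiate the full square, obtain the variational gradient $2(T^t(x)-x)+2\sum_i\bigl(g_i(T^t(x))-f_i(x)\bigr)\nabla f_i(x)$, and then note that its discrepancy with $Q$ is the pure gradient $\nabla\bigl(|x|_2^2+|f(x)|_2^2\bigr)$, which is annihilated when tested against the divergence-free, boundary-tangential $\chi$. Both arguments are valid and give the same value of $\tfrac{\dd}{\dd t}\eps(T^t_\chi)$; the paper's buys an identity that holds for arbitrary vector fields (the constraints on $\chi$ are only needed later, in the steepest-descent step), while yours makes explicit the useful fact that the descent direction is determined only modulo gradients, which foreshadows the Helmholtz decomposition used immediately afterwards.
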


\begin{proof}
We define
\[ \tilde{\eps}(T;f,g) = \int_\Omega |g(T(x)) - f(x)|_2^2 \, \dd \mu(x) \]
which we can also write as
\[ \tilde{\eps}(T;f,g) = \int_\Omega |g(y)|_2^2 \, \dd \nu(y) + \int_\Omega |f(x)|_2^2 \, \dd \mu(x) - 2\int_\Omega g(T(x)) \cdot f(x) \, \dd \mu(x). \]
By a change of variables $y = s_\chi^{-1}(x,t)$, and since $[s_\chi(\cdot,t)]_*\mu = \mu$ we have
\begin{align*}
\tilde{\eps}(T_\chi^t;f,g) & =  \int_\Omega |g(y)|_2^2 \, \dd \nu(y) + \int_\Omega |f(x)|_2^2 \, \dd \mu(x) - 2 \int_\Omega g(T^0(s_\chi^{-1}(x,t))) \cdot f(x) \, \dd \mu(x) \\
 & =  \int_\Omega |g(y)|_2^2 \, \dd \nu(y) + \int_\Omega |f(x)|_2^2 \, \dd \mu(x) - 2 \int_\Omega g(T^0(y)) \cdot f(s_\chi(y,t)) \, \dd \mu(y).
\end{align*}
Differentiating the above we obtain,
\begin{align*}
\frac{\dd}{\dd t} \tilde{\eps}(T_\chi^t;f,g) & = -2 \sum_{i=1}^m \int_\Omega g_i(T^0(y)) \nabla f_i(s_\chi(y,t)) \cdot \frac{\partial s_\chi}{\partial t}(y,t) \, \dd \mu(y) \\
 & = -2 \sum_{i=1}^m \int_\Omega g_i(T_\chi^t(x)) \nabla f_i(x) \cdot \chi(x) \, \dd x.
\end{align*}
We note that $\eps(T_\chi^t) = \tilde{\eps}(T_\chi^t;\Id,\Id) + \tilde{\eps}(T^t_\chi;f,g)$ hence $\frac{\dd}{\dd t} \eps(T^t_\chi) = - \int_\Omega Q(x,t) \cdot \chi(x)  \, \dd x$. 
\end{proof}

When $d=2$ by the Helmholtz decomposition (in 2D) we can find, for each $t>0$, two scalar fields $w:\Omega\to \bbR$ and $\alpha:\Omega\to \bbR$ such that $Q(\cdot,t) = \nabla w + \nabla^\perp \alpha$ (where the $t$ dependence on $\alpha$ and $w$ is supressed) and $\alpha = 0$ on $\partial \Omega$ where $\nabla^\perp f = \l-\frac{\partial f}{\partial x_2},\frac{\partial f}{\partial x_1}\r$ for a function $f(x)=f(x_1,x_2)$. % $\Div(\psi) = 0$ on $\Omega$ and $\psi\cdot \n = 0 $ on $\partial \Omega$.
To find the direction of steepest descent we let $\psi = \nabla^\perp \alpha$ and $\chi = \nabla^\perp \beta$ and compute
\begin{align}
\frac{\dd }{\dd t} \eps(T^t_\chi) & = -\int_\Omega \l \nabla w(x) + \psi(x) \r \cdot \chi(x) \, \dd x \notag \\
 & = - \int_\Omega \l \Div(w \chi)(x) - w(x)\Div(\chi)(x) \r\, \dd x - \int_\Omega \psi(x) \cdot \chi(x) \, \dd x \notag \\
 & = - \int_{\partial\Omega} w(x) \chi(x) \cdot \n(x) \, \dd S(x) - \int_\Omega \psi(x) \cdot \chi(x) \, \dd x \notag \\
 & = - \int_\Omega \psi(x) \cdot \chi(x) \, \dd x \notag \\
 & = - \int_\Omega \nabla \alpha(x) \cdot \nabla \beta(x) \, \dd x \label{eq:Methods:depsdt}
\end{align}
where the third line follows from the divergence theorem and since $\Div(\chi) = 0$ on $\Omega$, and the fourth line follows from $\chi(x) \cdot \n(x) = 0$ on $\partial \Omega$.
It follows that the direction of steepest descent is $\alpha = \beta$.

%Finding $\chi$ is simplified when $\Omega \subset \mathbb{R}^2$.
%It is standard that in $\mathbb{R}^2$ that the divergence-free vector field $\chi$ can be written as $\nabla^{\perp}\beta$ for some scaler function $\beta$ where $\nabla^\perp f = \l-\frac{\partial f}{\partial x_2},\frac{\partial f}{\partial x_1}\r$ for a function $f(x)=f(x_1,x_2)$.
%Thus, equation~\eqref{eq:Methods:depsdt} can be written as 
%\[ \frac{\dd}{\dd t} \eps(T_\chi^t) = \int_{\Omega} \nabla^{\perp}\alpha(x) \cdot \nabla^{\perp}\beta(x) \, \dd x = \int_{\Omega} \nabla \alpha \cdot \nabla \beta \, \dd x \]
%where $\psi = \nabla^\perp \alpha$ and $\chi = \nabla^\perp \beta$.
%Thus we take $\beta = \alpha$.
To find $\alpha$, we need to observe that $\nabla \alpha = -Q^\perp -\nabla^\perp w$ where $\perp$ is rotation clockwise by $\pi/2$, i.e. $Q^\perp  = (-Q_2,Q_1)$.
Taking the divergence we have
\[ \Delta \alpha = \Div(\nabla \alpha) = \Div(- Q^\perp - \nabla^\perp w) = - \Div (Q^\perp). \]
Hence, $\alpha$ solves the Poisson equation with Dirichlet boundary conditions:
\begin{align}
\Delta \alpha & = - \Div(Q^{\perp}) & \text{in } \Omega \label{eq:Methods:alphaInt} \\
\alpha & = 0 & \text{on } \partial\Omega. \label{eq:Methods:alphaBound}
\end{align}
%This is obtain by rotating the decomposition $Q = \nabla w + \nabla^{\perp}f$, clockwise by $\pi/2$. Then $-Q^{\perp} = \nabla^{\perp}w + \nabla f$ and by taking divergences gives the result.

To summarise, the flow minimization scheme for $\TLp$, given a step size $\tau$ is as follows.
\begin{enumerate}
\item Construct $T^0$ and set $t=0$.
\item Compute $Q(\cdot,t)$ defined by~\eqref{eq:Methods:Q}.
\item Find $\alpha$ by solving~(\ref{eq:Methods:alphaInt}-\ref{eq:Methods:alphaBound}). % the Helmholtz decomposition of $Q(\cdot,t) = \nabla w + \psi$.
\item Update $T^{t+\tau} = T^t - \frac{\tau}{\mu} \nabla T^t \nabla^\perp \alpha$.
\item Set $t \mapsto t+\tau$.
\item Repeat 2-5 until convergence.
\end{enumerate}

\section{Additional Experiments}

To supplement the examples given in the main body of the paper we include three other applications.
The first two additional examples are to synthetic 1D and 2D data; the same examples were given in~\cite{Thorpe:2017}.
The final example is an application to cell morphometry which appeared in~\cite{Basu:2014} as an example of the LOT framework.

We repeat Table~\ref{table::timings} with the complete set of experiments.

\begin{table}[ht!]
	\begin{center}
		\begin{tabular}{ |c|c|c|c| } 
			\hline
			Application & $\LWp$ & $\LTLp$ & $\TLp$\\
			\hline
			\hline
			1-D Synthetic & 1.3 & 0.2 & 90.2\\ 
			\hline
			2-D Synthetic & 7.1 & 16.9 & 707.2\\  
			\hline
			Cell Morphometry & 242.1 & 512.7 & 161080\\
			\hline 
			Auslan & 12.1 & 13.0 & 91200\\
			\hline
			Breast Cancer Histopathology & 25407.0 & 2919.8 & >345600\\
			\hline
			Financial Time Series & 39.5 &  192.3 & - \\ % Wp time for FTS 1622.1   
			\hline
	    \end{tabular} 
\end{center} 
\caption{CPU times in seconds to compute each transportation method on each dataset. Computation was halted after 4 CPU days. %\mt{[Olly: Can you add OT computation times?]}
}
\label{table::timings2} 
\end{table}

%## For the paper 
%  #             LTLP    LOT(LWp)   pTLP      pOT(Wp)
%  # elapsed  192.313      39.505    NA       1622.098

\subsection{One Dimensional Synthetic Signal Processing} \label{subsec:Results:Syn1d}

We first consider a one dimensional signal processing problem, to test the ability of $\Wp$ and $\TLp$ to discriminate between different signals. Throughout, we take $p = 2$.  We consider the task of discriminating between double hump and a high-frequency perturbation of the hump function: a chirp function. A double hump function is of the form:
\begin{equation}\label{equation:hump}
f = K_1 \cdot (\mathds{1}_{[l,l + r]} + \mathds{1}_{[l + b + r, l + b + 2r]}),
\end{equation} 
where $\mathds{1}_{[\alpha, \beta]}$ denotes the indicator function on the interval $[\alpha, \beta]$ and $l \in [0, 1 - b - 2r]$. The constant $K_1$ is chosen such that $f$ integrates to unity. A chirp-hump function is given as:
\begin{equation}\label{equation:chirp}
f = K_2 \cdot \left(\sum_{j = 0}^{\frac{r}{\gamma} - 1}\mathds{1}_{[l + j\gamma, l + \frac{(2j + 1)\gamma}{2}]} + \frac{1}{4}\mathds{1}_{[l + b + r, l + b + 2r]}\right),
\end{equation}
where $\gamma$ controls the high-frequency perturbation and $K_2$ is chosen so that $f$ integrates to unity. To generate our synthetic dataset we proceed as follows, fixing $l,r$ and $b$, we generate $f_1,...,f_{30}$ from~\eqref{equation:hump}. We corrupt each signal with standard Gaussian noise to obtain $30$ noisy double hump functions. We then obtain two separate classes from the chirp-hump functions by first randomly sampling $\gamma \in \{\gamma_1, \gamma_2\}$ with equal probability. Each chirp-hump function is then corrupted with standard Gaussian noise. We then obtain functions $f_{31},...,f_{60}$ as chirp-hump functions with proportion $R_1$ having perturbation parameter $\gamma_1$ and proportion $R_2$ having perturbation parameter $\gamma_2$. All functions are defined on $[0,1]$ discretized on a uniform gird of length $N = 150$.

To apply $\LWp$ to discriminate between these signals we first need to satisfy positivity and mass constraints. Thus, each function $f$ is normalised as follows $g = \frac{f + \chi}{\int (f + \chi) }$, for a small number $\chi$. Discrete measures $\mu_1,...,\mu_{60}$ are then defined to be the probability measures with density $g_1,\dots,g_{60}$  with respect to the uniform grid on $[0,1]$.
A reference measure $\sigma$ is constructed as an empirical average of all these measures. We then use entropy regularised methods, see~\cite{Cuturi:2013, Benamou:2015} or Section~\ref{subsubsec:Methods:ComputingTLp:Ent} to compute optimal transport plans between $\sigma$ and $\mu_i$, where $i = 1,...,60$; after which an optimal transport map is computed using barycentric projection. We then embed the measures into Euclidean space as described in Section~\ref{section:LOT}. Note that for this linear embedding $E \in \mathbb{R}^{150 \times 60}$. This method requires only the computation of $60$ transport plans, rather than $59 \times 30$ if all pairwise $\Wp$ distance were computed.

The $\LTLp$ framework can be applied directly without ad-hoc pre-processing and normalisation. The base measure is taken as the uniform measure on $[0,1]$. The reference measure $\sigma$ is taken to be the base measure and the reference signal $h$ is the empirical average of all signals. Optimal $\TLp$ plans are computed again using entropy regularised methods from $(\sigma, h)$ to $(\mu_i, f_i)$ for $i = 1,...,60$. Recall that this requires the computation of the optimal transport plan from $\tilde{\sigma}$ to $\tilde{\mu_i}$ for $i = 1,...,60$. The map is then obtained from the plan via barycentric projection. A linear embedding $U$ is obtained as detailed in Section~\ref{section:LTLP}. Note this linear embedding is higher dimensional and $U\in \mathbb{R}^{300\times 60}$.

To assess the discriminating ability of $\LWp$ and $\LTLp$ we apply $K$-means clustering with $K = 3$ to the linear embedding to see if we can recover the true underlying classes. Since $K$-means attempts to minimise the within class distance and maximise between class distance, we expect a distance which is able to detect the differences between the classes to have the best performance. We take the clustering returned from the $K$-means algorithm and compare it to the true clustering using the adjusted Rand index (ARI) \cite{rand:1971, Hubert:1985}. An ARI is a score with $1$ indicating perfect agreement, $0$ indicating the method performs as well as one would expect if random assignment where made and the ARI can be negative if the method is worse than random. We repeat our method $100$ times to produce a distribution of scores.

Figure \ref{figure:1D} demonstrate the improved performance of using the $\TLp$ distance to form a linear embedding of the data. The median ARI using the $\TLp$ distance was $1$, whilst the median for using the $\Wp$ distance was $0.8129$. The $\LTLp$ approach outperforms the $\LWp$ approach significantly (Kolmogorov-Smirnov (KS) Test, $p$-value less than $10^{-4}$). Furthermore, Figure \ref{figure:1D} panels (b) and (c) demonstrates that the linear embedding produce much tighter and therefore more interpretable clusters in $\LTLp$ as compared to $\LWp$.

\begin{figure}[ht]
	\begin{subfigure}[t]{0.33\textwidth}
		\centering
		\includegraphics[width=0.95\textwidth]{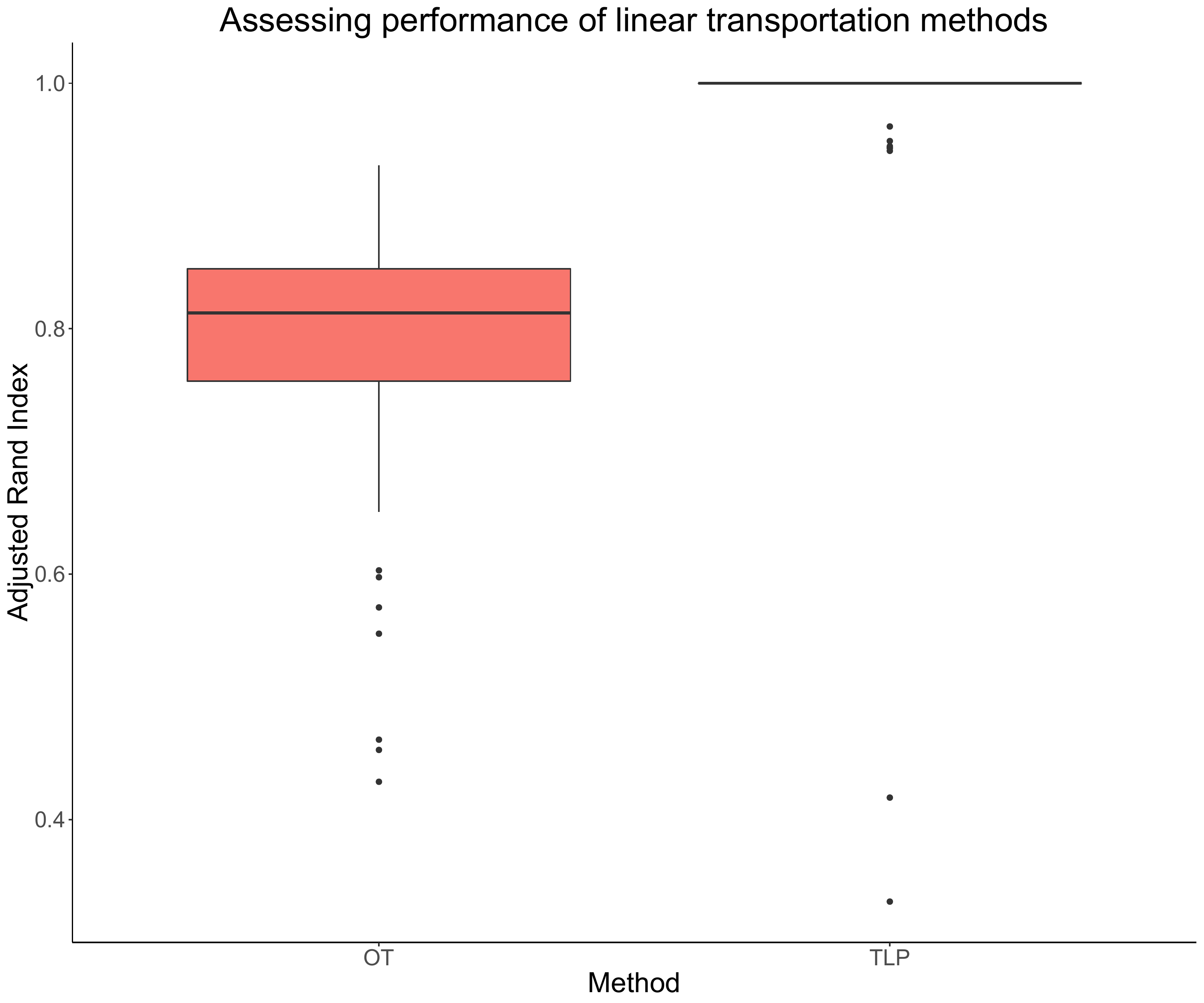}
		\caption{}
	\end{subfigure}
	\begin{subfigure}[t]{0.33\textwidth}
		\centering
		\includegraphics[width=0.95\textwidth]{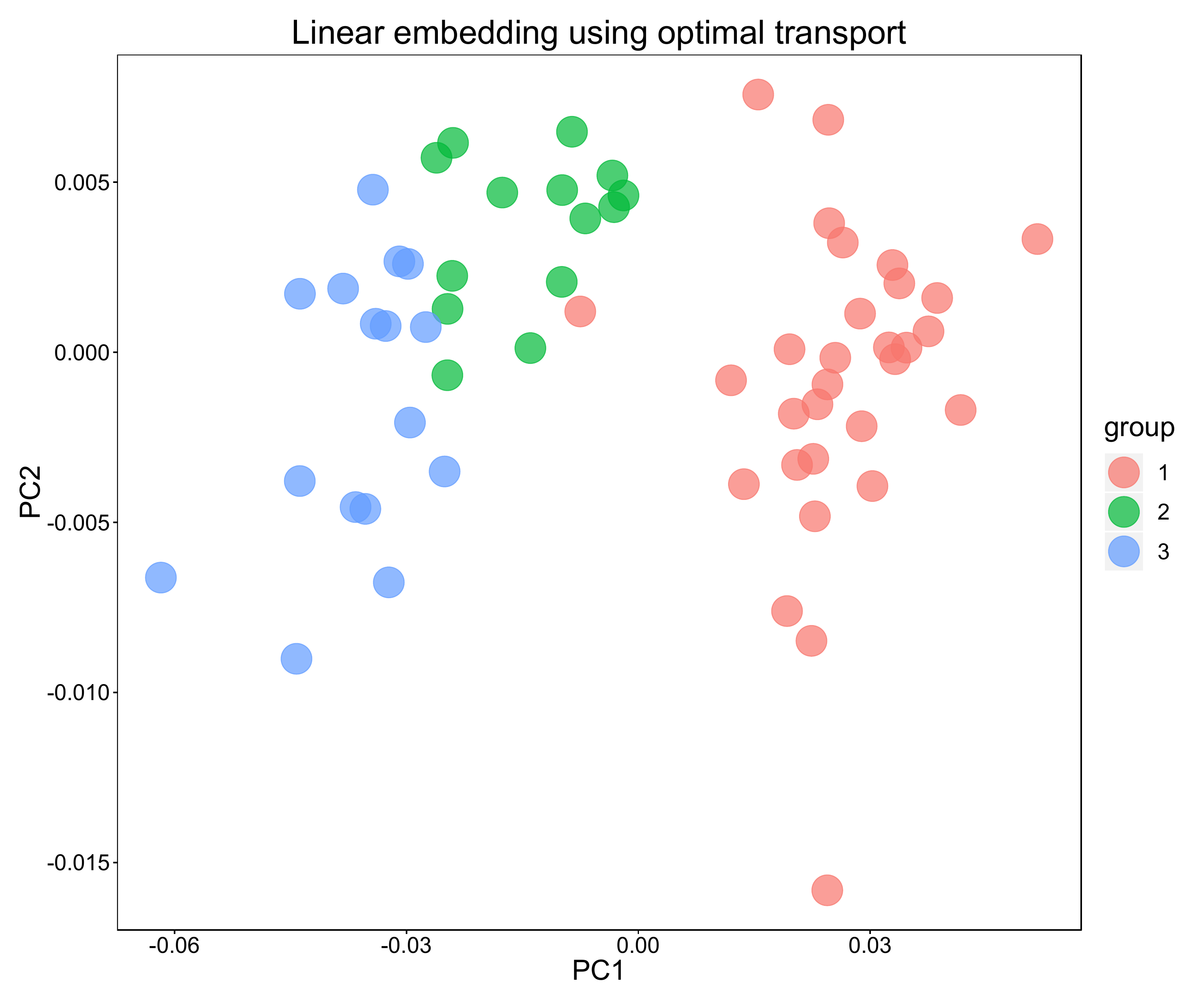}
		\caption{}
	\end{subfigure}%
	\begin{subfigure}[t]{0.33\textwidth}
		\centering
		\includegraphics[width=0.95\textwidth]{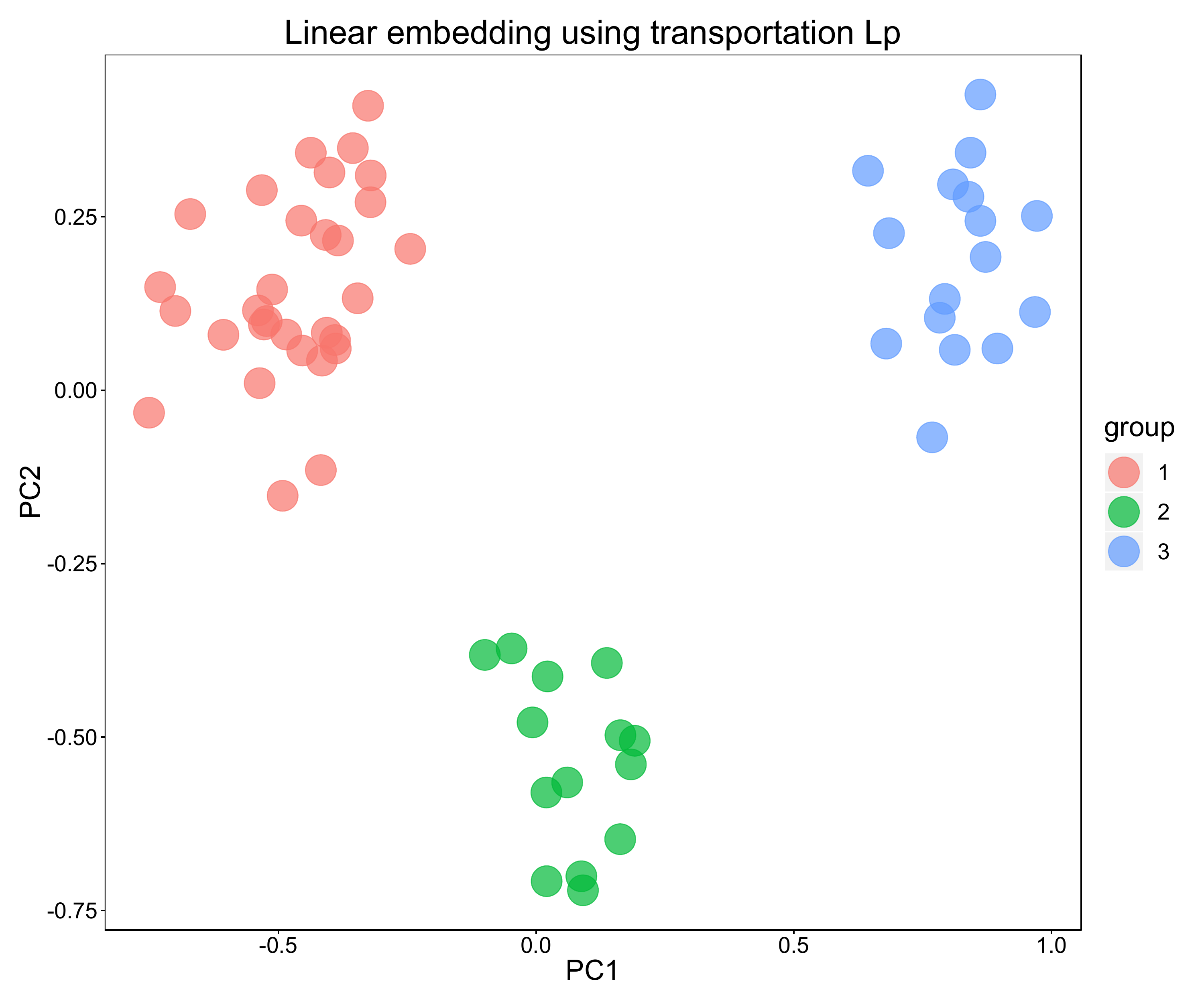}
		\caption{}
	\end{subfigure}%

\begin{subfigure}[t]{0.49\textwidth}
		\centering
		\includegraphics[width=0.646\textwidth]{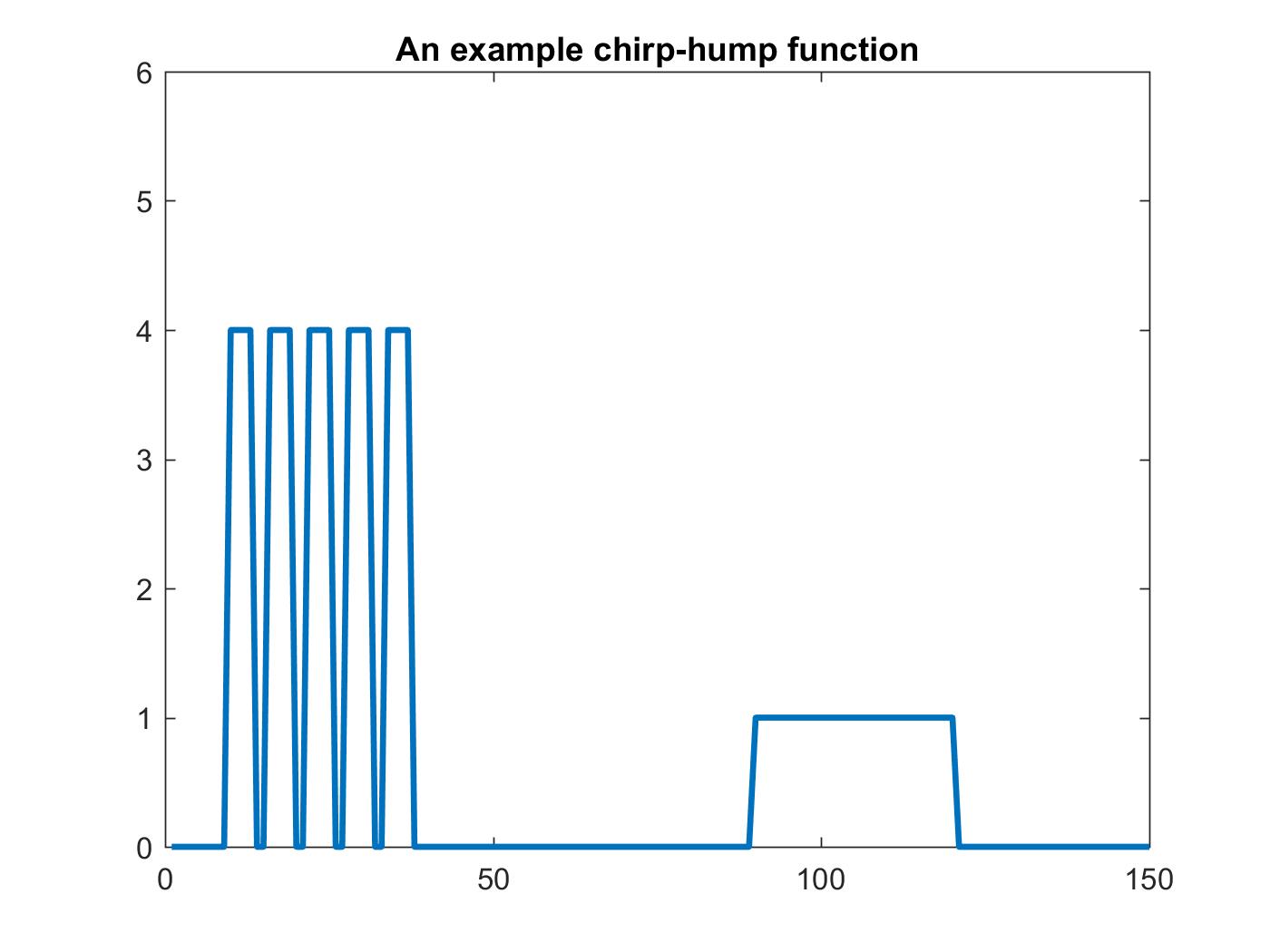}
		\caption{}
\end{subfigure}
\begin{subfigure}[t]{0.49\textwidth}
	\centering
	\includegraphics[width=0.646\textwidth]{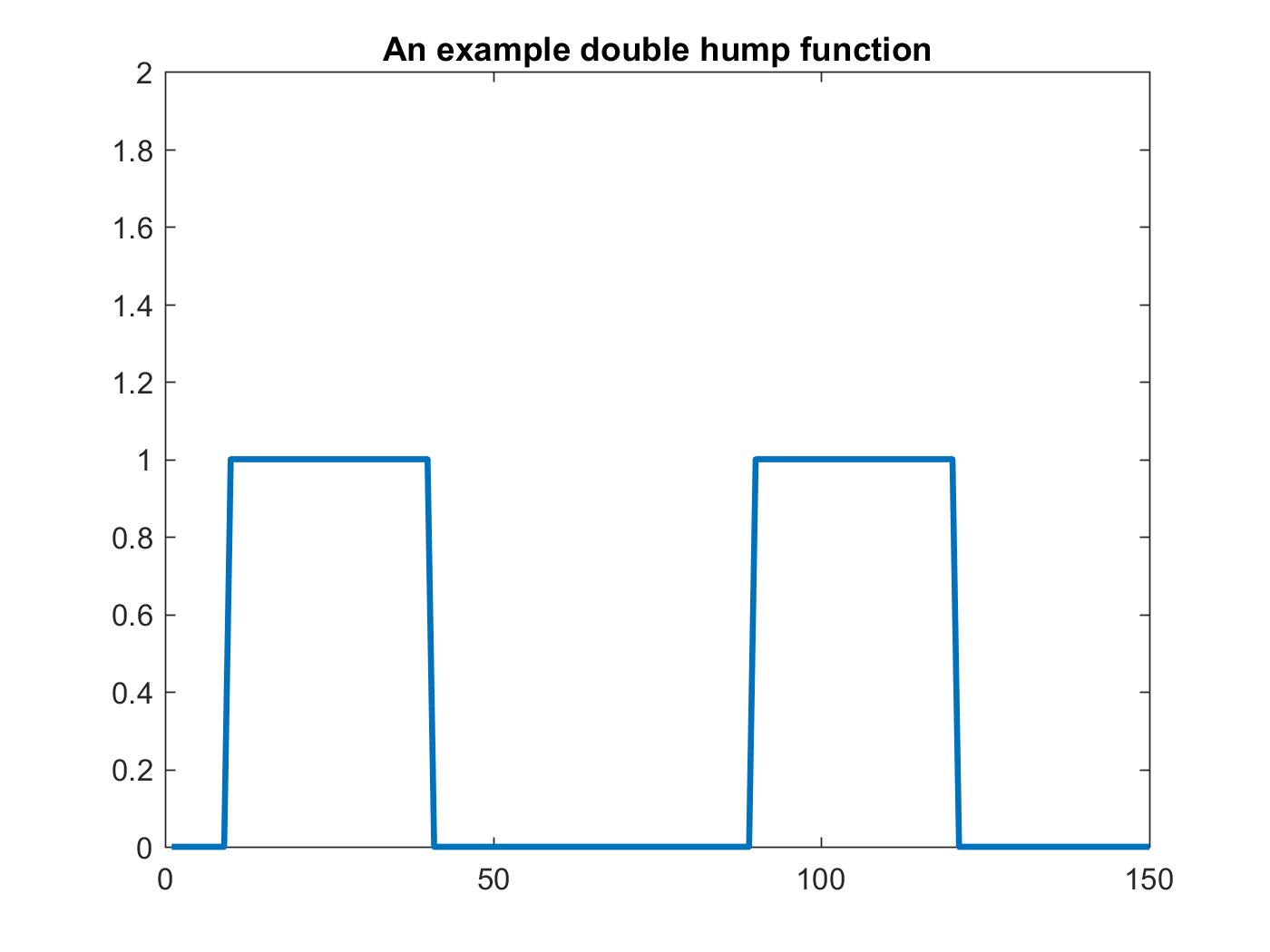}
	\caption{}
\end{subfigure}

	\caption{(a) A boxplot showing the distribution of adjusted Rand index over different runs of the algorithm. The $\LTLp$ method clearly outperforms the $\LWp$ approach. (b) An example of a $\LWp$ embedding showing classes that appear dispersed, and between class distance is smaller than within class distance. (c) An example of a $\LTLp$ embedding, between class distance is clearly greater than within class distance. This results in tighter clusters. %\mt{[Olly: can you change the titles on (b) and (c) to "Linear Embedding Using $\Wp$" and "Linear Embedding Using $\TLp$"? And can you change the x-axis in (a) to $\LWp$ and $\LTLp$?]}
	(d) An example chirp-hump function (without noise). (e) An example double hump function (without noise).}
	\label{figure:1D}
\end{figure}
%\clearpage

\subsection{Two Dimensional Synthetic Signal Processing} \label{subsec:Results:Syn2d}

We consider a more challenging synthetic two dimensional signal processing problem. As in the previous section, we take $p = 2$ as the exponent of the cost function in both settings. Consider the following class of functions, defined on a grid on $[0,1]^2$:
%\begin{equation}
\[ \mathcal{M}_1 = \left\{f:[0,1]^2 \to \bbR \,|\, f(x_i,y_j) = \alpha x_i \mathrm{e}^{-x_i^2-y_j^2} + \sigma_{ij}, \alpha \sim \mathcal{N}(0,1),\sigma_{ij} \sim \mathcal{N}(0,1) \right\}, \]
%\end{equation}
and
%\begin{equation}
\[ \mathcal{M}_2 = \left\{f:[0,1]^2 \to \bbR \,|\, f(x_i,y_j) = \alpha x_i \mathrm{e}^{-x_i^2-y_j^2} + \sigma_{ij}, \alpha \sim \mathcal{N}(-4, 1.5), \sigma_{ij} \sim \mathcal{N}(0,1) \right\}. \]
%\end{equation}
Furthermore, we introduce a perturbation to functions in $\mathcal{M}_1$, by first sampling an integer $n \in \{10,...,20\}$ each with equal probability and then setting $f = -2$ for $n$ randomly chosen coordinates on the grid, with each coordinate having equal probability of being chosen.

We note that these signals take positive and negative values and thus to apply $\Wp$ to this problem we perform normalisation. We add a constant to each (random) function and then ensure that mass still integrates to unity. This ad-hoc normalisation procedure introduces signal compression into the problem causing important features to become suppressed. The $\TLp$ distance can be applied without normalisation or pre-processing.

We generate $25$ random functions from each class. We then apply both the $\LWp$ and $\LTLp$ methods to the resulting dataset. We performed PCA and $K$-means clustering, with $K = 2$, on the linear embedding %and observed for natural clustering and separation. We then perform $K$-means clustering, with $K = 2$ 
to see if we can discriminate between the two classes. As in the previous section, we compare the resultant clustering with ground truth using the ARI. We repeat this process $100$ times to obtain a distribution of scores.

Figure \ref{figure:2D} show that the $\LTLp$ embedding outperforms the $\LWp$ embedding. The median ARI for the $\LTLp$ approach is $0.92$ and the median ARI for the $\LWp$ approach is $0.5689$. The $\LTLp$ distance produced significantly better results (KS-test, $p < 10^{-4}$) and the PCA plots in Figure~\ref{figure:2D} demonstrate that the two classes overlap when using $\Wp$ but separate when using $\TLp$.

\begin{figure}[ht]
	\begin{subfigure}[t]{0.33\textwidth}
		\centering
		\includegraphics[width=0.95\textwidth]{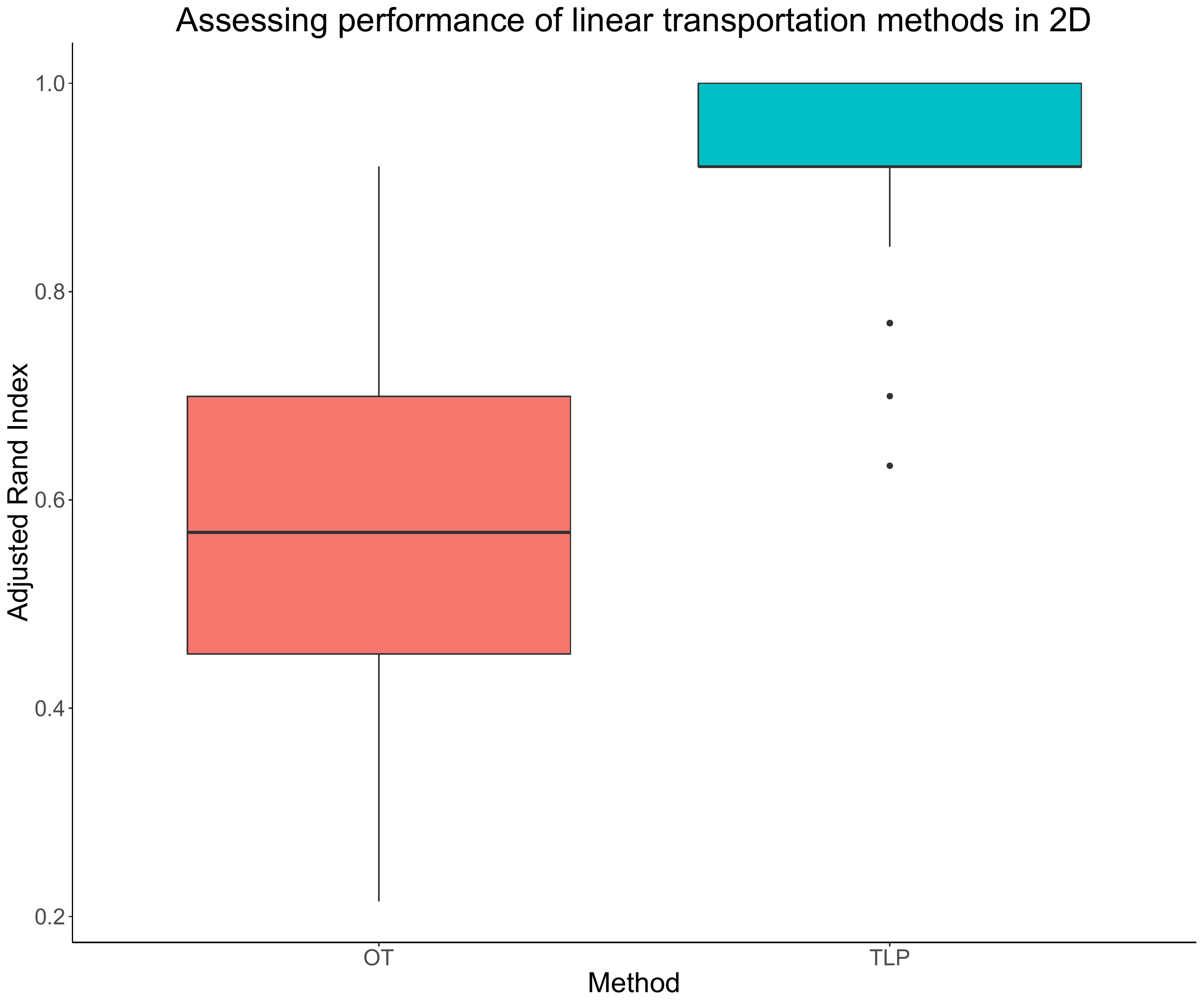}
		\caption{}
	\end{subfigure}
	\begin{subfigure}[t]{0.33\textwidth}
		\centering
		\includegraphics[width=0.95\textwidth]{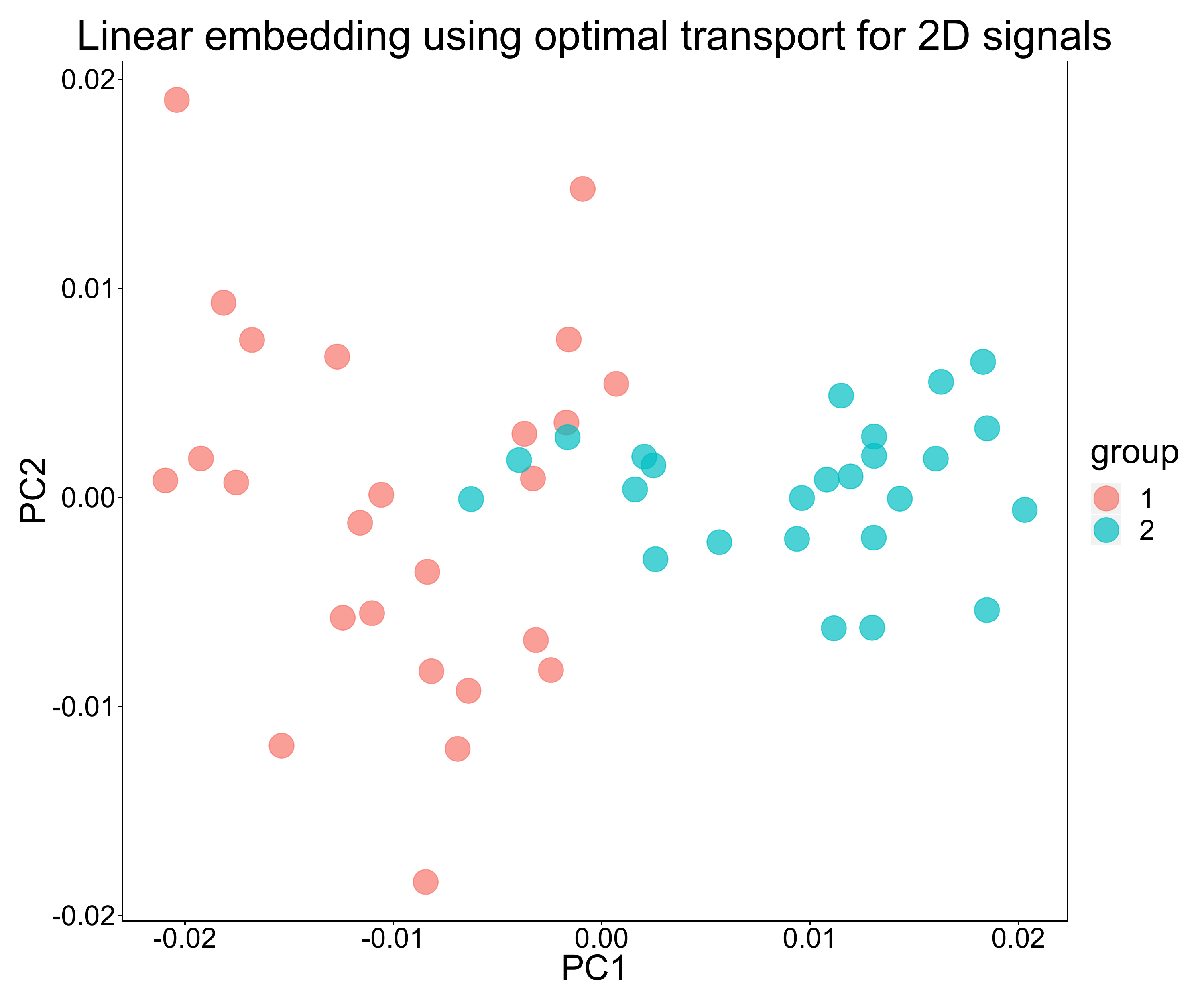}
		\caption{}
	\end{subfigure}%
	\begin{subfigure}[t]{0.33\textwidth}
		\centering
		\includegraphics[width=0.95\textwidth]{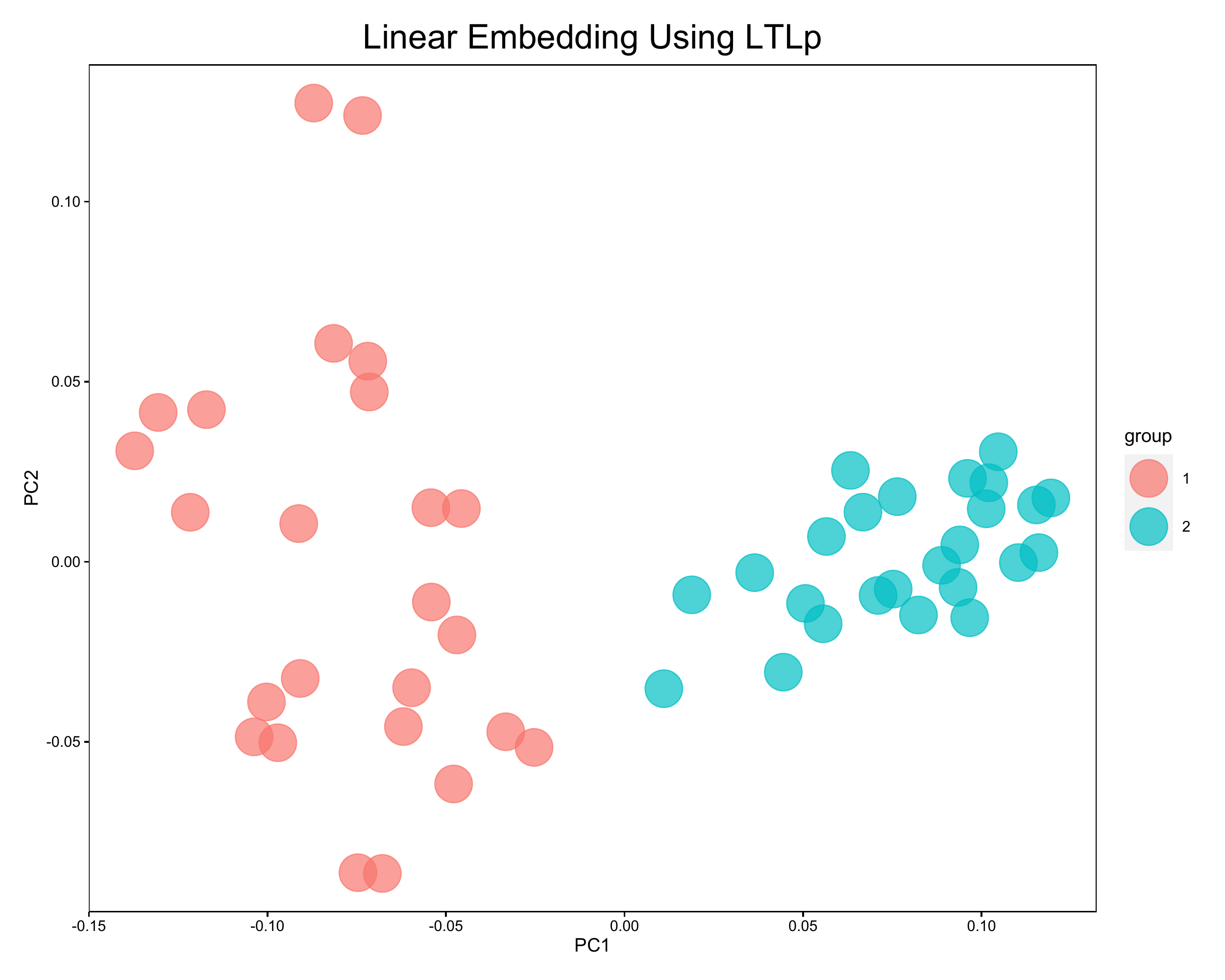}
		\caption{}
	\end{subfigure}%

	\begin{subfigure}[t]{0.49\textwidth}
	\centering
	\includegraphics[width=0.646\textwidth]{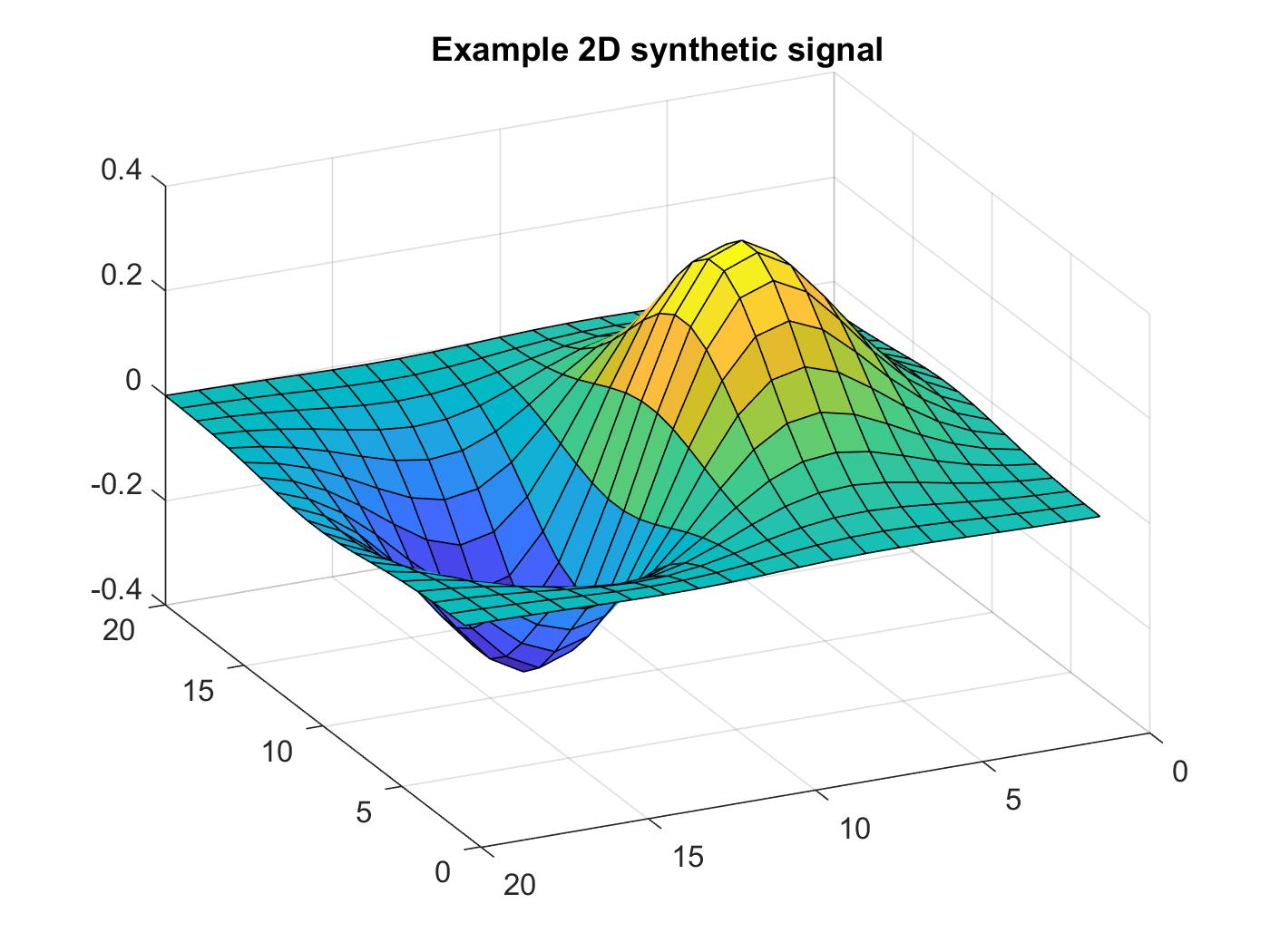}
	\caption{}
\end{subfigure}%
\begin{subfigure}[t]{0.49\textwidth}
	\centering
	\includegraphics[width=0.646\textwidth]{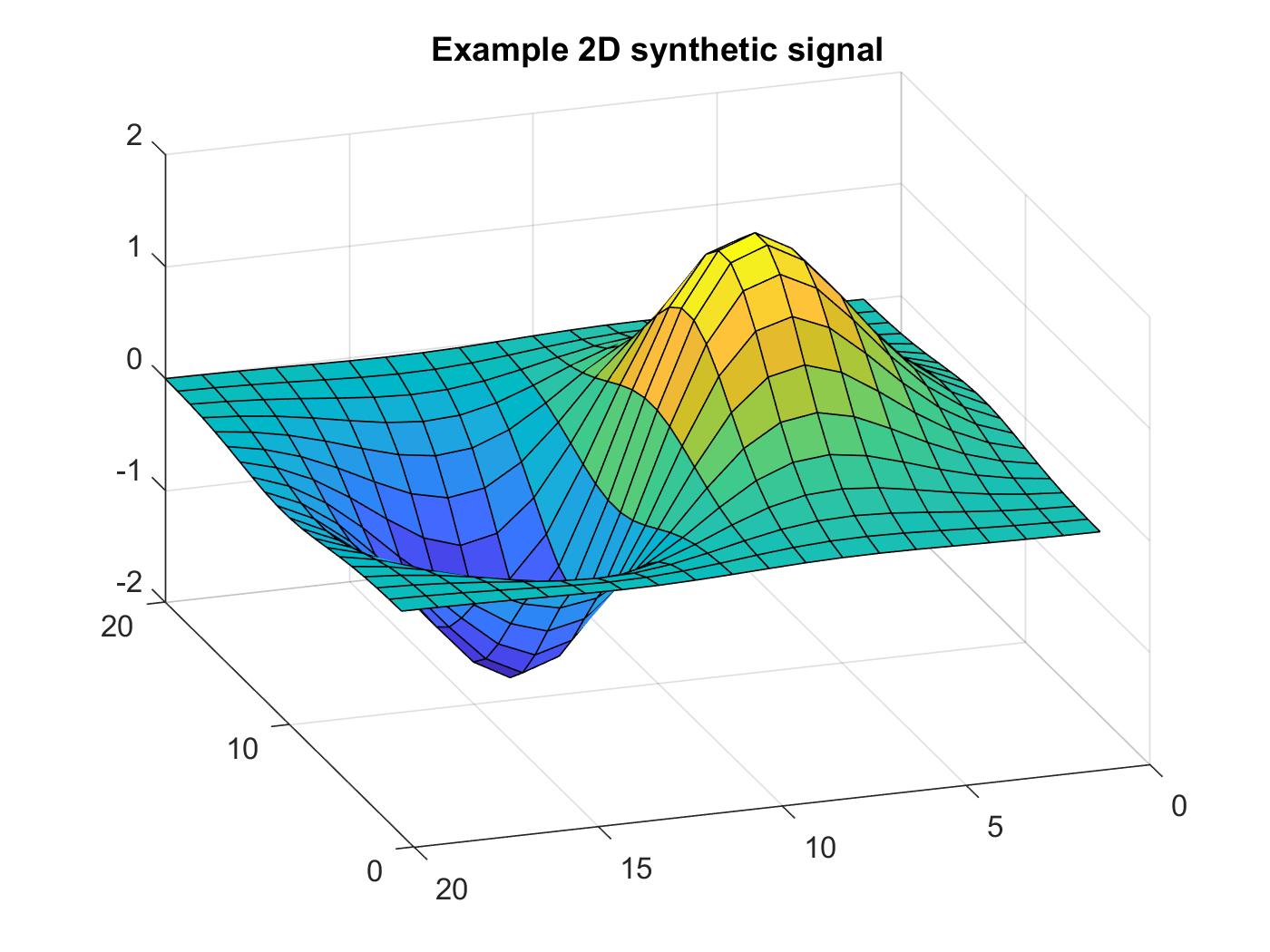}
	\caption{}
\end{subfigure}%
	
	\caption{(a) A boxplot showing the distribution of adjusted Rand index over different runs of the algorithm. The $\TLp$ based method clearly outperforms the $\Wp$ approach. (b) An example of a $\LWp$ embedding where we see overlap between the classes as a result of signal compression. (c) An example of a $\LTLp$ embedding where there is a clear separation between classes. 
	%\mt{[Olly: can you change the titles on (b) and (c) to "Linear Embedding Using $\Wp$" and "Linear Embedding Using $\TLp$"? And can you change the x-axis in (a) to $\LWp$ and $\LTLp$?]} 
	(d) An example signal (without noise) from $\mathcal{M}_1$. (e) An example signal (without noise) from $\mathcal{M}_2$. %\mt{[Olly: At first glance the figures (d) and (e) look the same, can we modify one so that the whole and hill are switched?]}.\oc{done}
	}
	\label{figure:2D}
\end{figure}
%\clearpage

\begin{figure}[ht]
	\begin{subfigure}[t]{0.33\textwidth}
		\vskip 0pt
		\centering
		\includegraphics[width=0.95\textwidth]{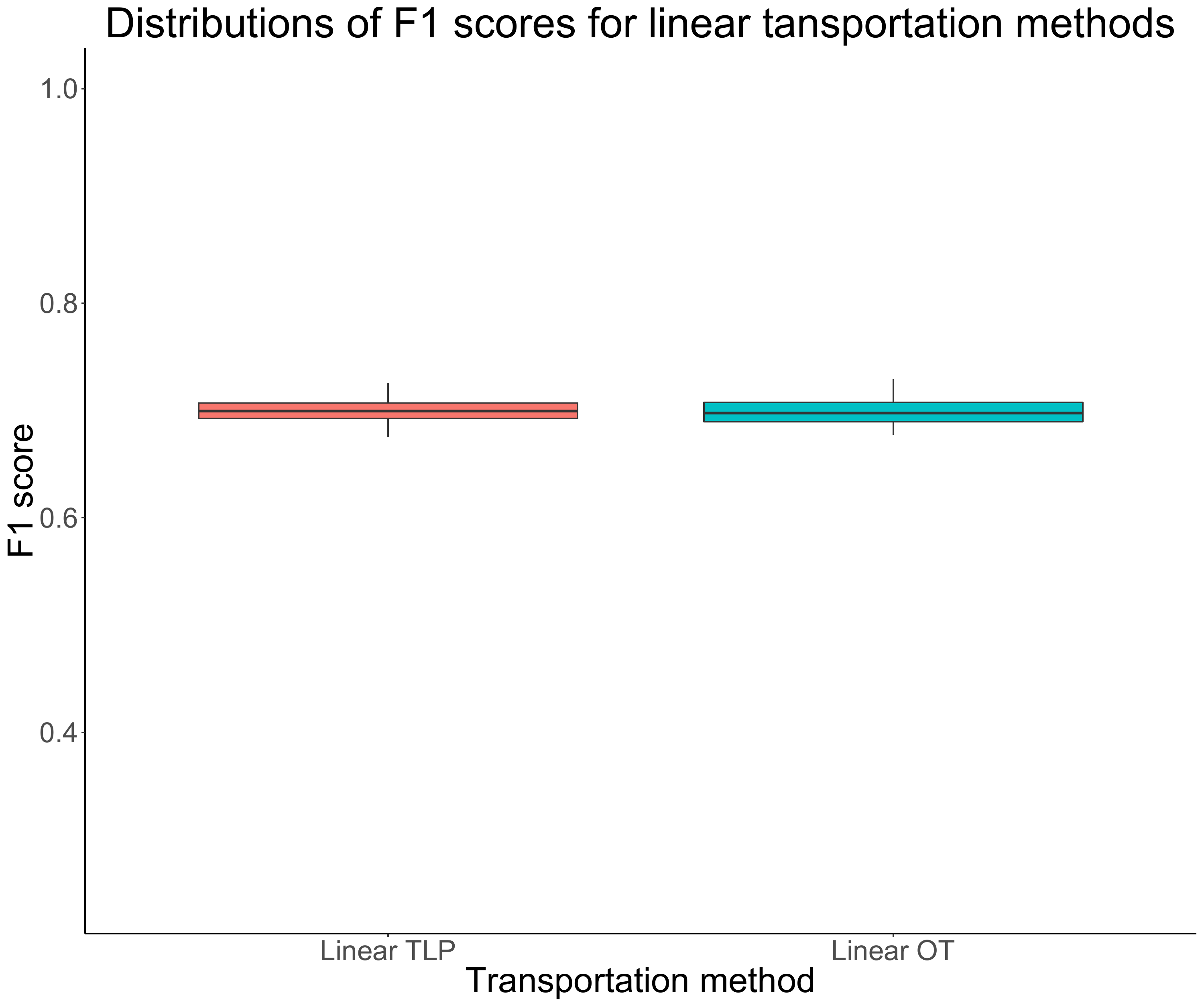}
		\caption{}
	\end{subfigure}
	\begin{subfigure}[t]{0.33\textwidth}
		\vskip 0pt
		\centering
		\includegraphics[width=0.95\textwidth]{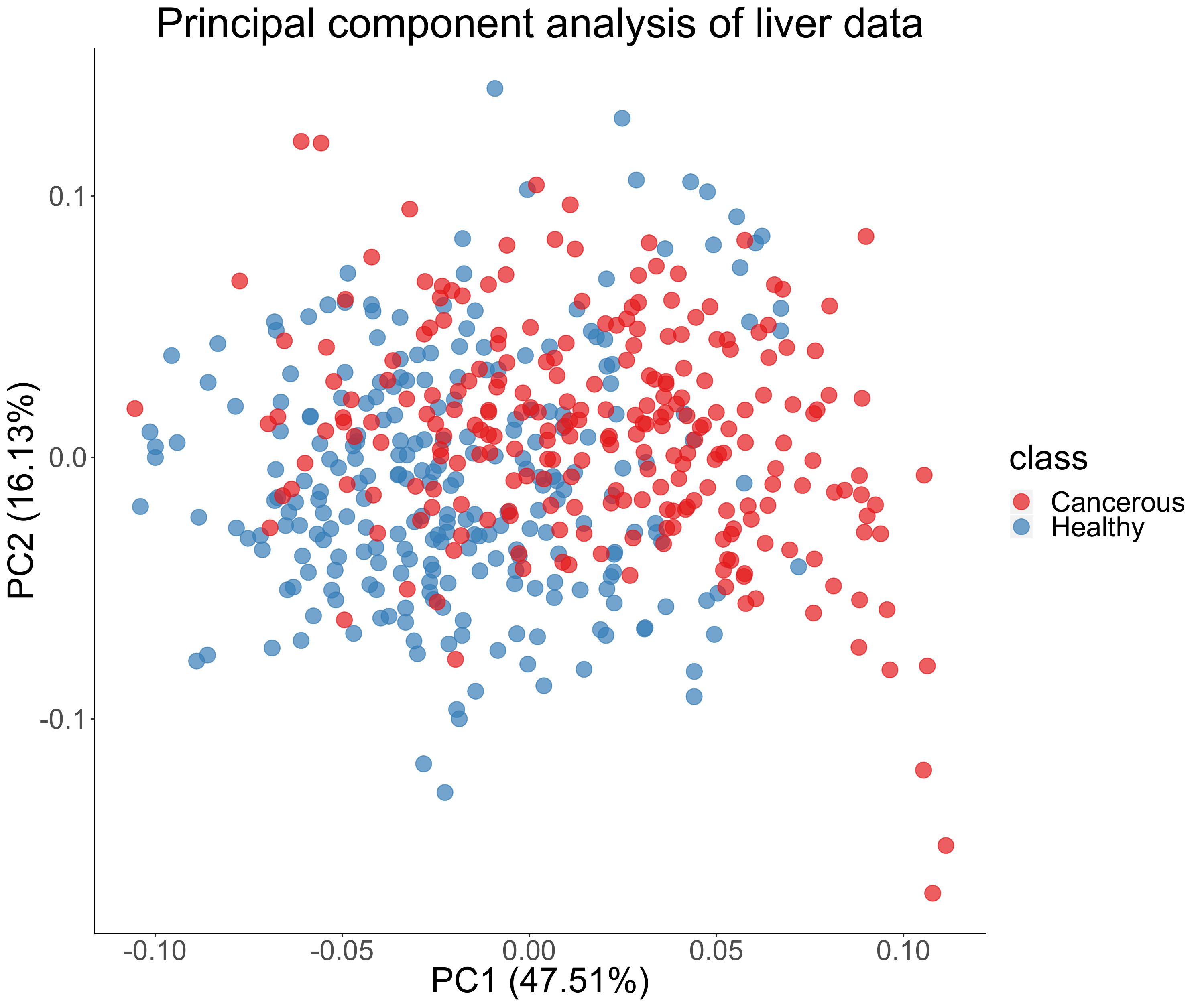}
		\caption{}
	\end{subfigure}%
	\begin{subfigure}[t]{0.33\textwidth}
	    \vskip 0pt
		\centering
		\includegraphics[width=0.95\textwidth]{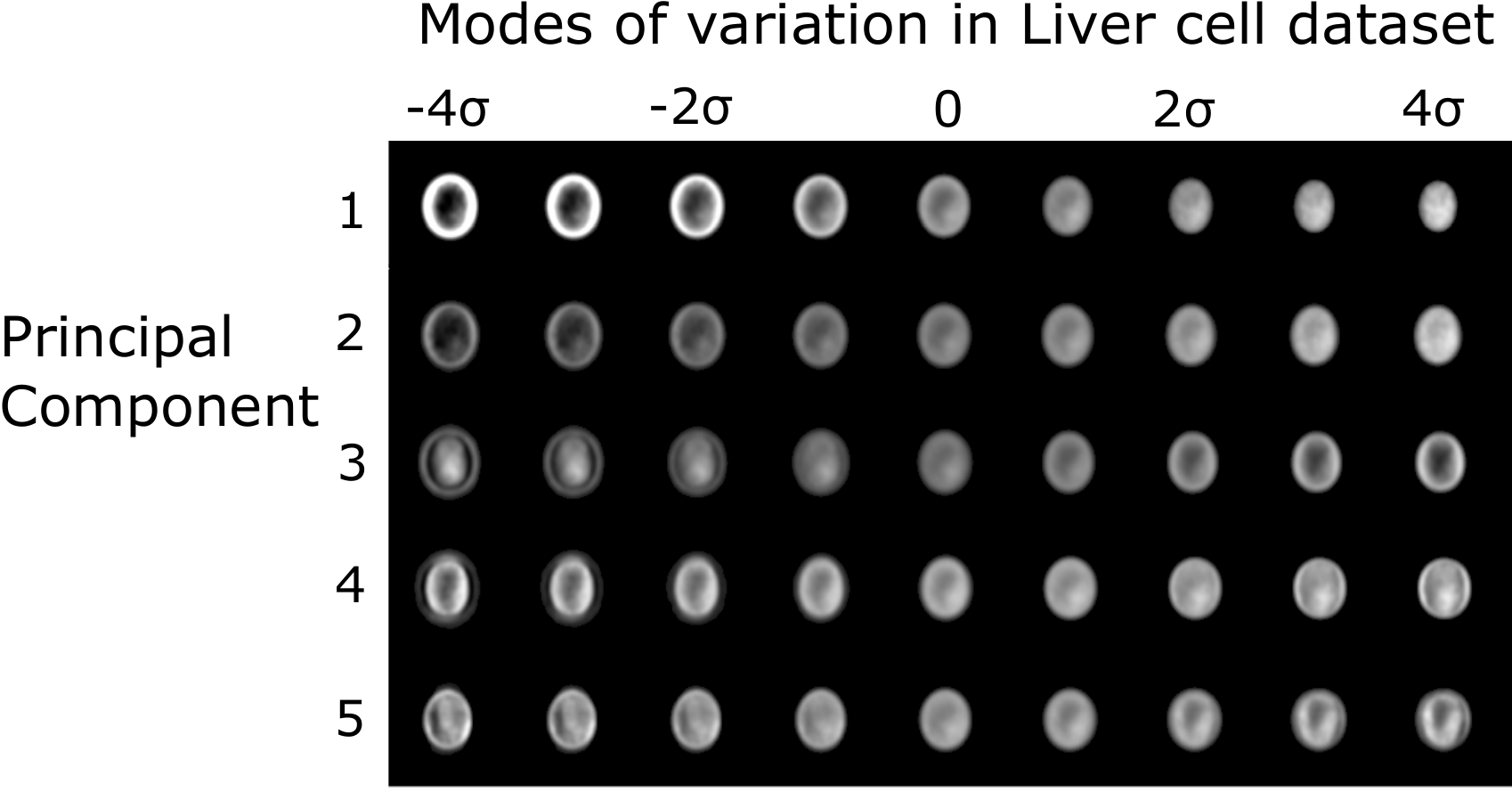}
		\caption{}
	\end{subfigure}%

	\begin{subfigure}[t]{0.49\textwidth}
	\centering
	\includegraphics[width=0.646\textwidth]{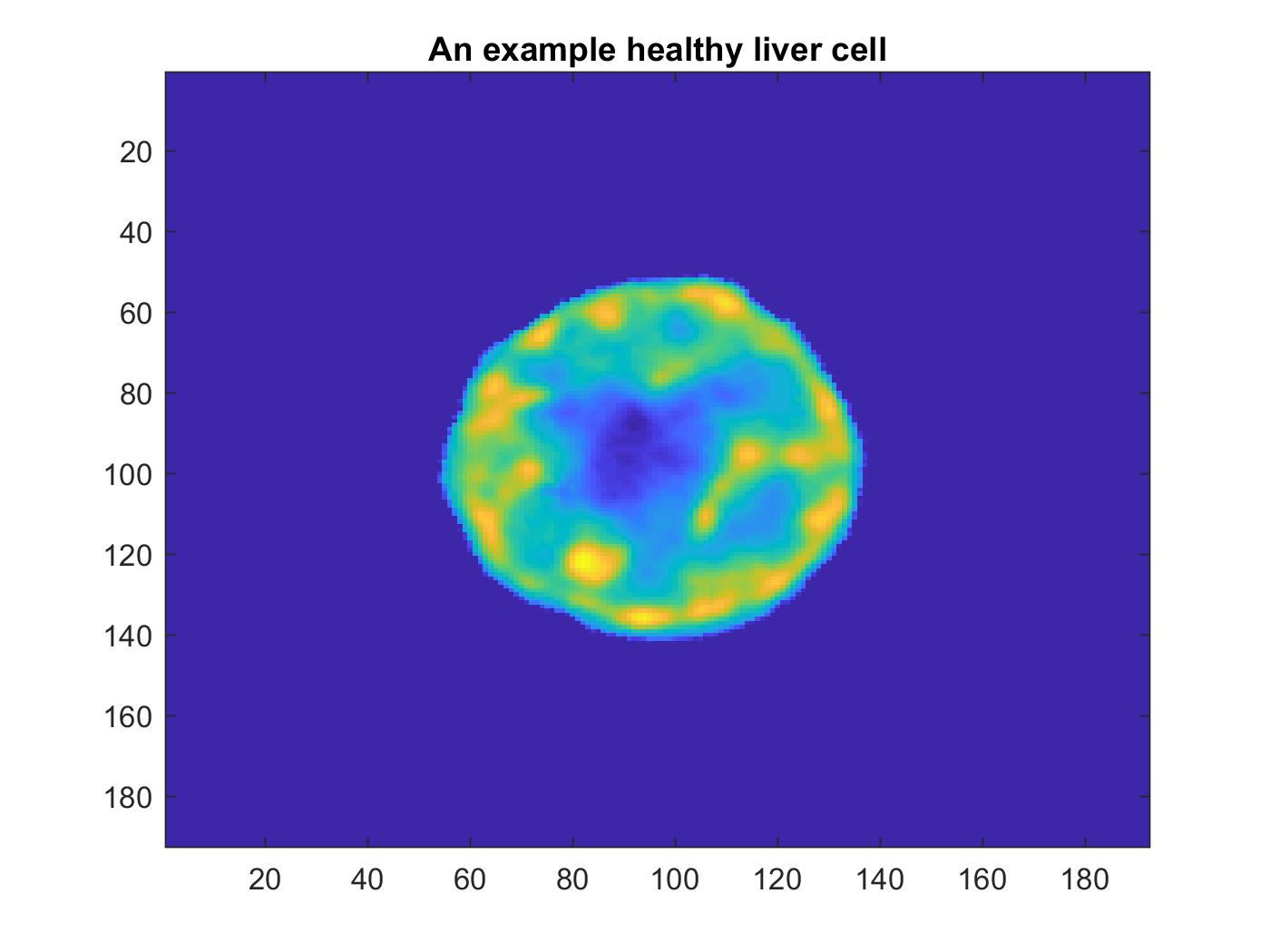}
	\caption{}
\end{subfigure}%
\begin{subfigure}[t]{0.49\textwidth}
	\centering
	\includegraphics[width=0.646\textwidth]{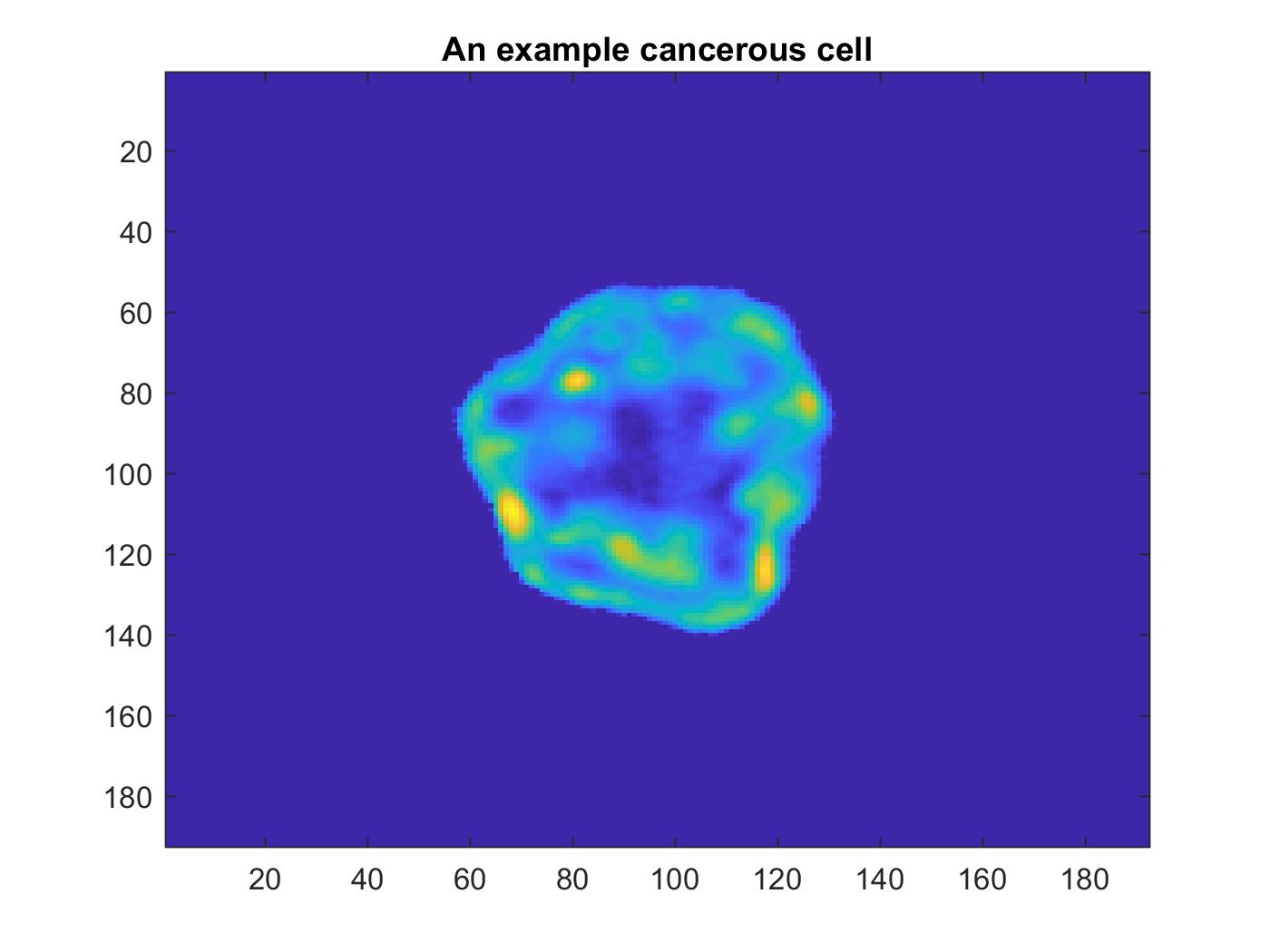}
	\caption{}
\end{subfigure}%
	
	\caption{(a) A boxplot showing the distribution of adjusted Rand index using the 1NN classifier on the $\LWp$ and $\LTLp$ embeddings. Unsurprisingly, both methods perform equally well on this dataset.  (b) A PCA plot of the $\LTLp$ embedding where we see overlap between the classes but distinct class distributions (c) The first $5$ principal modes of variation using linear interpolation along the eigenvectors in PCA space. (d) An example healthy liver cell. (e) An example cancerous liver cell.
}
	\label{figure:LiverCells}
\end{figure}
%\clearpage

\subsection{Application to Cell Morphometry} \label{subsec:Results:CM}

In this section, we analyse the liver dataset of \cite{Basu:2014} containing $250$ normal and $250$ cancerous liver cells. \cite{Basu:2014} proposed a transportation based morphometry analysis and this facilitated high accuracy classification. Furthermore, the generative nature of optimal transport allowed them to visualise the modes of variation in the dataset, allowing for superior interpretation of the data. Each cell image is defined on a $192 \times 192$ pixel grid with a single intensity channel. Thus both $\LWp$ and $\LTLp$ are applicable. 

For consistency we apply flow minimisation techniques to compute the transport maps in each setting \cite{Kolouri:2016b}. To alleviate numerical issues, as in \cite{Kolouri:2016b}, we apply a Gaussian low-pass filter with standard deviation $2$ to smooth the data. We perform mass normalisation so that optimal transport can be applied and these signals are also used for $\TLp$, so that the spatial and intensity features are on the same scale. A linear embedding is obtained from the transport maps.

Once this linear embedding is obtained we use the 1 nearest neighbour (1NN) algorithm to predict normal or cancerous from the linear embedding of the signals. We assess performance with a $5$-fold cross-validation framework; that is, an $80/20$ split between training and testing partitions. %As an assessment of performance we use the macro-F1 score (the harmonic mean of the precision and recall)~\cite{He:2008}. 
%The macro-F1 score was averaged across each of the $5$-folds and the whole process was repeated $100$ times to produce a distribution of scores.

\begin{figure}
\newcommand{\widd}{0.45\textwidth}
\newcolumntype{D}{>{\centering\arraybackslash}m{\widd}}
%\begin{table*}[!htp]\sffamily
%\hspace{2mm}
\begin{center}
%\hspace{-6mm}
\begin{tabular}{l*2{D}@{ }}
%\hspace{-6mm}
    &  Raw Returns (RR) & Market-Excess Returns (MR)  \\ 
\hline 
\rotatebox{90}{1-day}  %\hspace{-5mm}
%& \includegraphics[width=\widd]{{PLOTS/PS/PS_knn_100_MaxHist_Inf_ALL_20040102_20200323_fret1_RR_qr_1_EVO.pdf}} 
%& \includegraphics[width=\widd]{{PLOTS/PS/PS_knn_100_MaxHist_Inf_ALL_20040102_20200323_fret1_MR_qr_1_EVO.pdf}}  \\
& \includegraphics[width=\widd]{{PS_knn_100_MaxHist_Inf_ALL_20040102_20200323_fret1_RR_qr_1_EVO.pdf}} 
& \includegraphics[width=\widd]{{PS_knn_100_MaxHist_Inf_ALL_20040102_20200323_fret1_MR_qr_1_EVO.pdf}}  \\
% \hrule
%
\hline 
%\vspace{-4mm}
\rotatebox{90}{5-day}  %\hspace{-5mm}
%& \includegraphics[width=\widd]{{PLOTS/PS/PS_knn_100_MaxHist_Inf_ALL_20040102_20200323_fret5_RR_qr_1_EVO.pdf}} 
%& \includegraphics[width=\widd]{{PLOTS/PS/PS_knn_100_MaxHist_Inf_ALL_20040102_20200323_fret5_MR_qr_1_EVO.pdf}}  \\
& \includegraphics[width=\widd]{{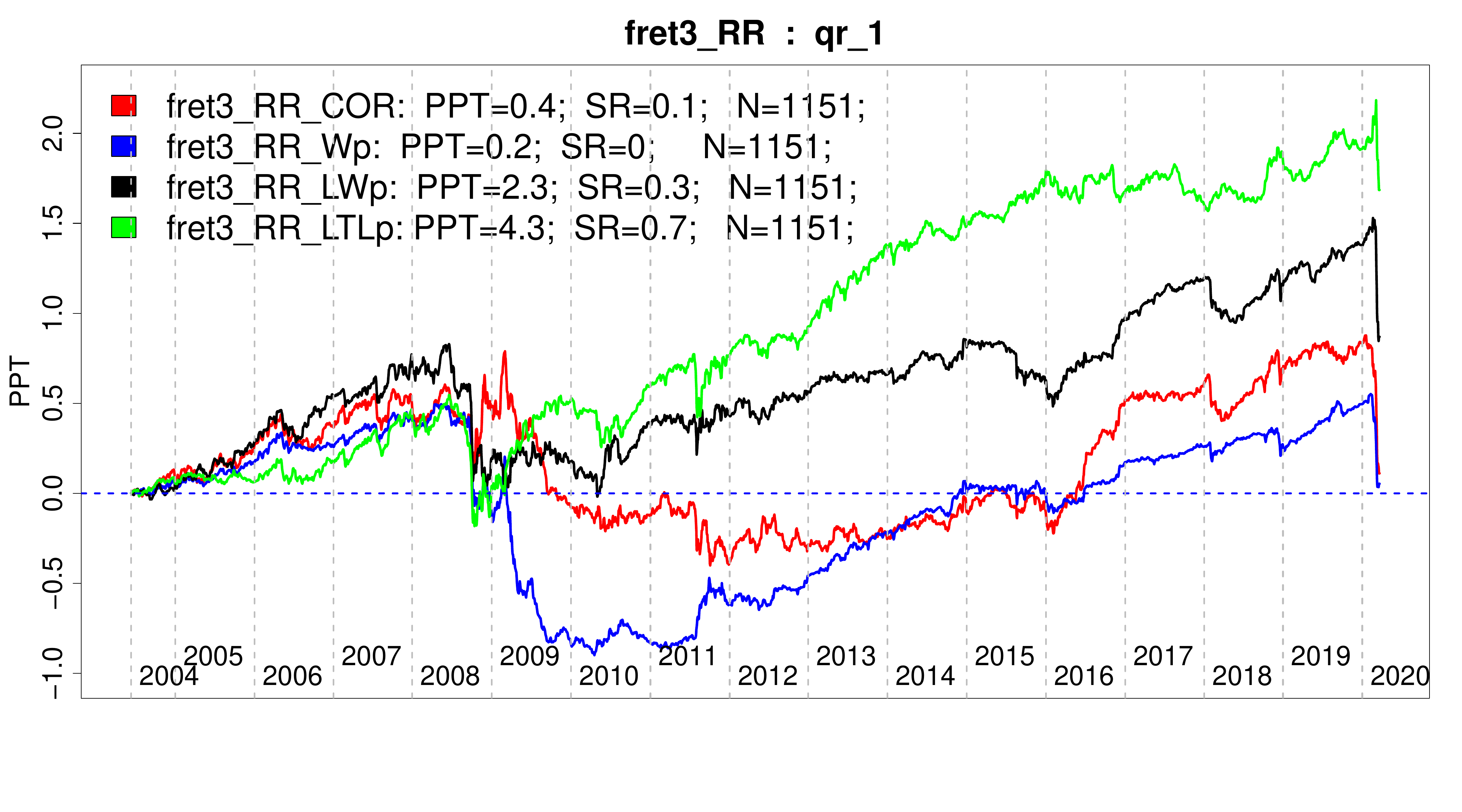}} 
& \includegraphics[width=\widd]{{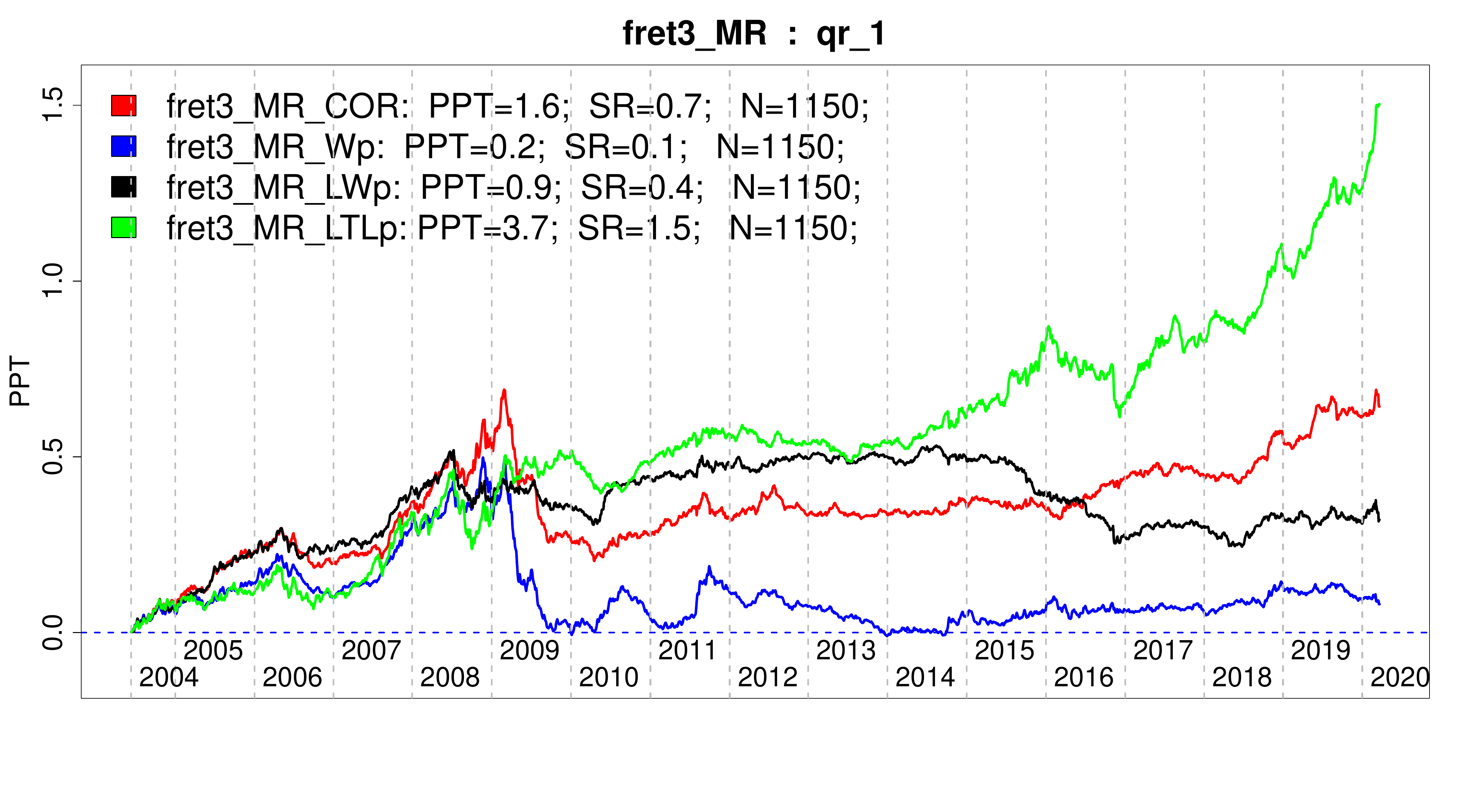}}  \\
\hline 
%\vspace{-4mm}
\rotatebox{90}{5-day}  %\hspace{-5mm}
%& \includegraphics[width=\widd]{{PLOTS/PS/PS_knn_100_MaxHist_Inf_ALL_20040102_20200323_fret5_RR_qr_1_EVO.pdf}} 
%& \includegraphics[width=\widd]{{PLOTS/PS/PS_knn_100_MaxHist_Inf_ALL_20040102_20200323_fret5_MR_qr_1_EVO.pdf}}  \\
& \includegraphics[width=\widd]{{PS_knn_100_MaxHist_Inf_ALL_20040102_20200323_fret5_RR_qr_1_EVO.pdf}} 
& \includegraphics[width=\widd]{{PS_knn_100_MaxHist_Inf_ALL_20040102_20200323_fret5_MR_qr_1_EVO.pdf}}  \\
\hline 
\rotatebox{90}{10-day}   %\hspace{-5mm}
%& \includegraphics[width=\widd]{{PLOTS/PS/PS_knn_100_MaxHist_Inf_ALL_20040102_20200323_fret10_RR_qr_1_EVO.pdf}} 
%& \includegraphics[width=\widd]{{PLOTS/PS/PS_knn_100_MaxHist_Inf_ALL_20040102_20200323_fret10_MR_qr_1_EVO.pdf}}  \\
& \includegraphics[width=\widd]{{PS_knn_100_MaxHist_Inf_ALL_20040102_20200323_fret10_RR_qr_1_EVO.pdf}} 
& \includegraphics[width=\widd]{{PS_knn_100_MaxHist_Inf_ALL_20040102_20200323_fret10_MR_qr_1_EVO.pdf}} \\ 
\hline 
\end{tabular}
\end{center}
\captionsetup{width=0.99\linewidth}
%\vspace{-1mm}
%\captionof{figure}{
\caption{Cumulative PnL of the entire portfolio $qr_{1}$ for the  1,3,5,10-day future horizons, for raw returns and market-excess returns, across different methods. The legend contains performance statistics of each forecast method: PnL Per Trade (PPT) in basis points, Sharpe Ratio (SR), while $N$ denotes the size of the portfolio (since $qr_{1}$ contains the entire portfolio, this amount to $N \approx 1150$ stocks). Figure \ref{fig:PS_knn_100_MaxHist_Inf_EvoQR5} in the main text shows the top most quintile portfolio $qr_{5}$. 
}
\label{fig:PS_knn_100_MaxHist_Inf_evoQR1}	
%\end{table*} 
\end{figure}

Figure \ref{figure:LiverCells} shows that on this particular task the $\LWp$ and $\LTLp$ frameworks have similar classification performance when differentiating between cancerous and normal cells. This is unsurprising as these images only have a single intensity channel. We also visualise, in a PCA plot, the $\LTLp$ embedding showing the variability across the dataset.
Figure \ref{figure:LiverCells} panel (c) visualises the modes of variation in image space for the first $5$ principal components of the data.

\subsection{Further Results on the Application to Financial Time Series}

We show here additional numerical results for the financial time series applications. 
Figure \ref{fig:PS_knn_100_MaxHist_Inf_evoQR1} shows the PnL curves corresponding the full portfolio of stocks, attained by each of the methods, for the  1,3,5,10-day future horizons, for both the raw returns and market-excess returns. As similarly observed earlier for the top quintile portfolio shown in Figure \ref{fig:PS_knn_100_MaxHist_Inf_EvoQR5}, $\LTLp$ outperforms all other methods at the longer horizons {3,5,10}, especially in the more realistic scenario of using market excess returns (corresponding to a hedged portfolio).
Figure \ref{fig:eigenLTLP} shows the top eigenvectors of the $\LTLp$ distance matrix, which localize on known crises, especially those during 2007-2008 and 2020.
%  Figure zzz in the main text shows the top eigenvalues, in a 

\begin{figure}[h!]
\begin{center}
% \includegraphics[width=0.38\columnwidth]{PLOTS/Eigvals_LTLP.png}
% \hspace{-5mm}
\includegraphics[width=0.62\columnwidth]{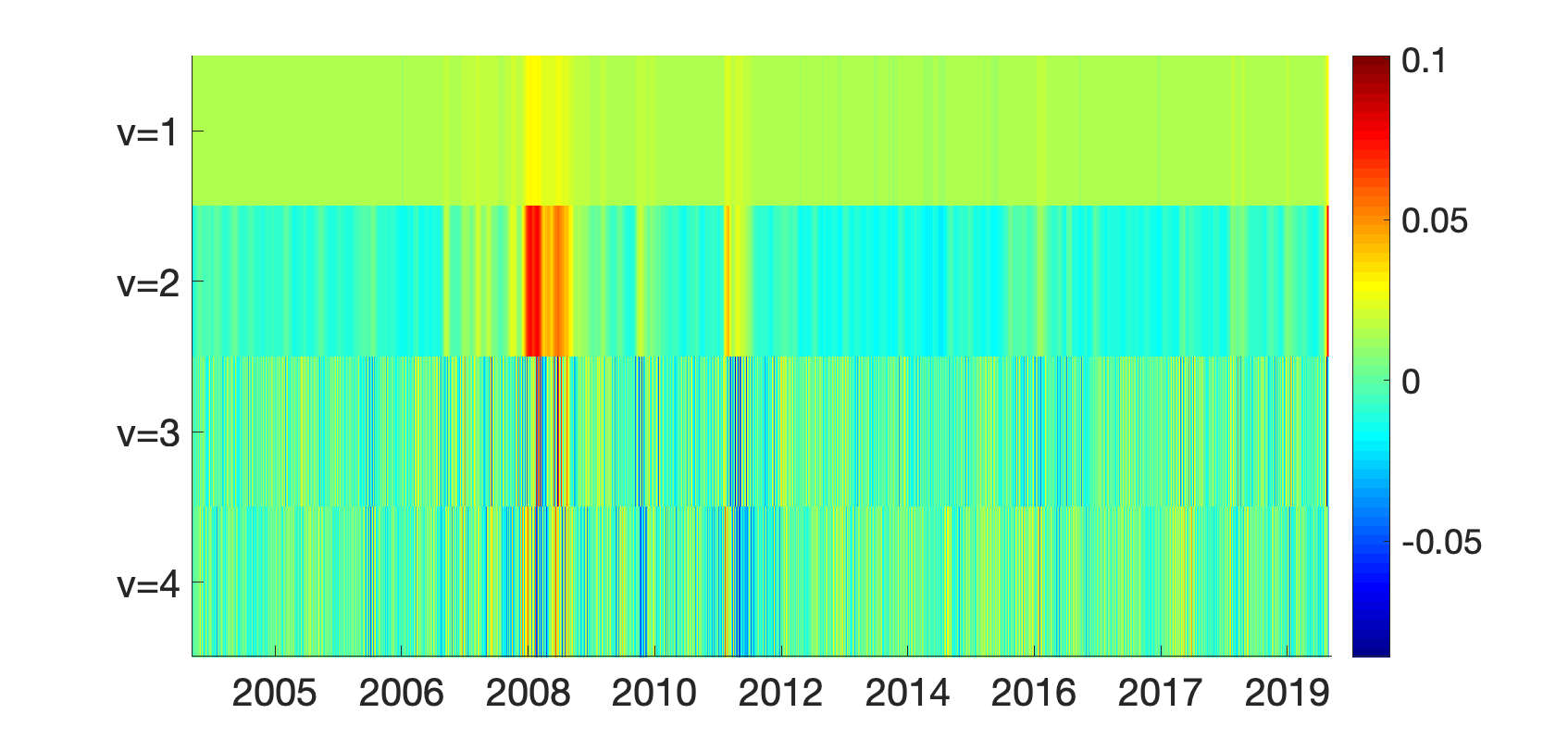}
\end{center}
\vspace{-2mm}
\caption{Barplot of the top $k=5$ eigenvectors of the $\LTLp$ distance matrix. The top eigenvector and eigenvalue typically correspond to the so-called market mode. The second eigenvector is strongly localized on the 2007-2008 financial crisis and the February-March 2020 Covid-19 crisis.}
\label{fig:eigenLTLP}
\end{figure}

\end{document}